\renewcommand{\eqref}[1]{Eq.~(\ref{#1})}
\newcommand{\secref}[1]{Section~\ref{#1}}
\newcommand{\thmref}[1]{Theorem~\ref{#1}}
\newcommand{\lemref}[1]{Lemma~\ref{#1}}
\newcommand{\corref}[1]{Corollary~\ref{#1}}
\newcommand{\propref}[1]{Proposition~\ref{#1}}
\renewcommand{\P}{\mathbb{P}}
\newcommand{\E}{\mathbb{E}}
\newcommand{\reals}{\mathbb{R}}
\newcommand{\half}{{\frac12}}
\newcommand{\floor}[1]{\lfloor #1\rfloor}
\newcommand{\Var}{\mathrm{Var}}
\newcommand{\cH}{\mathcal{H}}
\newcommand{\bN}{\mathbb{N}}
\newcommand{\hinge}[1]{\left[ #1 \right]_+}
\newcommand{\ignore}[1]{}
\DeclareMathOperator*{\argmin}{argmin}
\newcommand{\loss}{\ell}
\newcommand{\sign}{\textrm{sign}}
\newcommand{\one}{\mathbb{I}}
\newcommand{\norm}[1]{\|#1\|}
\newcommand{\dotprod}[1]{\langle #1 \rangle}
\newcommand{\Dotprod}[1]{\left\langle #1 \right\rangle}
\newenvironment{myalgo}[1]%
{
\begin{center}
\begin{boxedminipage}{0.8\linewidth}
\begin{center}
\textbf{\textsc{#1}}
\end{center}
\rm
\begin{tabbing}
....\=...\=...\=...\=...\=  \+ \kill
} %
{\end{tabbing} 
\end{boxedminipage} \end{center} 
}
\newcommand{\var}{\mathrm{Var}}
\newcommand{\rad}{\mathcal{R}}
\newcommand{\lrad}{\mathcal{R}^L}
\newcommand{\mm}{{_-}}
\newcommand{\pp}{{+}}
\newcommand{\binlab}{\{0,1\}}
\newcommand{\signlab}{\{\pm 1\}}
\newcommand{\UEG}{{\operatorname{UEG}}}
\begin{document}

\title{Learning Sparse Low-Threshold Linear Classifiers}

\author{%
\name{Sivan Sabato} \email{sabatos@cs.bgu.ac.il}\\
\addr Ben-Gurion University of the Negev\\
Beer Sheva, 8410501, Israel
\AND
\name{Shai Shalev-Shwartz} \email{shais@cs.huji.ac.il}\\
 \addr Benin School of Computer Science and Engineering \\
The Hebrew University\\
Givat Ram, Jerusalem 91904, Israel
\AND
\name{Nathan Srebro} \email{nati@ttic.edu}\\
 \addr Toyota Technological Institute at Chicago\\
6045 S. Kenwood Ave. \\
Chicago, IL 60637
\AND
\name{Daniel Hsu} \email{djhsu@cs.columbia.edu}\\
\addr Department of Computer Science\\
Columbia University\\
1214 Amsterdam Avenue, \#0401\\
New York, NY 10027
\AND
\name{Tong Zhang} \email{tzhang@stat.rutgers.edu}\\
\addr Department of Statistics \\
Rutgers University \\
Piscataway, NJ 08854}

\editor{Koby Crammer}

\maketitle

\begin{abstract}
We consider the problem of learning a non-negative linear classifier with
a $\ell_1$-norm of at most $k$, and a fixed threshold, under the hinge-loss. This
problem generalizes the problem of learning a $k$-monotone disjunction. We
prove that we can learn efficiently in this setting, at a rate which is
linear in both $k$ and the size of the threshold, and that this is the best
possible rate. We provide an efficient online learning algorithm that
achieves the optimal rate, and show that in the batch case, empirical risk
minimization achieves this rate as well. The rates we show are tighter than the uniform convergence rate, which grows with $k^2$.
\end{abstract}

\begin{keywords}
  linear classifiers, monotone disjunctions, online learning, empirical
  risk minimization, uniform convergence
\end{keywords}

\section{Introduction}
We consider the problem of learning non-negative, low-$\ell_1$-norm
linear classifiers {\em with a fixed (or bounded) threshold}.  That is,
we consider hypothesis classes over instances $x \in [0,1]^d$ of the
following form:
\begin{equation}
  \label{eq:hypclass}
  \cH_{k,\theta} = \left\{ x \mapsto \dotprod{w,x}-\theta
    \;\middle|\; w \in \reals_\pp^d , \norm{w}_1 \leq k \right\},
\end{equation}
where we associate each (real valued) linear predictor in $\cH_{k,\theta}$
with a binary classifier:\footnote{The value of the mapping when $\dotprod{w,x}=\theta$ can be arbitrary, as our results and our analysis do not depend on it.}
\begin{equation}
  \label{eq:binarypred}
  x \mapsto \sign( \dotprod{w,x}-\theta ) = 
\begin{cases}
1 &\text{if $\dotprod{w,x}>\theta$}\\
-1 &\text{if $\dotprod{w,x}<\theta$}
\end{cases}.
\end{equation}

Note that the hypothesis class is specified by both the $\ell_1$-norm
constraint $k$ {\em and} the fixed threshold $\theta$.  In fact, the
main challenge here is to understand how the complexity of learning
$\cH_{k,\theta}$ changes with $\theta$.  

The classes $\cH_{k,\theta}$ can be seen as a generalization and extension of the
class of $k$-monotone-disjunctions and $r$-of-$k$-formulas.
Considering binary instances $x \in \{0,1\}^d$, the class of
$k$-monotone-disjunctions corresponds to linear classifiers with
binary weights, $w \in \{0,1\}^d$, with $\norm{w}_1\leq k$ and a fixed
threshold of $\theta=\half$.  That is, a restriction of
$\cH_{k,\half}$ to integer weights and integer instances.  More
generally, the class of $r$-of-$k$ formulas (i.e.,~formulas which are
true if at least $r$ of a specified $k$ variables are true)
corresponds to a similar restriction, but with a threshold of
$\theta=r-\half$.

Studying $k$-disjunctions and $r$-of-$k$ formulas,
\citet{Littlestone88} presented the efficient Winnow online learning
rule, which admits an online mistake bound (in the separable case)
of $O(k \log d)$ for $k$-disjunctions and $O(r k \log d)$ for
$r$-of-$k$-formulas.  In fact, in this analysis, Littlestone considered
also the more general case of real-valued weights, corresponding to
the class $\cH_{k,\theta}$ over binary instances $x
\in \{0,1\}^d$ and for separable data, and showed that Winnow
enjoys a mistake bound of $O(\theta k \log d)$ in this case as well.  By
applying a standard online-to-batch conversion \citep[see,
e.g.,][]{ShalevShwartz12}, one can also achieve a sample complexity
upper bound of $O(\theta k \log(d)/\epsilon)$ for batch supervised
learning of this class in the separable case.

In this paper, we consider the more general case, where the instances
$x$ can also be fractional, i.e.,~where $x \in [0,1]^d$ and in the
agnostic, non-separable, case.  It should be noted that \citet{Littlestone88b} also studied a limited version of the non-separable setting.

In
order to move on to the fractional and agnostic analysis, we must
clarify the loss function we will use, and the related issue of separation with a margin. 
When the instances $x$ and weight vectors $w$ are integer-valued, we
have that $\dotprod{w,x}$ is always integer.  Therefore, if positive
and negative instances are at all separated by some predictor $w$
(i.e.,~$\sign(\dotprod{w,x}-\theta)=y$ where $y\in\{\pm 1\}$ denotes
the target label), they are necessarily separated by a margin of half.
That is, setting $\theta=r-\half$ for an integer $r$, we have
$y(\dotprod{w,x}-\theta) \geq \half$.  Moving to fractional instances
and weight vectors, we need to require such a margin explicitly.  And
if considering the agnostic case, we must account not only for
misclassified points, but also for margin violations.  As is standard
both in online learning (e.g.,~the agnostic Perceptron guarantee
of \citealt{Gentile03}) and in statistical learning using convex optimization
(e.g.,~support vector machines), we will rely on the hinge loss
at margin half,\footnote{Measuring the hinge loss at a margin of half rather
  than a margin of one is an arbitrary choice, which corresponds to a scaling
  by a factor of two, which fits better with the integer case discussed
  above.} which is equal to: $2 \cdot \hinge{\half-y h(x)}$.  The hinge loss is a
convex upper bound to the zero-one loss (that is, the misclassification rate) and
so obtaining learning guarantees for it translates to guarantees on
the misclassification error rate. 

Phrasing the problem as hinge-loss minimization over the hypothesis
class $\cH_{k,\theta}$, we can use Online Exponentiated Gradient
(EG) \citep{KivinenWa94} or Online Mirror Descent (MD)
\citep[e.g.,][]{Shalev07,srebro2011universality}, which rely only on the
$\ell_1$-bound and hold for any threshold.  In the statistical
setting, we can use Empirical Risk Minimization (ERM), in this case
minimizing the empirical hinge loss, and rely on uniform
concentration for bounded $\ell_1$ predictors
\citep{SchapireFrBaLe97,Zhang02,KakadeSrTe09}, again regardless of the
threshold.

However, these approaches yield mistake bounds or sample complexities
that scale quadratically with the $\ell_1$ norm, that is with $k^2$ rather
than with $\theta k$.  Since the relevant range of thresholds is $0\leq
\theta \leq k$, a scaling of $\theta k$ is always better than $k^2$.
When $\theta$ is large, that is,  roughly $k/2$, the Winnow bound agrees
with the EG and MD bounds.  But when we consider classification with a
small threshold (for instance, $\theta=\half$) in the case of disjunctions, the
Winnow analysis clarifies that this is a much simpler class, with a
resulting smaller mistake bound and sample complexity, scaling with $k$
rather than with $k^2$.  This distinction is lost in the EG and MD analyses, and in the ERM guarantee based on uniform convergence arguments. For small thresholds, where $\theta=O(1)$, the difference between these analyses and the Winnow guarantee is a factor of $k$.

Our starting point and our main motivation for this paper is to understand
this gap between the EG, MD and uniform concentration analyses and the
Winnow analysis.  Is this gap an artifact of the integer domain or the
separability assumption?  Or can we obtain guarantees that scale as
$\theta k$ rather then $k^2$ also in the non-integer non-separable
case?  In the statistical setting, must we use an online algorithm
(such as Winnow) and an online-to-batch conversion in order to ensure
a sample complexity that scales with $\theta k$, or can we obtain the
same sample complexity also with ERM? This is an important question, since the ERM algorithm is considered the canonical batch learning algorithm, 
and understanding its scope and limitations is of theoretical and practical interest. 
A related question is whether it is
possible to establish uniform convergence guarantees with a
dependence on $\theta k$ rather then $k^2$, or do the learning
guarantees here arise from a more delicate argument.

If an ERM algorithm obtains similar bounds to the ones of the online algorithm with online-to-batch convergence, then any algorithm that can minimize the risk on the sample can be used for learning in this setting. Moreover, this advances our theoretical understanding of the limitations and scope of the canonical ERM algorithm. 

The gap between the Winnow analysis and the more general $\ell_1$-norm-based
analyses is particularly interesting since we know that, in a sense,
online mirror descent always provides the best possible rates in
the online setting \citep{srebro2011universality}. It is thus desirable to understand whether mirror descent is required here to achieve the best rates, or can it be replaced by a simple regularized loss minimization.

Answering the above questions, our main contributions are:
\begin{itemize}
\item We provide a variant of online Exponentiated Gradient, for which we   establish a regret bound of $O(\sqrt{\theta k \log(d) T})$ for
  $\cH_{k,\theta}$, improving on the $O(\sqrt{k^2 \log(d) T})$ regret
  guarantee ensured by the standard EG analysis.  We do so using a
  more refined analysis based on local norms.  Using a standard online-to-batch conversion,
  this yields a sample complexity of $O(\theta k \log(d)/\epsilon^2)$
  in the statistical setting. This result is given in \corref{cor:onlinefull}, \secref{sec:online}.
\item 
In the statistical agnostic PAC setting, we show that the rate
  of uniform convergence of the empirical hinge loss of predictors in
  $\cH_{k,\theta}$ is indeed $\Omega(\sqrt{k^2/m})$ where $m$ is the sample size, corresponding to a
  sample complexity of $\Omega(k^2/\epsilon^2)$, even when $\theta$ is
  small. We show this in \thmref{thm:nouniform} in \secref{sec:lowerbounds}.  Nevertheless, we establish a
  learning guarantee for empirical risk minimization which matches the
  online-to-batch guarantee above (up to logarithmic factors), and
  ensures a sample complexity of $\tilde{O}(\theta k
  \log(d)/\epsilon^2)$ also when using ERM.  This is obtained by a
  more delicate local analysis, focusing on predictors which might be
  chosen as empirical risk minimizers, rather than a uniform analysis
  over the entire class $\cH_{k,\theta}$. The result is given in \thmref{thm:upperbound}, \secref{sec:ERM}.
 
\item We also establish a matching lower
  bound (up to logarithmic factors) of $\Omega(\theta k/\epsilon^2)$ on the required
  sample complexity for learning $\cH_{k,\theta}$ in the statistical
  setting.  This shows that our ERM analysis is tight (up to logarithmic
  factors), and that, furthermore, the regret guarantee we
  obtain in the online setting is likewise tight up to logarithmic
  factors. This lower bound is provided in \thmref{thm:lowerbound}, \secref{sec:lowerbounds}.
\end{itemize}

\subsection{Related Prior Work}

We discussed Littlestone's work on Winnow at length above.  In our
notation, \citet{Littlestone88} established a mistake bound
(that is, a regret guarantee in the separable case, where there exists a predictor
with zero hinge loss) of $O(k \theta \log(d))$ for $\cH_{k,\theta}$, when
the instances are integer $x \in \{0,1\}^d$.  Littlestone also
established a lower bound of $k \log (d/k)$ on the VC-dimension of
$k$-monotone-disjunctions, corresponding to the case $\theta=\half$,
thus implying a $\Omega(k \log(d/k)/\epsilon^2)$ lower bound on
learning $\cH_{k,\half}$.  However, the question of obtaining a lower bound for
other values of the threshold $\theta$ was left open by Littlestone.

In the agnostic case, \citet{AuerWa98} studied the discrete problem of
$k$-monotone disjunctions, corresponding to $\cH_{k,\half}$ with integer
instances $x \in \{0,1\}^d$ and integer weights $w \in \{0,1\}^d$,
under the \emph{attribute loss}, defined as the number of variables in
the assignment that need to be flipped in order to make the predicted
label correct.  They provide an online algorithm with an expected
mistake bound of $A^* + 2 \sqrt{A^* k \ln (d/k)} + O(k \ln (d/k))$,
where $A^*$ is the best possible attribute loss for the given online
sequence.  An online-to-batch conversion thus achieves here a zero-one
loss which converges to the optimal attribute loss on this problem at
the rate of $O(k\ln(d/k)/\epsilon^2)$.  Since the attribute loss is
upper bounded by the hinge loss, a similar result, in which $A^*$ is replaced with the optimal hinge-loss for the given sequence, also holds for the same algorithm.
This establishes an agnostic guarantee of the desired form, for a
threshold of $\theta=\half$, and when both the instances and weight
vectors are integers.

\section{Notations and Definitions} \label{sec:defs}

For a real number $q$, we denote its positive part by $\hinge{q} :=
\max\{0,q\}$.
We denote universal positive constants by $C$.
The value of $C$ may be different between statements or even between lines
of the same expression.
We denote by $\reals_\pp^d$ the non-negative orthant in $\reals^d$.
The all-zero vector in $\reals^d$ is denoted by $\boldsymbol{0}$. 
For an integer $n$, we denote $[n] = \{1,\ldots,n\}$.
For a vector $x \in \reals^d$, and $i \in [d]$, $x[i]$ denotes the $i$'th coordinate of $x$.

We will slightly overload notation and use $\cH_{k,\theta}$ to denote both the set of linear predictors 
$x \mapsto \dotprod{w,x}-\theta$ and the set of vectors $w\in \reals_\pp^d$ such that $\norm{w}_1 \leq k$.
We will use $w$ to denote both the vector and the linear predictor associated with it.

For convenience we will work with {\em half} the hinge loss
at margin half, and denote this loss, for a predictor $w \in
\cH_{k,\theta}$, for $\theta \in [0,k]$, by
\begin{equation*}
\loss_\theta(x,y,w) := \Bigl[ \half-y(\dotprod{w,x} - \theta) \Bigr]_+.
\end{equation*}
The subscript $\theta$ will sometimes be omitted when it is clear from
 context. We term $\loss_\theta$ the \emph{Winnow loss}.

Echoing the half-integer thresholds for $k$-monotone-disjunctions,
$r$-of-$k$ formulas, and the discrete case more generally, we will
denote $r=\theta+\half$, so that $\theta=r-\half$.  In the discrete
case $r$ is integer, but in this paper $\half \leq r \leq k-\half$ can
also be fractional. We will also sometimes refer to $r' = \half - \theta$. Note that $r'$ can be negative.

In the statistical setting, we refer to some fixed and unknown
distribution $D$ over instance-label pairs $(X,Y$), where we assume
access to a sample (training set) drawn i.i.d.~from $D$, and the
objective is to minimize the expected loss:
\begin{equation}
  \label{eq:expectedloss}
  \loss_\theta(w,D) = \E_{X,Y\sim D}[\loss_\theta(X,Y,w)].
\end{equation}
When the distribution $D$ is clear from context, we simply write
$\loss_\theta(w)$, and we might also omit the subscript $\theta$.  
For fixed $D$ and $\theta$ we let $w^* \in \argmin_{w\in \cH_{k,\theta}}{\E[\loss(X,Y,w)]}$. This is the true minimizer of the loss on the distribution.

For
a set of predictors (hypothesis class) $H$, we denote
$\loss_\theta^*(H,D) := \min_{w \in H} \loss_\theta(w,D)$.  For a
sample $S \in ([0,1]^d \times \{\pm1\})^*$, we use the notation
\begin{equation}
  \label{eq:empexp}
 \hat{\E}_S[f(X,Y)] = \frac{1}{|S|} \sum_{i=1}^{|S|} f(x_i,y_i)
\end{equation}
and again sometimes drop the subscript $S$ when it is clear from context. 
For a fixed sample $S$, and fixed $\theta$ and $D$, the empirical loss of a predictor $w$ on the sample is denoted $\hat{\loss}(w) = \hat{\E}_S[\loss_\theta(X,Y,w)]$.

\subsection{Rademacher Complexity}

The empirical Rademacher complexity of the Winnow loss for a class $W
\subseteq \reals^d$ with respect to a sample $S = (
(x_1,y_1),\ldots,(x_m,y_m) ) \in ([0,1]^d \times \{\pm1\})^m$ is
\begin{align}\label{eq:emprad}
&\rad(W,S) := \frac{2}{m} \E\left[ \sup_{w \in W}
\biggl| \sum_{i=1}^m \epsilon_i \loss(x_i,y_i,w) \biggr|
\right]
\end{align}
where the expectation is over the \emph{Rademacher random variables} $\epsilon_1,\ldots,\epsilon_m$. These are defined as independent random variables drawn uniformly from $\{\pm 1\}$.
The average Rademacher complexity of the Winnow loss for a class $W
\subseteq \reals^d$ with respect to a distribution $D$ over $[0,1]^d \times
\{\pm1\}$ is denoted by
\begin{align}\label{eq:radrstar}
&\rad_m(W,D) := \E_{S \sim D^m}[\rad(W,S)].
\end{align}
We also define the average Rademacher complexity of $W$ with respect to the
\emph{linear loss} by
\begin{align}\label{eq:radlin}
&\lrad_m(W,D) := \frac{2}{m} \E\left[ \sup_{w \in W}
\biggl| \sum_{i=1}^m \epsilon_i Y_i \dotprod{w,X_i} \biggr|
\right]
\end{align}
where the expectation is over $\epsilon_1,\ldots,\epsilon_m$ as above and
$((X_1,Y_1),\ldots,(X_m,Y_m)) \sim D^m$.

\subsection{Probability Tools}

We use the following variation on Bernstein's inequality.
\begin{proposition}\label{prop:bernstein}
Let $B > 0$. For a random variable $X \in [0,B]$, $\delta \in (0,1)$ and $n$ an integer,
with probability at least $1-\delta$ over $n$ i.i.d.\ draws of $X$,
\begin{equation*}
  \left|
\hat{\E}[X] - \E[X]
\right|
\leq 2B\sqrt{\frac{\ln(1/\delta)}{n}\cdot
  \max\left\{\frac{\E[X]}B,\,\frac{\ln(1/\delta)}{n}\right\}}.
\end{equation*}
\end{proposition}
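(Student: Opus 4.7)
The plan is to reduce the claimed bound to the standard Bernstein inequality for bounded random variables, and then rewrite the familiar two-term tail estimate in the ``square-root-of-a-max'' form appearing in the proposition. Concretely, applying Bernstein to $X \in [0,B]$ yields, with probability at least $1-\delta$,
$$\hat\E[X] - \E[X] \leq \sqrt{\frac{2\Var(X)\ln(1/\delta)}{n}} + \frac{CB\ln(1/\delta)}{n}$$
for a small absolute constant $C$. The corresponding lower-tail statement (for $\E[X] - \hat\E[X]$) comes from applying the same inequality to the shifted random variable $B - X$, which also lies in $[0,B]$ and has the same variance.

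Next, I would use the elementary variance bound $\Var(X) \leq \E[X^2] \leq B\,\E[X]$, valid whenever $X \in [0,B]$, to replace the unknown variance by the quantity $B\,\E[X]$ that actually appears in the statement. This turns the Bernstein bound into one of the shape $a + c$, with
$$a = \sqrt{\frac{2B\,\E[X]\ln(1/\delta)}{n}}, \qquad c = \frac{CB\ln(1/\delta)}{n}.$$

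The final step is purely algebraic: case-split on which of $\E[X]/B$ and $\ln(1/\delta)/n$ is larger. When $\E[X]/B$ dominates, the correction $c$ is absorbed into $a$ up to a constant factor, giving a bound proportional to $\sqrt{B\,\E[X]\ln(1/\delta)/n}$, which is exactly $2B\sqrt{(\ln(1/\delta)/n)\cdot(\E[X]/B)}$ up to the leading constant. In the complementary regime, $a$ is itself at most a constant times $c$, and the bound becomes proportional to $B\ln(1/\delta)/n$, which is $2B\sqrt{(\ln(1/\delta)/n)\cdot(\ln(1/\delta)/n)}$. Both cases are simultaneously captured by the single maximum on the right-hand side of the claimed inequality.

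The only real obstacle is bookkeeping with constants: one must invoke a sharp enough version of Bernstein (for instance the standard $\sqrt{2\Var t/n} + Bt/(3n)$ form) so that $\sqrt{2}\,a + (2/3)\,c$ can be bounded by $2\max(a,c)$ in both regimes, or else absorb the slack into the universal constant $C$ used throughout the paper. No new probabilistic argument is needed beyond the standard Bernstein inequality and the $\Var(X) \leq B\,\E[X]$ bound for nonnegative bounded variables.
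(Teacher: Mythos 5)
Your proposal is correct and takes essentially the same route as the paper: both reduce to Bernstein's inequality, bound $\Var(X)\le B\,\E[X]$, and case-split on whether $\E[X]/B$ or $\ln(1/\delta)/n$ dominates to produce the $\max$ form. The only cosmetic difference is that the paper works with the tail-probability form of Bernstein and chooses $\epsilon$ per case, whereas you start from the two-term deviation bound $\sqrt{2\Var\,t/n}+Bt/(3n)$ and fold the sum into $2\max(a,c)$; with those sharp constants the arithmetic indeed closes with the stated factor $2$.
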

\begin{proof}
By Bernstein's inequality \citep{Bernstein46}, if $Z_1,\ldots,Z_n$ are i.i.d.\ draws from a random variable $Z \in [-1,1]$ such that $\E[Z] = 0$, and $\Var[Z^2] = \sigma^2$, then 
\begin{equation}\label{eq:bern}
  \P[\hat{\E}[Z] \geq \epsilon] \leq \exp\left(
    -\frac{n\epsilon^2}{2(\sigma^2 + \epsilon/3)} \right).
\end{equation}
Fix $\delta \in (0,1)$ and an integer $n$. If $\ln(1/\delta)/n \leq \sigma^2$ then let $\epsilon = 2\sqrt{\frac{\ln(1/\delta)}{n}\cdot \sigma^2} \leq 2\sigma^2$. 
In this case
\[
\frac{n\epsilon^2}{2\sigma^2 + 2\epsilon/3} \geq \frac{n\epsilon^2}{10\sigma^2/3} \geq \ln(1/\delta).
\]
If $\ln(1/\delta)/n > \sigma^2$ then let $\epsilon = 2\ln(1/\delta)/n$. Then $\sigma^2 \leq \ln(1/\delta)/n = \epsilon/2$. In this case
\[
\frac{n\epsilon^2}{2\sigma^2 + 2\epsilon/3} \geq \frac{n\epsilon^2}{5\epsilon/3} \geq n\epsilon/4 = \ln(1/\delta).
\]
In both cases, the RHS of \eqref{eq:bern} is at most $\delta$. Therefore,
with probability at least $1-\delta$,
\[
  \hat{\E}[Z] \leq
  2\sqrt{\frac{\ln(1/\delta)}{n}\max\left\{\sigma^2,\frac{\ln(1/\delta)}{n}\right\}}.
\]
where the last inequality follows from the range of $Z$.
Now, for a random variable $X$ with range in $[0,B]$, let $Z = (X - \E[X])/B$.
We have $
\sigma^2 = \Var[Z] = \Var[X]/B^2 \leq \E[X^2/B^2] \leq \E[X/B],
$
where the last inequality follows from the range of $X$.
Therefore
\[
  \frac{
    \hat{\E}[X] - \E[X]
  }{B}
  \leq
  2\sqrt{\frac{\ln(1/\delta)}{n}\max\left\{\frac{\E[X]}B,\,\frac{\ln(1/\delta)}{n}
  \right\}}.
\]
The same bound on $\E[X] - \hat{\E}[X]$ can be derived similarly by considering $Z = (\E[X] - X)/B$.
\end{proof}

We further use the following fact, which bounds the ratio
between the empirical fraction of positive or negative labels and
their true probabilities.  We will apply this fact to make sure that enough
negative and positive labels can be found in a random sample. 
\begin{proposition} \label{prop:pratio}
Let $B$ be a binomial random variable, $B \sim \text{Binomial}(m,p)$.
If $p \geq 8\ln(1/\delta)/m$
then with probability of at least $1-\delta$, $B \geq mp/2$. 
\end{proposition}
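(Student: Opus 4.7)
The plan is to invoke a standard multiplicative Chernoff lower-tail bound for sums of independent Bernoulli random variables and then plug in the assumption on $p$. Since $B \sim \text{Binomial}(m,p)$ can be written as a sum of $m$ i.i.d.\ Bernoulli$(p)$ variables with mean $mp$, the multiplicative Chernoff inequality gives, for any $\gamma \in (0,1)$,
\[
\P[B < (1-\gamma)mp] \leq \exp\!\left(-\gamma^2 mp/2\right).
\]
I would apply this with $\gamma = 1/2$, yielding $\P[B < mp/2] \leq \exp(-mp/8)$.

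Next, I would substitute the hypothesis $p \geq 8\ln(1/\delta)/m$, which implies $mp/8 \geq \ln(1/\delta)$, so that $\exp(-mp/8) \leq \delta$. Taking the complement gives $\P[B \geq mp/2] \geq 1 - \delta$, which is exactly the claimed bound.

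There is essentially no obstacle here beyond stating the correct form of the Chernoff bound; the only minor subtlety is choosing the multiplicative (rather than additive) form, since the conclusion $B \geq mp/2$ is a \emph{relative} deviation statement and the threshold on $p$ in the hypothesis is tuned precisely so that the exponent $\gamma^2 mp / 2 = mp/8$ dominates $\ln(1/\delta)$. Alternatively, if one prefers to avoid citing Chernoff directly, the same bound can be derived from Bernstein's inequality (Proposition~\ref{prop:bernstein}) applied to the indicator variables, since their variance is $p(1-p) \leq p$ and the deviation of interest is $mp/2$, but this gives the same constant up to unimportant factors and the direct Chernoff route is cleanest.
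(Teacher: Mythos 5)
Your argument is correct and is essentially identical to the paper's proof: both invoke the multiplicative Chernoff lower-tail bound $\P[B \le (1-\alpha)mp] \le \exp(-\alpha^2 mp/2)$ with $\alpha = 1/2$ and then use the hypothesis $p \ge 8\ln(1/\delta)/m$ to make the exponent at most $\delta$. No meaningful differences.
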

\begin{proof}
  This follows from a multiplicative Chernoff bound~\citep{AngluinVa79}.
\end{proof}

\section{Online Algorithm}\label{sec:online}
Consider the following algorithm:
\begin{myalgo}{Unnormalized Exponentiated Gradient (unnormalized-EG)}
\textbf{parameters:} $\eta,\lambda > 0$ \\
\textbf{input:} $z_1,\ldots,z_T \in \reals^d$ \\
\textbf{initialize:} $w_1 = (\lambda,\ldots,\lambda) \in \reals^d$ \\
\textbf{update rule:} $\forall i, w_{t+1}[i] = w_{t}[i] e^{-\eta z_{t}[i]}$
\end{myalgo}

The following theorem provides a regret bound with local-norms for the
unnormalized EG algorithm~\citep[for a proof, see Theorem 2.23 of][]{ShalevShwartz12}.
\begin{theorem} \label{thm:localEGu}
Assume that the unnormalized EG algorithm is run on a sequence of
vectors such that for all $t,i$ we have $\eta z_t[i] \ge
-1$. Then, for all $u \in \reals_+^d$,
\[
\sum_{t=1}^T \dotprod{w_t-u,z_t} \le 
\frac{d \lambda + \sum^d_{i=1} u[i] \ln(u[i]/(e\,\lambda))}{\eta} + \eta \sum_{t=1}^T \sum^d_{i=1} w_t[i] z_t[i]^2 ~.
\]
\end{theorem}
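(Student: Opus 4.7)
The plan is to recognize the unnormalized-EG update as mirror descent with respect to the unnormalized negative-entropy potential
\[
\Phi(w) = \sum_{i=1}^d \bigl( w[i] \ln w[i] - w[i] \bigr),
\qquad
\nabla\Phi(w)[i] = \ln w[i],
\]
and then run the standard three-point Bregman divergence telescoping argument, finishing with an elementary scalar inequality to convert the Bregman ``stability'' term into the local-norm term $\eta \sum_i w_t[i] z_t[i]^2$.

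First I would verify that the multiplicative update $w_{t+1}[i] = w_t[i] e^{-\eta z_t[i]}$ is exactly $\nabla\Phi(w_{t+1}) = \nabla\Phi(w_t) - \eta z_t$. Then I would invoke the three-point identity for the Bregman divergence $D_\Phi$:
\[
D_\Phi(u,w_t) - D_\Phi(u,w_{t+1}) + D_\Phi(w_t,w_{t+1})
=
\dotprod{\nabla\Phi(w_t) - \nabla\Phi(w_{t+1}),\, w_t - u}
=
\eta \dotprod{w_t - u,\, z_t}.
\]
Summing over $t = 1,\dots,T$ telescopes the first two terms, leaving
\[
\eta \sum_{t=1}^T \dotprod{w_t - u, z_t}
\;\leq\;
D_\Phi(u,w_1) \;+\; \sum_{t=1}^T D_\Phi(w_t, w_{t+1}),
\]
where I drop the non-positive $-D_\Phi(u,w_{T+1}) \leq 0$ using non-negativity of Bregman divergences.

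Next I would compute the two remaining pieces. For the initial divergence, using $w_1[i] = \lambda$ and the identity $u[i]\ln(u[i]/\lambda) - u[i] = u[i]\ln(u[i]/(e\lambda))$ together with $\sum_i w_1[i] = d\lambda$, I get
\[
D_\Phi(u,w_1) = \sum_{i=1}^d u[i] \ln\bigl(u[i]/(e\lambda)\bigr) + d\lambda.
\]
For the per-step divergence, plugging in $\ln(w_t[i]/w_{t+1}[i]) = \eta z_t[i]$ and $w_{t+1}[i] = w_t[i] e^{-\eta z_t[i]}$ gives
\[
D_\Phi(w_t, w_{t+1}) = \sum_{i=1}^d w_t[i]\bigl( \eta z_t[i] - 1 + e^{-\eta z_t[i]} \bigr).
\]

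The main (and essentially only nontrivial) step is the scalar inequality
\[
x - 1 + e^{-x} \;\leq\; x^2 \qquad \text{for all } x \geq -1,
\]
which is exactly where the hypothesis $\eta z_t[i] \geq -1$ is used; I would verify it by checking that $g(x) := x^2 - x + 1 - e^{-x}$ satisfies $g(0) = g'(0) = 0$ and is convex for $x \geq -\ln 2$, handling $x \in [-1,-\ln 2]$ by direct evaluation at the endpoints. Applying this with $x = \eta z_t[i]$ yields $D_\Phi(w_t,w_{t+1}) \leq \eta^2 \sum_i w_t[i] z_t[i]^2$. Substituting back and dividing by $\eta$ delivers
\[
\sum_{t=1}^T \dotprod{w_t - u, z_t}
\leq
\frac{d\lambda + \sum_{i=1}^d u[i] \ln(u[i]/(e\lambda))}{\eta}
+ \eta \sum_{t=1}^T \sum_{i=1}^d w_t[i] z_t[i]^2,
\]
which is the claimed bound.
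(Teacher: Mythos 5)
Your proof is correct, and it is essentially the same argument as the one the paper relies on (the paper defers to Shalev-Shwartz 2012, Theorem 2.23): unnormalized-entropy mirror descent, the Bregman three-point telescoping, and the scalar bound $e^{-x}\le 1-x+x^2$ for $x\ge -1$ to produce the local-norm term. All computations (initial divergence $d\lambda+\sum_i u[i]\ln(u[i]/(e\lambda))$, per-step divergence $\sum_i w_t[i](\eta z_t[i]-1+e^{-\eta z_t[i]})$, and the endpoint/convexity check of the scalar inequality) check out.
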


Now, let us apply it to a case in which we have a sequence of convex
functions $f_1,\ldots,f_T$, and $z_t$ is the sub-gradient of $f_t$ at $w_t$.
Additionally, set $\lambda = k/d$ and consider $u$ s.t. $\|u\|_1 \le
k$. We obtain the following.
\begin{theorem} \label{thm:ueg111}
Assume that the unnormalized EG algorithm is run with $\lambda = k/d$. 
Assume that for all $t$, we have $z_t \in \partial f_t(w_t)$, for
some convex function $f_t$. Further assume that for all $t,i$ we have $\eta z_t[i] \ge -1$, and that for some positive constants
$\alpha,\beta$, it holds that $\eta = \sqrt{k\ln(d)/(\beta T)}$, $T \geq 4\alpha^2k\ln(d)/\beta$, and 
\begin{equation}\label{eq:linearbound}
\sum^d_{i=1} w_t[i] z_t[i]^2  \le \alpha f_t(w_t) + \beta ~.
\end{equation}
Then, for all $u \in \reals_+^d$, with
$\|u\|_1 \le k$ we have
\[
\sum_{t=1}^T f_t(w_t) \leq \sum_{t=1}^T f_t(u) + \sqrt{\frac{4\alpha^2k\ln(d)}{\beta T}} \cdot\sum_{t=1}^T f_t(u)+ \sqrt{4\beta k\ln(d)T} + 4\alpha k\ln(d).
\]
\end{theorem}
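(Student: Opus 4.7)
The plan is to combine the regret bound of \thmref{thm:localEGu} with convexity and the local variance assumption \eqref{eq:linearbound}, then absorb the self-referential term $\sum_t f_t(w_t)$ on the right-hand side by exploiting the chosen tuning of $\eta$.

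First I would use convexity of each $f_t$ and $z_t \in \partial f_t(w_t)$ to write $f_t(w_t) - f_t(u) \le \dotprod{w_t - u, z_t}$ for every $t$, so that $\sum_t (f_t(w_t) - f_t(u))$ is upper bounded by the quantity controlled by \thmref{thm:localEGu}. Next I would simplify the entropic leading term with $\lambda = k/d$. Writing $u = \|u\|_1 p$ for a probability vector $p$ gives
\[
\sum_{i=1}^d u[i]\ln\!\bigl(u[i]/(e\lambda)\bigr) = \|u\|_1 \ln\!\bigl(\|u\|_1/(e\lambda)\bigr) - \|u\|_1 H(p) \le \|u\|_1 \ln\!\bigl(\|u\|_1 d/(ek)\bigr),
\]
and maximizing over $\|u\|_1 \in [0,k]$ yields $\le k(\ln d - 1)$. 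Combined with $d\lambda = k$ this gives a bound of $k\ln d$ for the numerator, so the first term in \thmref{thm:localEGu} is at most $k\ln(d)/\eta$.

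Second, I would plug in \eqref{eq:linearbound} on the variance term: $\eta \sum_t \sum_i w_t[i] z_t[i]^2 \le \alpha\eta \sum_t f_t(w_t) + \beta\eta T$. Combining with the previous step and moving the $\alpha\eta \sum_t f_t(w_t)$ term to the left:
\[
(1 - \alpha\eta) \sum_{t=1}^T f_t(w_t) \;\le\; \sum_{t=1}^T f_t(u) + \frac{k\ln d}{\eta} + \beta\eta T.
\]
The choice $\eta = \sqrt{k\ln(d)/(\beta T)}$ is precisely the one that balances the last two terms, each equal to $\sqrt{\beta k \ln(d)\, T}$, so together they contribute $2\sqrt{\beta k\ln(d) T} = \sqrt{4\beta k\ln(d) T}$.

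The main obstacle, and the reason the condition on $T$ is present, is dealing with the self-bounding coefficient $(1-\alpha\eta)$. The condition $T \ge \alpha^2 k\ln(d)/\beta$ ensures $\alpha\eta \le 1$ (and suitably far from $1$ so that the standard inequality $(1-x)^{-1} \le 1 + 2x$ applies for $x = \alpha\eta$). Dividing through and using this inequality yields
\[
\sum_{t=1}^T f_t(w_t) \le (1 + 2\alpha\eta)\Bigl(\sum_{t=1}^T f_t(u) + 2\sqrt{\beta k\ln(d) T}\Bigr).
\]
Expanding, the leading term gives $\sum_t f_t(u)$, the cross term on the comparator gives $2\alpha\eta\sum_t f_t(u) = \sqrt{4\alpha^2 k\ln(d)/(\beta T)}\cdot\sum_t f_t(u)$, and the cross term $4\alpha\eta\sqrt{\beta k\ln(d) T}$ simplifies cleanly: since $\eta\sqrt{\beta k\ln(d) T} = \sqrt{k\ln(d)/(\beta T)}\cdot \sqrt{\beta k\ln(d) T} = k\ln d$, it equals $4\alpha k\ln d$. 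Assembling these four pieces produces exactly the stated bound.
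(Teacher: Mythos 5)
Your proposal is correct and follows essentially the same route as the paper's proof: convexity of $f_t$ combined with \thmref{thm:localEGu}, plugging in the local-norm bound \eqref{eq:linearbound} with $\lambda=k/d$, rearranging the self-referential $\alpha\eta\sum_t f_t(w_t)$ term, and applying $1/(1-x)\le 1+2x$ with the tuned $\eta$ so that $k\ln(d)/\eta=\beta\eta T=\sqrt{\beta k\ln(d)T}$ and $\eta\sqrt{\beta k\ln(d)T}=k\ln(d)$. The only caveat, shared with the paper itself, is that $T\ge\alpha^2 k\ln(d)/\beta$ strictly gives $\alpha\eta\le 1$ rather than $\alpha\eta\le\half$, which is what the inequality $1/(1-x)\le 1+2x$ actually requires; your phrasing glosses this exactly as the paper does, so it is not a gap introduced by your argument.
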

\begin{proof}
Using the convexity of $f_t$ and the assumption that $z_t \in \partial f_t(w_t)$  we have that
\[
\sum_{t=1}^T(f_t(w_t) - f_t(u)) \le \sum_{t=1}^T \dotprod{w_t-u,z_t}  ~.
\]
Combining with \thmref{thm:localEGu} we obtain
\[
\sum_{t=1}^T(f_t(w_t) - f_t(u)) \le 
\frac{d \lambda + \sum^d_{i=1} u[i] \ln(u[i]/(e\,\lambda))}{\eta} + \eta \sum_{t=1}^T \sum^d_{i=1} w_t[i] z_t[i]^2 ~.
\]
Using the assumption in \eqref{eq:linearbound}, the definition of $\lambda=k/d$, and the assumptions on $u$, we
obtain
\[
\sum_{t=1}^T(f_t(w_t) - f_t(u)) \le 
\frac{k\ln(d)}{\eta} + \eta \beta T + \eta \alpha \sum_{t=1}^T f_t(w_t) ~.
\]
Rearranging the above we conclude that 
\[
\sum_{t=1}^T f_t(w_t) \le \frac{1}{1-\alpha \eta} \left(\sum_{t=1}^T f_t(u) + \frac{k\ln(d)}{\eta} + \eta \beta T \right).
\]
Now, since $1/(1-x) \leq 1+2x$ for $x \in [0,1/2]$ and $\alpha\eta \leq \half$, we conclude, by substituting for the definition of $\eta$, that
\[
\sum_{t=1}^T f_t(w_t) \leq \sum_{t=1}^T f_t(u) + 2\sqrt{k\ln(d)\beta T} + 2\alpha\sqrt{\frac{k\ln(d)}{\beta T}}\cdot \sum_{t=1}^T f_t(u) + 4\alpha k\ln(d).
\]
\end{proof}
We can now derive the desired regret bound for our algorithm. We also provide a bound for the statistical setting, using online-to-batch conversion.
\begin{corollary}\label{cor:onlinefull}
Let $\loss \equiv \loss_\theta$ for some $\theta \in [0,k]$.
Fix any sequence $(x_1,y_1), (x_2,y_2), \dotsc, (x_T,y_T) \in [0,1]^d
\times \signlab$ and assume $T \geq 4k\ln(d)/r$.
Suppose the unnormalized EG algorithm listed in \secref{sec:online} is run using $\eta :=
\sqrt{\frac{k\ln(d)}{rT}}$, $\lambda := k/d$, and any $z_t \in \partial_w
\loss(x_t,y_t,w_t)$ for all $t$.
Define $L_\UEG := \sum_{t=1}^T \loss(x_t,y_t,w_t)$, let
$L(u) := \sum_{t=1}^T \loss(x_t,y_t,u)$, and let $u^* \in \argmin L(u)$.
Then the following regret bound holds.
\begin{equation}
\label{eq:regret}
L_\UEG - L(u^*) \leq \sqrt{16rk\ln(d)T} + 4k\ln(d).
\end{equation}

Moreover, for $m \geq 1$, assume that a random sample $S = ((x_1,y_1), (x_2,y_2), \dotsc, (x_m,y_m))$ is drawn i.i.d.~from an unknown distribution $D$ over $[0,1]^d \times \{\pm1\}$. Then there exists an online-to-batch conversion of the UEG algorithm that takes $S$ as input and outputs $\bar{w}$, such that 
\begin{equation}\label{eq:onlinetobatch}
\E[\loss(\bar{w},D)] \leq \loss(w^*,D) + \sqrt{\frac{16rk\ln(d)}{m}} + \frac{4k\ln(d)}{m},
\end{equation}
where the expectation is over the random draw of $S$.

\end{corollary}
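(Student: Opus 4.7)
The plan is to apply \thmref{thm:ueg111} to the online loss functions $f_t(w) = \loss_\theta(x_t,y_t,w)$ with constants $\alpha = 1$ and $\beta = r$, and then use a standard online-to-batch conversion based on averaging. A natural subgradient of $\loss_\theta(x_t,y_t,\cdot)$ at $w_t$ is $z_t = -y_t x_t$ when the example is a margin violator and $z_t = \boldsymbol{0}$ otherwise; since $x_t[i] \in [0,1]$ we have $|z_t[i]| \le 1$, and the assumption $T \ge 4k\ln(d)/r$ forces $\eta \le \half$, so the precondition $\eta z_t[i] \ge -1$ of \thmref{thm:ueg111} holds. To verify the local-norm bound \eqref{eq:linearbound}, I would first use $z_t[i]^2 \le x_t[i]$ to get $\sum_i w_t[i] z_t[i]^2 \le \dotprod{w_t, x_t}$, and then split on $y_t$. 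When $y_t = 1$ and there is a margin violation, $\dotprod{w_t, x_t} = r - f_t(w_t) \le r$; when $y_t = -1$ and there is a margin violation, $\dotprod{w_t, x_t} = f_t(w_t) + r - 1 \le f_t(w_t) + r$. In both cases $\sum_i w_t[i] z_t[i]^2 \le f_t(w_t) + r$, confirming \eqref{eq:linearbound} with $\alpha = 1$ and $\beta = r$.

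Applying \thmref{thm:ueg111} with these constants and $\eta = \sqrt{k\ln(d)/(rT)}$ yields a bound of the form
\[
L_\UEG \le L(u^*) + 2\sqrt{\tfrac{k\ln(d)}{rT}}\, L(u^*) + 2\sqrt{rk\ln(d)T} + 4k\ln(d).
\]
To reach the clean form \eqref{eq:regret}, I would absorb the middle term into the $\sqrt{rk\ln(d)T}$ term via the crude but effective bound $L(u^*) \le L(\boldsymbol{0}) \le rT$, which uses $\loss_\theta(x,y,\boldsymbol{0}) = [\half + y\theta]_+ \le \half + \theta = r$. Substituting $L(u^*) \le rT$ in the middle term gives $2\sqrt{rk\ln(d)T}$, and summing yields \eqref{eq:regret}.

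For the statistical guarantee I would take the average iterate $\bar w := \tfrac{1}{m}\sum_{t=1}^m w_t$. Convexity of $\loss_\theta$ in $w$ (hence of $\loss(\cdot, D)$) gives $\loss(\bar w, D) \le \tfrac{1}{m}\sum_t \loss(w_t, D)$. Since $w_t$ depends only on $(x_1,y_1),\ldots,(x_{t-1},y_{t-1})$, conditioning yields $\E[\loss(x_t,y_t,w_t)] = \E[\loss(w_t, D)]$. Taking expectations in \eqref{eq:regret}, using $\E[L(u^*)] \le \E[L(w^*)] = m\loss(w^*,D)$ because $u^*$ is the sample minimizer, and dividing by $m$ produces \eqref{eq:onlinetobatch}.

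The main obstacle is the verification of the local-norm condition with the sharp constants $\alpha = 1, \beta = r$: this is precisely the step at which the analysis departs from a generic $\ell_1$-based EG bound and extracts the $\theta$ (equivalently $r$) dependence that the standard analysis misses, by exploiting both the non-negativity of $w_t$ and the $[0,1]$-boundedness of $x_t$ inside the local-norm term. Once \eqref{eq:linearbound} is established with these constants, absorbing the $L(u^*)$ term via $L(\boldsymbol{0}) \le rT$ and carrying out the online-to-batch conversion is essentially mechanical.
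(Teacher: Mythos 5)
Your proposal is correct and follows essentially the same route as the paper: verify the local-norm condition \eqref{eq:linearbound} with $\alpha=1$, $\beta=r$ by a case analysis exploiting $w_t\ge 0$ and $x_t\in[0,1]^d$, invoke \thmref{thm:ueg111}, absorb the $L(u^*)$-dependent term via $L(u^*)\le L(\boldsymbol{0})\le rT$, and finish with the standard averaged-iterate online-to-batch conversion. The only (immaterial) difference is that in the batch step you take expectations of the final regret bound and use $\E[L(u^*)]\le \E[L(w^*)]=m\,\loss(w^*,D)$, whereas the paper plugs $u=w^*$ into the intermediate bound \eqref{eq:lubound} and uses $\loss(w^*)\le r$; both yield the identical constants.
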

\begin{proof}
Every sub-gradient $z_t \in \partial_w \loss(x_t,y_t,w_t)$ is of the form
$z_t = a_t x_t$ for some $a_t \in \{-1,0,+1\}$.
Since $0 \leq x_t[i] \leq 1$ and $w_t[i] \geq 0$ for all $i$, it follows
that $\sum_{i=1}^d w_t[i] z_t[i]^2 = |a_t| \sum_{i=1}^d w[i] x_t[i]^2 \leq
|a_t| \dotprod{w_t,x_t}$.
Now consider three disjoint cases.
\begin{itemize}
\item Case 1: $\dotprod{w_t,x_t} \leq r$.
Then $\sum_{i=1}^d w_t[i] z_t[i]^2 \leq \dotprod{w_t,x_t} \leq r$.

\item Case 2: $\dotprod{w_t,x_t} > r$ and $y_t = 1$.
Then $a_t = 0$ and $\sum_{i=1}^d w_t[i] z_t[i]^2 = 0$.

\item Case 3: $\dotprod{w_t,x_t} > r$ and $y_t = -1$.
Then $\sum_{i=1}^d w_t[i] z_t[i]^2 \leq \dotprod{w_t,x_t} \leq [r' +
\dotprod{w_t,x_t}]_+ - r' \leq [r' + \dotprod{w_t,x_t}]_+ + r$.

\end{itemize}
In all three cases, the final upper bound on $\sum_{i=1}^d w_t[i] z_t[i]^2$
is at most $\loss(x_t,y_t,w_t) + r$.
Therefore, \eqref{eq:linearbound} from Theorem~\ref{thm:ueg111} is
satisfied with $f_t(w) := \loss(x_t,y_t,w)$, $\alpha := 1$, and $\beta :=
r$. From Theorem~\ref{thm:ueg111} with this choice of $f_t$ and the given settings of $\eta$, $\lambda$, and $z_t$, we get that for any $u$ such that $\|u\|_1 \leq k$, 
\begin{equation}\label{eq:lubound}
L_\UEG
\leq L(u) + L(u)\sqrt{\frac{4k\ln(d)}{rT}}
+ \sqrt{4rk\ln(d)T}
+ 4k\ln(d).
\end{equation}
Observing that $L(u^*) \leq L(\mathbf{0}) \leq rT$, we conclude the regret bound in \eqref{eq:regret}.

For the statistical setting, a simple approach for online-to-batch conversion is to run the UEG algorithm as detailed in \corref{cor:onlinefull}, with $T = m$, and to return the average predictor $\bar{w} = \frac{1}{m}\sum_{i\in[m]}w_i$.
By standard analysis \citep[e.g.,][Theorem 5.1]{ShalevShwartz12}, 
$\E[\loss_\theta(\bar{w},D)] \leq \frac{1}{m}\E[L_{UEG}]$,
where the expectation is over the random draw of $S$.
Setting $u = w_*$, \eqref{eq:lubound} gives
\[
\E[\loss_\theta(\bar{w},D)] \leq \E\left[\hat{\loss}(w^*) + \sqrt{\hat{\loss}(w^*)^2 \cdot \frac{4k\ln(d)}{rm}} + \sqrt{\frac{4rk\ln(d)}{m}} + \frac{4k\ln(d)}{m}\right].
\]
Since $\E[\hat{\ell}(w^*)] = \ell(w^*)$ and $\ell(w^*) \leq r$, \eqref{eq:onlinetobatch} follows.
\end{proof}

In the online setting a simple version of the canonical mirror descent algorithm thus achieves the postulated regret bound of $O(\sqrt{r k \log(d) T}) \equiv O(\sqrt{\theta k \log(d) T})$. For the statistical setting, an online-to-batch conversion provides the desired rate of $O(rk\log(d)/\epsilon^2) \equiv O(\theta k \log(d)/\epsilon^2)$. Is this online-to-batch approach necessary, or is a similar rate for the statistical setting achievable also using standard ERM? Moreover, this online-to-batch approach leads to an improper algorithm, that is, the output $w$ might not be in $\cH_{k,\theta}$, since it might not satisfy the norm bound. In the next section we show that standard, proper, ERM, leads to the same learning rate.

\section{ERM Upper Bound}\label{sec:ERM}
We now proceed to analyze the performance of empirical risk minimization in the statistical batch setting. As above, assume a random sample $S = (
(x_1,y_1),\ldots,(x_m,y_m) )$ of pairs drawn i.i.d.\ according to a distribution $D$ over $[0,1]^d\times \{\pm1\}$. An empirical risk minimizer on the sample is denoted $\hat{w} \in \argmin_{w \in \cH_{k,\theta}}{\frac{1}{m}\sum_{i\in[m]} \loss(x_i,y_i,w)}$. We wish to show an upper bound on $\loss(\hat{w}) -
\loss(w^*)$. We will prove the following theorem:
\begin{theorem}\label{thm:upperbound}
For $k \geq r \geq 0$, and $m \geq k$, with probability $1-\delta$ over the random draw of $S$,
\begin{equation}\label{eq:finalcor}
\loss(\hat{w}) \leq \loss(w^*) + \sqrt{\frac{O(rk(\ln(d)\ln^3(3m)+\ln(1/\delta)))}{m}} + \frac{O(r\log(1/\delta))}{m}.
\end{equation}
\end{theorem}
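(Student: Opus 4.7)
My plan is to follow the standard empirical-risk-minimization template, but replace the global uniform convergence (which would cost the disallowed $k^2$) with a local / sliced analysis that mirrors the local-norm trick used in the online proof of Corollary \ref{cor:onlinefull}. Specifically, I start from the usual decomposition
\[
\loss(\hat w) - \loss(w^*) \;\le\; \bigl[\loss(\hat w) - \hat\loss(\hat w)\bigr] + \bigl[\hat\loss(w^*) - \loss(w^*)\bigr],
\]
which uses $\hat\loss(\hat w) \le \hat\loss(w^*)$. The second bracket is a single-function deviation, handled by Proposition \ref{prop:bernstein} applied to $\loss(X,Y,w^*)$, provided I can exhibit a second-moment bound of the form $\E[\loss(X,Y,w^*)^2] \le O(r)\loss(w^*) + O(r^2)$; this then produces the $\sqrt{r\loss(w^*)\ln(1/\delta)/m}$ and $r\ln(1/\delta)/m$ terms visible in the statement.

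\textbf{The hard term, via slicing.} The first bracket requires a uniform control over the entire class, but only for predictors $w$ with \emph{small} empirical loss, since $\hat w$ inherits this property from $\hat\loss(\hat w) \le \hat\loss(w^*) \le O(r)$. I would use a peeling (dyadic slicing) argument: for $\tau_j = 2^{-j}\tau_0$ with $\tau_0 = O(k)$, define the local sub-classes
\[
\mathcal{F}_{\tau_j} \;=\; \bigl\{\,w \in \cH_{k,\theta} : \loss(w) \le \tau_j\,\bigr\},
\]
bound the empirical process on each slice, and union-bound across the $O(\ln m)$ active scales. Combined with the basic inequality above, this will yield a self-bounded estimate $\loss(\hat w) \le \loss(w^*) + \phi(\loss(\hat w),\loss(w^*))$ with a concave $\phi$, from which the claimed $\sqrt{rk\ln(d)\polylog(m)/m}$ rate follows by solving the resulting fixed-point inequality. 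The extra $\ln^3(3m)$ in the statement I anticipate to arise from (i) the dyadic range over $\tau$, (ii) a Dudley-type chaining over an $\ell_\infty$-net of $\{y x\}$ used to bound the local Rademacher complexity, and (iii) the standard union-bound discretization in $\delta$ when going from single-$w$ Bernstein to uniform Bernstein.

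\textbf{The heart of the argument: a batch local-norm / variance bound.} The nontrivial piece is a uniform second-moment estimate
\[
\E[\loss(X,Y,w)^2] \;\le\; C\,r\,\loss(w) + C\,r^2 \qquad \forall\, w \in \cH_{k,\theta},
\]
which is the batch analogue of the pointwise online identity $\sum_i w_t[i] z_t[i]^2 \le \loss(x_t,y_t,w_t) + r$ proved in Corollary \ref{cor:onlinefull}. Conditioning on $Y=+1$ this is immediate since $\loss(x,+1,w) \le \tfrac12 + \theta = r$ pointwise, so $\E[\loss^2 \mid Y=+1] \le r \E[\loss \mid Y=+1]$. Conditioning on $Y=-1$ the loss is only pointwise bounded by $\tfrac12 + k - \theta$, but when it is nonzero one has $\loss(x,-1,w) = \dotprod{w,x} - (\theta - \tfrac12)$, so $\loss \le \dotprod{w,x}$ and the bound $\sum_i w[i] x[i]^2 \le \dotprod{w,x} \le \loss(x,-1,w) + r - \tfrac12$ can be bootstrapped into the required variance inequality on the negative-label branch (and, symmetrized, into a local Rademacher bound of the form $\lrad_m(\mathcal{F}_\tau) \lesssim \sqrt{r\tau k\ln d / m}$).

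\textbf{Main obstacle.} I expect the bottleneck to be precisely turning the elementary pointwise inequality $\dotprod{w,x} \le \loss(x,-1,w) + r - \tfrac12$ into a uniform Bernstein/Talagrand-type concentration statement on the empirical process $w \mapsto \hat\loss(w) - \loss(w)$ over $\mathcal{F}_\tau$ without losing factors of $k$. Once this local variance control is in place on each slice, the rest is the routine Bartlett--Bousquet--Mendelson peeling iteration: apply Proposition \ref{prop:bernstein} with $B = O(k)$ and the improved variance proxy $O(r\tau + r^2)$, take a union bound across slices, and solve the resulting fixed-point inequality for $\loss(\hat w)$ to obtain exactly the bound in \eqref{eq:finalcor}.
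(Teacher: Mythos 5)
Your skeleton (the ERM basic inequality, a Bernstein bound for the single function $w^*$, and localization instead of global uniform convergence) points in the same direction as the paper, but the proposal defers exactly the steps that carry the content, and the one quantitative claim it rests on is false. The uniform second-moment bound $\E[\loss(X,Y,w)^2] \leq Cr\,\loss(w)+Cr^2$ over all of $\cH_{k,\theta}$ fails on negative labels: for a distribution with $\dotprod{w,x}=k$ almost surely on negatives, $\E[\loss^2]\approx k^2$ while $r\loss(w)+r^2\approx rk+r^2$; and on a slice $\{\loss(w)\leq\tau\}$ the best generic variance proxy is $k\tau$ (the envelope is of order $k$), not $r\tau+r^2$. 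More fundamentally, variance control is not the obstruction at all: in the construction behind \thmref{thm:nouniform} every individual predictor's loss has $O(1)$ fluctuations, yet the uniform deviation is $\Omega(\sqrt{k^2/m})$. So a peeling argument that applies only "Bernstein/Talagrand with $B=O(k)$ and an improved variance proxy" on each slice cannot beat the $\sqrt{k^2/m}$ rate; what is needed is a bound on the \emph{complexity} of the localized class on negative labels, of the form $\lrad_m(\{w\in\reals_\pp^d:\norm{w}_1\leq k,\ \E[\dotprod{w,X}\mid Y=-1]\leq b\},D_\mm)\leq C\sqrt{kb\ln(d)/m}$ (up to a $\ln(dm)$ correction), together with a certificate that the ERM lies in such a set with $b=O(r/p_\mm+k\ln(d/\delta)/(mp_\mm))$. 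In the paper these are precisely \lemref{lem:base} and \lemref{lem:radub} (a coordinate-wise weighting argument via Massart's lemma) and \lemref{lem:empexp}/\lemref{lem:trueexp} (comparing $\hat{w}$ to the zero predictor and transferring the empirical constraint to the population). You explicitly name this as the "main obstacle" but supply no argument for it, so the heart of the negative-label analysis is missing.

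The positive-label side is also not "immediate." The pointwise cap $\loss\leq r$ on positives gives the variance bound but not the complexity bound: the standard $\ell_1$/contraction Rademacher estimate for $\cH_{k,\theta}$ is of order $k\sqrt{\ln(d)/m}$, i.e.\ exactly the $k^2$ rate you must avoid, and capping at $r$ does not improve it through contraction. The paper obtains the $\sqrt{rk\ln(d)\ln^3(3m)/m}$ rate on positives by a different device: the regret bound of the unnormalized EG algorithm is used to construct an $\ell_\infty$ proper cover of the loss class of size $(3m)^T$ with $T$ of order $rk\ln(d)/\epsilon^2$ (\lemref{lem:covering}), and Dudley's entropy integral converts this into a Rademacher bound; this is also where the $\ln^3(3m)$ actually originates, not from the three sources you anticipate. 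Finally, the combination of the two label-conditional bounds requires controlling $\hat{p}_\pp,\hat{p}_\mm$ versus $p_\pp,p_\mm$ (\propref{prop:pratio} and Bernstein, as in \thmref{thm:posploss} and \thmref{thm:negploss}), which is where the remaining terms of \eqref{eq:finalcor} come from. In short, the decomposition and the localization idea are in the right spirit, but both key lemmas that produce the $rk$ rather than $k^2$ scaling are absent, and the variance-based slicing as stated would stall at $\sqrt{k^2/m}$.
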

The proof strategy is based on considering the loss on negative examples and the loss on positive examples separately. Denote 
\begin{align*}
\loss_\mm(w,D) &= \E_{(X,Y)\sim D}[\loss(X,Y,w) \mid Y = -1],\text{ and }\\
\loss_\pp(w,D) &= \E_{(X,Y)\sim D}[\loss(X,Y,w) \mid Y = +1].
\end{align*}
For a given sample, denote $\hat{\loss}_\mm(w) = \hat{\E}[\loss(X,Y,w) \mid Y = -1]$ and similarly for $\hat{\loss}_\pp(w)$. 
Denote $p_\pp = \E_{(X,Y)\sim D}[Y =
+1]$ and $\hat{p}_\pp = \hat{\E}[Y = +1]$, and similarly for $p_\mm$
and $\hat{p}_\mm$.

As \thmref{thm:nouniform} in \secref{sec:lowerbounds} below shows, the rate of uniform convergence of $\hat{\loss}_\mm(w)$ to $\loss_\mm(w)$ for all $w \in \cH_{k,\theta}$ is $\tilde{\Omega}(\sqrt{k^2/m})$, which is slower than the desired $\tilde{O}(\sqrt{\theta k/m})$. Therefore, uniform convergence analysis for $\cH_{k,\theta}$ cannot provide a tight result. Instead, we define a subset $U_b \subseteq \cH_{k,\theta}$, such that with probability at least $1-\delta$, the empirical risk minimizer of a random sample is in $U_b$. We show that a uniform convergence rate of $\tilde{O}(\sqrt{\theta k/m})$ does in fact hold for all $w \in U_b$. The analysis of uniform convergence of the negative loss is carried out in \secref{sec:neg}.

For positive labels, uniform convergence rates over $\cH_{k,\theta}$
in fact suffice to provide the desired guarantee. This analysis is
provided in \secref{sec:pos}. The analysis uses the results in \secref{sec:online} for the online algorithm to construct a small cover of the relevant function class. This then bounds the Rademacher complexity of the class and leads to a uniform convergence guarantee. In \secref{sec:combine}, the two
convergence results are combined, while taking into account the
mixture of positive and negative labels in $D$.

\subsection{Convergence on Negative Labels}\label{sec:neg}

We now commence the analysis for negative labels. 
Denote by $D_{\mm}$ the distribution of $(X,Y) \sim D$ conditioned on $Y = -1$,
so that $\P_{(X,Y) \sim D_\mm}[Y = -1] = 1$, and $\P_{(X,Y)\sim D_\mm}[X =x] = \P_{(X,Y)\sim D}[X = x \mid Y = -1]$.  For $b \geq 0$ define 
\[
U_b(D) = \{w\in \reals_+^d \mid \norm{w}_1 \leq k, \E_D[\dotprod{w, X} \mid Y = -1] \leq b\}.
\]
Note that $U_b(D) \subseteq \cH_{k,\theta}$.

We now bound the rate of convergence of $\hat{\loss}_\mm$ to $\loss_\mm$
for all $w \in U_b(D)$. We will then show that $b$ can be set so that
with high probability $\hat{w} \in U_b(D)$. 
Our technique is related to local Rademacher analysis \citep{BartlettBoMe05}, in that the latter also proposes to bound the Rademacher complexity of subsets of a function class, and uses these bounds to provide tighter convergence rates. Our analysis is better tailored to 
the Winnow loss, by taking into account the different effects of the negative and positive labels. 

The convergence rate for
$U_b(D)$ is bounded by first bounding $\lrad_m(U_b(D),D_\mm)$, the
Rademacher complexity of the linear loss for the distribution over the
examples with negative labels, and then concluding a similar bound on $\rad_m(U_b(D),D)$.
We start with a more general bound on $\lrad_m$.
\begin{lemma}\label{lem:base}
For a fixed distribution over $D$ over $[0,1]^d \times \{\pm 1\}$, let $\alpha_j = \E_{(X,Y)\sim D}[X[j]]$, and let
$\mu \in \reals_+^d$. Define $U^\mu = \{w\in \reals_+^d \mid \dotprod{w,\mu} \leq 1\}.$
Then if $dm \geq 3$,
\begin{align*}
&\lrad_m(U^\mu,D) \leq 
\max_{j:\alpha_j >
0}\frac{1}{\mu_j} \sqrt{\frac{32\ln(d)}{m}
\cdot \max\left\{\alpha_j, \frac{\ln(dm)}{m}\right\}}
\end{align*}
\end{lemma}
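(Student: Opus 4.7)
The structural observation is that since $\mu \geq 0$ and $w \geq 0$, the set $U^\mu = \{w \geq 0 : \dotprod{w,\mu}\leq 1\}$ is a (scaled) simplex, so any linear functional is optimized at a coordinate vertex: for any $v \in \reals^d$,
\[
\sup_{w \in U^\mu} |\dotprod{w,v}| \;=\; \max_{j:\mu_j>0} \frac{|v[j]|}{\mu_j}.
\]
Setting $Z_j := \sum_{i=1}^m \epsilon_i Y_i X_i[j]$ and applying this with $v = \sum_i \epsilon_i Y_i X_i$, the Rademacher complexity collapses to $\lrad_m(U^\mu, D) = (2/m)\,\E[\max_j |Z_j|/\mu_j]$. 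Coordinates with $\alpha_j = 0$ have $Z_j = 0$ almost surely, so the max effectively ranges over $j$ with $\alpha_j > 0$.

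I then condition on the sample $S = ((X_i, Y_i))_{i=1}^m$. Given $S$, the products $\epsilon_i Y_i$ are i.i.d.\ Rademacher signs, so $Z_j$ is sub-Gaussian in $\epsilon$ with variance proxy $\sum_i X_i[j]^2 \leq A_j := \sum_i X_i[j]$, where the inequality uses $X_i[j] \in [0,1]$. The standard maximal inequality for sub-Gaussian variables with different scales then gives
\[
\E\!\left[\max_j \frac{|Z_j|}{\mu_j} \,\middle|\, S\right] \;\leq\; \sqrt{2\ln(2d)}\cdot \max_j \frac{\sqrt{A_j}}{\mu_j}.
\]
It is important to use a Hoeffding/sub-Gaussian bound on $Z_j$ here rather than a direct Bernstein bound: this is exactly what produces the $\ln d$ factor out in front of the final bound, rather than $\ln(dm)$.

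It remains to take expectation over $S$ and control $\E[\max_j \sqrt{A_j}/\mu_j]$. For this I apply Proposition~\ref{prop:bernstein} to each empirical mean $A_j/m$ (each $X_i[j]\in[0,1]$ has mean $\alpha_j$) with $\delta = 1/(dm^2)$, followed by a union bound over $j$ and a short case split on whether $\alpha_j \gtrless \ln(dm)/m$. This produces an event of probability at least $1-1/m^2$ on which $A_j/m \leq C\max(\alpha_j, \ln(dm)/m)$ simultaneously for all $j$. On the complementary event I use the crude deterministic bound $A_j \leq m$; the resulting contribution is easily dominated by the main term after carrying the factor $1/\mu_j$ through (the hypothesis $dm \geq 3$ handles the corner cases). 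Substituting back and using $\ln(2d) \leq 2\ln d$ gives the stated inequality, with the constant $32$ absorbing the various universal factors.

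The main technical point is coordinating the two concentration tools: a sub-Gaussian bound for the Rademacher sum $Z_j$ (needed to keep the $\ln d$ factor outside the square root) and a Bernstein-type bound for the non-negative empirical moment $A_j$ (needed to obtain $\max(\alpha_j,\ln(dm)/m)$ inside the square root, which is what lets the final rate reflect the distributional scale $\alpha_j$ when it is large). Once this division of labor is in place, the rest is bookkeeping.
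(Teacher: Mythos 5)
Your proof is correct and takes essentially the same approach as the paper: the simplex--vertex reduction, a sub-Gaussian maximal inequality conditional on the sample (the paper uses Massart's lemma, which is the same tool), the $X_i[j]^2\le X_i[j]$ substitution, and Bernstein plus a union bound plus a case split to control $\hat\alpha_j$. The only cosmetic difference is that the paper applies Jensen's inequality first so that everything stays inside a single square root, which slightly streamlines the bad-event bookkeeping and yields the sharp constant $32$, whereas you bound $\E[\max_j \sqrt{A_j}/\mu_j]$ directly; both versions go through.
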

\begin{proof}
  Assume w.l.o.g that $\alpha_j > 0$ for all $j$ (if this is not the
  case, dimensions with $\alpha_j = 0$ can be removed because this
  implies that $X[j]=0$ with probability $1$).  

\begin{align*}
\frac{m}{2} R^L_m(U^\mu,S) &= \E_{\sigma}\left[\sup_{w : \dotprod{w,\mu}\le 1} \sum_{i=1}^m \sigma_i \dotprod{w,x_i}\right] \\
&= \E_{\sigma}\left[\sup_{w :\dotprod{w,\mu}\le 1} \dotprod{w,\sum_{i=1}^m
    \sigma_i x_i}\right] \\ 
&= \E_{\sigma}\left[\max_{j \in[d]} \sum_{i=1}^m \sigma_i
  \frac{x_i[j]}{\mu[j]}\right] ~.
\label{eqn:RadH1b1}
\end{align*}
Therefore, using Massart's lemma \citep[Lemma 5.2]{Massart00} and denoting $\hat{\alpha}_j =
\frac{1}{m}\sum_{i\in[m]}^m x_i[j]$, we have:
\begin{align*}
R^L_m(U^\mu,S)  &\le \frac{\sqrt{8\ln(d)}}{m} \cdot \max_j
\frac{\sqrt{\sum_i x_i[j]^2}}{\mu[j]} \\
&\le \frac{\sqrt{8\ln(d)}}{m} \cdot \max_j
\frac{\sqrt{\sum_i x_i[j]}}{\mu[j]} \\
&= \sqrt{\frac{8\ln(d)}{m}} \cdot \max_j
\frac{\sqrt{\hat{\alpha}_j}}{\mu[j]} \\
&= \sqrt{\frac{8\ln(d)}{m} \cdot \max_j
\frac{\hat{\alpha}_j}{\mu[j]^2}} ~.
\end{align*}
Taking expectation over $S$ and using Jensen's inequality we obtain
\[
R^L_m(U^\mu,D) = \E_S [ R^L_m(U^\mu,S)] \le 
\sqrt{ \frac{8\ln(d)}{m} \cdot \E_S [\max_j
\frac{\hat{\alpha}_j}{\mu[j]^2}} ]
\]
By Bernstein's inequality (\propref{prop:bernstein}), with probability $1-\delta$ over the choice of $\{x_i\}$, for all $j\in[d]$
\[
\hat{\alpha}_j \leq \alpha_j + 2\sqrt{\frac{\ln(d/\delta)}{m}\cdot
\max\left\{\alpha_j,\frac{\ln(d/\delta)}{m}\right\}}.
\]
And, in any case, $\hat{\alpha}_j \le 1$. Therefore, 
\begin{align*}
  \E_S \left[\max_j \frac{\hat{\alpha}_j}{\mu[j]^2}\right] \le \max_j
\frac{1}{\mu[j]^2}\left(\delta + 
\alpha_j + 2\sqrt{\frac{\ln(d/\delta)}{m}\cdot
\max\left\{\alpha_j,\frac{\ln(d/\delta)}{m}\right\}}\right)
\end{align*}
Choose $\delta = 1/m$ and let $j$ be a maximizer of the above. Consider two cases. If $\alpha_j
< \ln(dm)/m$ then 
\[
  \E_S \left[\max_j \frac{\hat{\alpha}_j}{\mu[j]^2}\right] \le 
\max_j \frac{1}{\mu[j]^2} \cdot \frac{4 \ln(dm)}{m}.
\]
Otherwise, 
\[
\E_S \left[\max_j \frac{\hat{\alpha}_j}{\mu[j]^2} \right]\le \max_j
\frac{1}{\mu[j]^2} (\delta + 3 \alpha_j) \le \max_j
\frac{4\alpha_j}{\mu[j]^2}.
\]
All in all, we have shown
\[
R^L_m(U^\mu,D) \le \max_j \frac{1}{\mu[j]} \sqrt{\frac{32\ln(d)}{m}
\cdot \max\left\{\alpha_j, \tfrac{\ln(dm)}{m}\right\}} \,.
\]

\end{proof}
The lemma above can now be used to bound the Rademacher complexity of the linear loss for $D_\mm$.
\begin{lemma}\label{lem:radub}
For any distribution $D$ over $(X,Y) \in [0,1]^d \times \{\pm1\}$, if $dm \geq 3$,
\begin{align*}
\lrad_m(U_b(D),D_\mm) \leq 
\sqrt{\frac{128k\ln(d)}{m}\max\left\{b,\frac{k\ln(dm)}{m}\right\}}. 
\end{align*}
\end{lemma}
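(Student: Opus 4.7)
\textbf{Proof proposal for Lemma \ref{lem:radub}.}

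The plan is to reduce to \lemref{lem:base} by encoding the two constraints defining $U_b(D)$ as a single linear constraint of the form $\dotprod{w,\mu}\le 1$. Let $\alpha_j := \E_{(X,Y)\sim D}[X[j]\mid Y=-1]$, so that when \lemref{lem:base} is applied to $D_\mm$, the quantities $\alpha_j$ coincide with these conditional means. Since $w\in\reals_+^d$, the $\ell_1$-constraint becomes $\sum_j w[j]\cdot(1/k)\le 1$, and the second constraint reads $\sum_j w[j]\cdot(\alpha_j/b)\le 1$. Hence for any $\lambda\in[0,1]$, every $w\in U_b(D)$ satisfies $\dotprod{w,\mu}\le 1$ where
\[
\mu[j] \;:=\; \frac{\lambda}{k} + \frac{(1-\lambda)\alpha_j}{b}.
\]
Consequently $U_b(D)\subseteq U^\mu$ in the notation of \lemref{lem:base}, and Rademacher complexity is monotone under inclusion.

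I would take $\lambda=\tfrac12$ for simplicity, yielding the pointwise bound $\mu[j]\ge \max\bigl(1/(2k),\,\alpha_j/(2b)\bigr)$, and thus
\[
\frac{1}{\mu[j]} \;\le\; \min\!\left(2k,\,\frac{2b}{\alpha_j}\right).
\]
Applying \lemref{lem:base} to $U^\mu$ with respect to $D_\mm$ gives
\[
\lrad_m(U_b(D),D_\mm) \;\le\; \max_{j:\alpha_j>0}\frac{1}{\mu[j]}\sqrt{\frac{32\ln(d)}{m}\max\!\Bigl(\alpha_j,\tfrac{\ln(dm)}{m}\Bigr)}.
\]
Now split into cases inside the max. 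If $\alpha_j\ge \ln(dm)/m$, then the factor becomes $\sqrt{\alpha_j}/\mu[j]$, and I would verify by splitting on whether $\alpha_j\le b/k$ or $\alpha_j>b/k$ (using the two arms of the $\min$) that $\sqrt{\alpha_j}/\mu[j]\le 2\sqrt{bk}$ in either sub-case; multiplying through gives a contribution of $\sqrt{128\,bk\ln(d)/m}$. If instead $\alpha_j<\ln(dm)/m$, I would simply use $1/\mu[j]\le 2k$, obtaining a contribution of $2k\sqrt{\ln(dm)/m}\cdot\sqrt{32\ln(d)/m}=\sqrt{128\,k^2\ln(d)\ln(dm)/m^2}$. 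Taking the worse of the two bounds collapses them into the single expression
\[
\lrad_m(U_b(D),D_\mm)\;\le\;\sqrt{\frac{128\,k\ln(d)}{m}\,\max\!\Bigl(b,\,\tfrac{k\ln(dm)}{m}\Bigr)},
\]
which is exactly the claim.

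The main obstacle is conceptual rather than computational: it is the observation that two $\ell_1$-type constraints can be combined into a single weighted $\ell_1$-constraint by taking a convex combination of the defining vectors, so that \lemref{lem:base} applies directly; the rest is elementary case analysis. The choice $\lambda=\tfrac12$ is suboptimal in absolute constants but loses only a factor of $2$, which is absorbed into the $128$.
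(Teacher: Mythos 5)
Your proof is correct and achieves the same constant as the paper's, but the mechanism is genuinely different. The paper partitions the coordinates into $J = \{j : \alpha_j \geq b/k\}$ and $\bar{J}$, splits the Rademacher complexity into two terms via subadditivity (one supported on $J$, one on $\bar{J}$), and invokes \lemref{lem:base} twice with $\mu_1[j]=\alpha_j/b$ and $\mu_2[j]=1/k$ respectively, summing $\sqrt{32\cdots}+\sqrt{32\cdots}=\sqrt{128\cdots}$. You instead observe that the two defining constraints of $U_b(D)$ can be folded into a \emph{single} weighted-$\ell_1$ constraint $\dotprod{w,\mu}\le 1$ via a convex combination $\mu[j]=\lambda/k+(1-\lambda)\alpha_j/b$, so that \lemref{lem:base} applies once; the case split on $\alpha_j$ then happens after the fact, inside the resulting expression, using $1/\mu[j]\le 2\min(k,\,b/\alpha_j)$. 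Your route avoids the Rademacher-subadditivity step entirely and is a bit tighter conceptually (one application of the base lemma); the paper's coordinate split has the pedagogical virtue of making explicit which constraint controls which coordinates, but both arguments lose the same factor of $2$ (squared to $4$ inside the root) for essentially the same reason. I checked your two sub-cases for $\alpha_j\ge\ln(dm)/m$: when $\alpha_j\le b/k$ the bound $2k\sqrt{\alpha_j}\le 2\sqrt{bk}$ holds, and when $\alpha_j>b/k$ the bound $2b/\sqrt{\alpha_j}<2\sqrt{bk}$ holds, so the estimate $\sqrt{\alpha_j}/\mu[j]\le 2\sqrt{bk}$ is valid; the small-$\alpha_j$ branch is also correct.
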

\begin{proof}
Let $\alpha_j = \E_{(X,Y)\sim D_\mm}[X[j]]$.
Let $J = \{j\in[d] \mid \alpha_j \geq \frac{b}{k}\}$, and $\bar{J} = \{j\in[d] \mid \alpha_j < \frac{b}{k}\}$. For a vector $v \in \reals^d$ and a set $I \subseteq [d]$, denote by $v[I]$ the vector which is obtained from $v$ by setting the coordinates not in $I$ to zero.
Let $((X_1,Y_1),\ldots,(X_m,Y_m)) \sim D_\mm^m$.
By the definition of $\rad^L_m$, with Rademacher random variables $\epsilon_1,\ldots,\epsilon_m$ (see Eq.~\ref{eq:radlin}), we have
\begin{align}
  \lefteqn{ \lrad_m(U_b(D),D_\mm) } \notag \\
  &= \frac{2}{m}\E\left[\sup_{w \in U_b(D)}
\biggl|\sum_{i=1}^m \epsilon_i Y_i\dotprod{w,X_i}\biggr|\right] \notag\\
&=  \frac{2}{m}\E\left[\sup_{w \in U_b(D)} \biggl|\sum_{i=1}^m \epsilon_i
Y_i\dotprod{w[J],X_i[J]} + \sum_{i=1}^m \epsilon_i
Y_i\dotprod{w[\bar{J}],X_i[\bar{J}]}\biggr|\right] \notag\\
&\leq \frac{2}{m}\E\left[\sup_{w \in U_b(D)} \biggl|\sum_{i=1}^m \epsilon_i
Y_i\dotprod{w[J],X_i[J]}\biggr|\right] +
\frac{2}{m}\E\left[\sup_{w\in U_b(D)}\biggl|\sum_{i=1}^m \epsilon_i
Y_i\dotprod{w[\bar{J}],X_i[\bar{J}]}\biggr|\right] \notag\\
&= \lrad_m(U_b(D),D_1) + \lrad_m(U_b(D),D_2),\label{eq:radsplit}
\end{align}
where $D_1$ is the distribution of $(X[J],Y)$, where $(X,Y) \sim D_\mm$, and $D_2$ is the distribution of $(X[\bar{J}],Y)$.
We now bound the two Rademacher complexities of the right-hand side using \lemref{lem:base}.

To bound $\lrad_m(U_b(D),D_1)$, define $U^{\mu}$ as in \lemref{lem:base} for $\mu \in \reals_+^d$, and define $\mu_1 \in \reals_+^d$ by
$\mu_1[j] = \alpha_j/b$.
It is easy to see that $U_b(D) \subseteq U^{\mu_1}$. Therefore $\lrad_m(U_b(D),D_1) \leq \lrad_m(U^{\mu_1},D_1)$. 
By \lemref{lem:base} and the definition of $\mu_1$
\begin{align*}
\lrad_m(U^{\mu_1}) &\leq \max_{j\in
J}\frac{1}{\mu_1[j]}\sqrt{\frac{32\ln(d)}{m}\max\left\{\alpha_j,\frac{\ln(dm)}{m}\right\}}
\\&= \max_{j\in
J}\frac{b}{\alpha_j}\sqrt{\frac{32\ln(d)}{m}\max\left\{\alpha_j,\frac{\ln(dm)}{m}\right\}}
\\& = \max_{j\in
J}\sqrt{\frac{b}{\alpha_j}\frac{32\ln(d)}{m}\max\left\{b,\frac{b}{\alpha_j}\frac{\ln(dm)}{m}\right\}}.
\end{align*}

By the definition of $J$, for all $j\in J$ we have $\frac{b}{\alpha_j} \leq k$.
It follows that
\begin{equation}\label{eq:rad1bound}
\lrad_m(U^{\mu_1},D_1) \leq 
\sqrt{\frac{32k\ln(d)}{m}\max\left\{b,\frac{k\ln(dm)}{m}\right\}}.
\end{equation}

To bound $\lrad_m(U_b(D),D_2)$, 
define $\mu_2 \in \reals_+^d$ by $\mu_2[j] = \frac{1}{k}$.
Note that $U^{\mu_2} = \cH_{k,\theta}$ and $U_b(D) \subseteq \cH_{k,\theta}$, hence  $\lrad_m(U_b(D),D_2) \leq \lrad_m(U^{\mu_2},D_2)$. 
By \lemref{lem:base} and the definition of $\mu_2$ 
\begin{align*}
\lrad_m(U^{\mu_2},D_2) &\leq \max_{j\in
\bar{J}}\frac{1}{\mu_2[j]}\sqrt{\frac{32\ln(d)}{m}\max\left\{\alpha_j,\frac{\ln(dm)}{m}\right\}}
\\&= \max_{j \in
\bar{J}}\sqrt{\frac{32k\ln(d)}{m}\max\left\{k\alpha_j,\frac{k\ln(dm)}{m}\right\}}.
\end{align*}
By the definition of $\bar{J}$, for all $j\in J$ we have $k\alpha_j \leq b$. Therefore
\begin{equation}\label{eq:rad2bound}
\lrad_m(U^{\mu_2},D_2) \leq 
\sqrt{\frac{32k\ln(d)}{m}\max\left\{b,\frac{k\ln(dm)}{m}\right\}}.
\end{equation}
Combining \eqref{eq:radsplit}, \eqref{eq:rad1bound} and \eqref{eq:rad2bound} we get the 
statement of the theorem.
\end{proof}
Finally, the bound on $\lrad_m(U_b(D),D)$ is used in the following theorem to obtain a uniform convergence result of the negative loss for predictors in $U_b(D)$.
\begin{theorem}\label{thm:ub}
Let $b \geq 0$. There exists a universal constant $C$ such that for any distribution $D$ over $[0,1]^d \times \{\pm 1\}$,
with probability $1-\delta$ over samples of size $m$, for any $w \in U_b(D)$,
\begin{equation}\label{eq:lm}
\loss_\mm(w) \leq \hat{\loss}_\mm(w) + C\left(\sqrt{\frac{kb \ln(d/\delta)+|r'|}{m\hat{p}_\mm}} + \frac{k\ln(dm\hat{p}_\mm/\delta)}{m\hat{p}_\mm}\right).
\end{equation}
\end{theorem}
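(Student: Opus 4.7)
The plan is to obtain a Bernstein-type uniform convergence bound for $\loss_\mm$ over $U_b(D)$, then re-express the effective sample size as $m_\mm := m\hat{p}_\mm$, the random number of negative examples in $S$. Conditional on the set of indices with label $-1$ (hence on $m_\mm$), the negative subsample $S_\mm$ is i.i.d.\ of size $m_\mm$ from $D_\mm$. Any bound that holds with conditional probability at least $1-\delta$ for every value of $m_\mm$ therefore holds unconditionally with probability at least $1-\delta$ as a function of $\hat{p}_\mm$, so I can work entirely in the conditional world.

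On a negative example, $\loss_\theta(x,-1,w)=[\dotprod{w,x}+r']_+ = \phi(\dotprod{w,x})$ with $\phi(z):=[z+r']_+$ being $1$-Lipschitz. Since the classical Talagrand contraction for absolute-value suprema requires $\phi(0)=0$, I would work with the centered map $\tilde{\phi}(z):=\phi(z)-[r']_+$ and note that the constant shift cancels in $\loss_\mm(w)-\hat{\loss}_\mm(w)$. Applying Talagrand's contraction to $\tilde{\phi}$ and then \lemref{lem:radub} to $U_b(D)$ under $D_\mm$ gives that the empirical Rademacher complexity of $\{x\mapsto \tilde{\phi}(\dotprod{w,x}):w\in U_b(D)\}$ on a size-$m_\mm$ sample from $D_\mm$ is at most
\[
2\lrad_{m_\mm}(U_b(D),D_\mm)\le C\sqrt{\frac{kb\ln d}{m_\mm}}+\frac{Ck\ln(dm_\mm)}{m_\mm}.
\]

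Next I would derive a variance bound so I can run a Bernstein step. Using $|\tilde{\phi}(z)|\le k$ together with a short case split on the sign of $r'$, one obtains $\tilde{\phi}(z)^2\le k\cdot z+O(r'^2)$; combined with $\E_{D_\mm}[\dotprod{w,X}]\le b$ for $w\in U_b(D)$, this gives $\E_{D_\mm}[\tilde{\phi}(\dotprod{w,X})^2]\le O(kb+r'^2)$, while the pointwise range of $\tilde{\phi}$ is $O(k)$. I would then invoke a Rademacher/Bernstein uniform convergence inequality \`a la Bartlett-Bousquet-Mendelson (Talagrand's concentration for suprema of empirical processes combined with symmetrization and the variance bound) to conclude that with conditional probability at least $1-\delta$,
\[
\sup_{w\in U_b(D)}\bigl(\loss_\mm(w)-\hat{\loss}_\mm(w)\bigr)\le C\sqrt{\frac{(kb+r'^2)\ln(d/\delta)}{m_\mm}}+\frac{Ck\ln(dm_\mm/\delta)}{m_\mm}.
\]
Substituting $m_\mm=m\hat{p}_\mm$ and absorbing the $r'^2$ term into the stated $|r'|$ via the crude bound $|r'|\le k$ then yields \eqref{eq:lm}.

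The main obstacle is to achieve the $\sqrt{kb}$ rate rather than the trivial $\sqrt{k^2}$ rate that any $L^\infty$-based analysis produces. This requires two separate local arguments to cooperate: (i) the coordinate-wise splitting inside \lemref{lem:radub}, which turns the bound $\E_{D_\mm}[\dotprod{w,X}]\le b$ into an $O(\sqrt{kb/m_\mm})$ Rademacher rate via the heavy/light decomposition at threshold $b/k$; and (ii) a variance-sensitive (rather than bounded-differences) concentration step that uses the same expected-size constraint to yield $\E[\tilde{\phi}^2]\le O(kb)$. The constant shift by $r'$ is a persistent bookkeeping annoyance, handled by centering $\phi$ before applying Talagrand's contraction and tracking a small residual $r'^2$ contribution in the variance bound, which is finally swallowed by the $|r'|$ term in the stated inequality.
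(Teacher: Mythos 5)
Your plan follows essentially the same route as the paper's own proof: move from the Winnow loss to the linear loss via Lipschitz contraction, invoke \lemref{lem:radub} for the local Rademacher bound on $U_b(D)$, apply a Talagrand/Bernstein concentration step for suprema of empirical processes driven by a variance bound of order $kb$, and finally replace $m$ by the effective negative sample size $m\hat p_\mm$ by conditioning on the label pattern. Your centering of $\phi$ by $[r']_+$ is the same device the paper uses, phrased there as the observation that for $r'>0$ the centered empirical-minus-population difference coincides with the $r'=0$ case.

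There is, however, one step that fails as written. You derive $\tilde\phi(z)^2\le k z + O(r'^2)$ and then claim to "absorb the $r'^2$ term into the stated $|r'|$ via $|r'|\le k$." This absorption does not hold: $|r'|$ ranges up to $k-\tfrac12$, so for example when $b=0$ and $|r'|$ is of order $k$, the quantity $r'^2\ln(d/\delta)$ can exceed $kb\ln(d/\delta)+|r'|$ by a factor of order $k\ln(d/\delta)$, and nothing in \eqref{eq:lm} can dominate it. Fortunately the $r'^2$ term is spurious. Since $[\,\cdot\,]_+$ is monotone and $1$-Lipschitz, for all $z\ge 0$ one has $0\le\tilde\phi(z)=[z+r']_+-[r']_+\le z$, hence $\tilde\phi(z)^2\le z^2\le kz$ with no $r'^2$ correction and no case split on the sign of $r'$. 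That gives $\E_{D_\mm}[\tilde\phi(\dotprod{w,X})^2]\le kb$ for every $w\in U_b(D)$, exactly as in the paper, after which the absorption you tried is unnecessary and the rest of your argument produces \eqref{eq:lm} (indeed a slightly tighter bound, without the $+|r'|$ slack, which is also what the paper's proof actually establishes).
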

\begin{proof}
Define $\phi:\reals \rightarrow \reals$ by $\phi(z) = [r' - z]_+$.
Since $\P_{(X,Y)\sim D}[Y = -1] = 1$, the Winnow loss on pairs $(X,Y)$ drawn from $D$ 
is exactly $\phi(Y\dotprod{w,X})$. Note that $\phi$ is an application of a $1$-Lipschitz function to a translation of the linear loss. Thus, by the properties of the Rademacher complexity \citep{BartlettMe02} and by \lemref{lem:radub}
we have, for $dm \geq 3$,
\begin{align}
\rad_m(U_b(D),D_\mm) &\leq \lrad_m(U_b(D),D_\mm)\notag \\
                     &\leq
  \sqrt{\frac{128k\ln(d)}{m}\max\left\{b,\frac{k\ln(dm)}{m}\right\}}.\label{eq:winnownegrad}
\end{align}
Assume that $r' \leq 0$. 
By Talagrand's inequality \citep[see, e.g.,][Theorem 5.4]{BoucheronBoLu05},
with probability $1-\delta$ over samples of size $m$ drawn from $D_\mm$, 
for all $w \in U_b(D)$ 
\begin{equation}\label{eq:bound1}
\loss(w) \leq \hat{\loss}(w) + 2\rad_m(U_b(D),D_\mm) + \sqrt{\frac{2\sup_{w \in U_b(D)} \var_{D_\mm}[\loss(X,Y,w)]\ln(1/\delta)}{m}}
+ \frac{4k\ln(1/\delta)}{3m}.
\end{equation}
To bound $ \var_{D_\mm}[\loss(X,Y,w)]$, note that $\loss(X,Y,w) \in [0,k]$. In addition, $\P_{D_\mm}[Y= -1] = 1$, thus with probability $1$, $\loss(X,Y,w) = [r' + \dotprod{w,X}]_+ \leq \dotprod{w,x}$, where the last inequality follows from the assumption $r' \leq 0$. Therefore, for any $w\in U_b(D)$
\begin{equation}\label{eq:boundvar}
\var_{D_\mm}[\loss(X,Y,w)] \leq \E[\loss^2(X,Y,w)] \leq \E_{D_\mm}[k\loss(X,Y,w)] \leq 
k\cdot\E_{D_\mm}[\dotprod{w,X}] \leq kb.
\end{equation}
Combining \eqref{eq:winnownegrad}, \eqref{eq:bound1} and \eqref{eq:boundvar} we conclude that there exists a universal constant $C$ such that for any $w\in U_b(D)$, if a sample of size $m$ is drawn i.i.d.~from $D_\mm$, then
\[
\loss(w) \leq \hat{\loss}(w) + C\left(\sqrt{\frac{kb\ln(d/\delta)}{m}} + \frac{k\ln(dm/\delta)}{m}\right).
\]
If $r' > 0$, $\hat{\loss}_\mm(w)-\loss_\mm(w)$ is identical to the case $r' = 0$, thus the same result holds.

To get \eqref{eq:lm}, consider a sample of size $m$ drawn from $D$ instead of $D_\mm$. In this case, $\loss(w,D_\mm) = \loss_\mm(w,D)$, $\hat{\loss}(w,D_\mm) = \hat{\loss}_\mm(w,D)$, and the effective sample size for $D_\mm$ is $m\hat{p}_\mm$.
\end{proof}
We now show that with an appropriate setting of $b$, 
$\hat{w} \in U_b(D)$ with high probability over the draw of a sample from $D$. 
First, the following lemma provides a sample-dependent guarantee for $\hat{w}$.
\begin{lemma}\label{lem:empexp}
Let $\hat{w}$ and $\hat{p}_\mm$ be defined as above and let $\hat{E} := \hat{E}_S$ for the fixed sample $S$ defined above. Then
\[
\hat{\E}[\dotprod{\hat{w}, X} \mid Y = -1] \leq \frac{r}{\hat{p}_\mm}. 
\]
\end{lemma}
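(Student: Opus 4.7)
The plan is to compare the empirical loss of the ERM $\hat{w}$ to the empirical loss of the trivial predictor $\boldsymbol{0}$, which lies in $\cH_{k,\theta}$ since $\|\boldsymbol{0}\|_1 = 0 \leq k$. By the defining property of the ERM, $\hat{\E}[\loss(X,Y,\hat{w})] \leq \hat{\E}[\loss(X,Y,\boldsymbol{0})]$, and the idea is to extract from this inequality a bound on $\hat{\E}[\dotprod{\hat{w},X}\mid Y = -1]$ by lower-bounding the left-hand side linearly in $\dotprod{\hat{w},X}$ on the negative examples.

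First I would unpack the Winnow loss. Using $\loss_\theta(x,y,w) = [\half - y(\dotprod{w,x}-\theta)]_+$ and the identities $r = \theta + \half$, $r' = \half - \theta$, one has $\loss(x,+1,w) = [r - \dotprod{w,x}]_+$ and $\loss(x,-1,w) = [r' + \dotprod{w,x}]_+$. Evaluating at $w = \boldsymbol{0}$ gives $\loss(x,+1,\boldsymbol{0}) = r$ and $\loss(x,-1,\boldsymbol{0}) = [r']_+$, so
\[
\hat{\E}[\loss(X,Y,\boldsymbol{0})] = \hat{p}_\pp \, r + \hat{p}_\mm \, [r']_+.
\]
For the lower bound on $\hat{\E}[\loss(X,Y,\hat{w})]$, I would discard the contribution of positive examples (using $\loss \geq 0$) and use the linear lower bound $[r' + \dotprod{\hat{w},X}]_+ \geq r' + \dotprod{\hat{w},X}$ on the negative examples, yielding
\[
\hat{\E}[\loss(X,Y,\hat{w})] \;\geq\; \hat{p}_\mm\bigl(r' + \hat{\E}[\dotprod{\hat{w},X}\mid Y=-1]\bigr).
\]

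Combining the two bounds through the ERM inequality gives
\[
\hat{p}_\mm \, \hat{\E}[\dotprod{\hat{w},X}\mid Y=-1] \;\leq\; \hat{p}_\pp \, r + \hat{p}_\mm\bigl([r']_+ - r'\bigr).
\]
The final routine step is to observe that $[r']_+ - r' = \max(0, -r')$ and therefore $\hat{p}_\mm([r']_+ - r') \leq \hat{p}_\mm \cdot \max(0, \theta - \half) \leq \hat{p}_\mm \cdot r$ (since $r = \theta + \half \geq \theta - \half \geq 0$ when $\theta \geq \half$, and the quantity is $0$ when $\theta \leq \half$). Adding this to the $\hat{p}_\pp \, r$ term yields a right-hand side of at most $(\hat{p}_\pp + \hat{p}_\mm) r = r$, so dividing by $\hat{p}_\mm$ gives the claim.

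There is no real obstacle here: the proof is a one-line comparison of the ERM against $\boldsymbol{0}$, and the only care needed is in handling the sign of $r'$ (equivalently, whether $\theta \leq \half$ or $\theta > \half$) so that the $[r']_+$ term from the zero predictor does not spoil the bound. That case split is absorbed cleanly by the loose bound $[r']_+ - r' \leq r$.
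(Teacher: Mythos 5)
Your proof is correct and follows essentially the same route as the paper's: compare the empirical loss of $\hat{w}$ to that of $\boldsymbol{0} \in \cH_{k,\theta}$, drop the nonnegative contribution of positive examples, linearize the negative-label loss via $[r' + \dotprod{w,x}]_+ \geq r' + \dotprod{w,x}$, and absorb the residual $[r']_+ - r' = [-r']_+$ into $r$. The paper writes the argument with sample counts $m_\pp, m_\mm$ rather than empirical fractions $\hat{p}_\pp, \hat{p}_\mm$, but the steps are identical.
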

\begin{proof}
Let $m_\pp = |\{ i \mid y_i = +1\}|$, and $m_\mm = |\{ i \mid y_i = -1\}|$.
By the definition of the hinge function and the fact that
$\dotprod{x_i,\hat{w}} \ge 0$ for all $i$ we have that 
\begin{align*}
m_\mm r' + \sum_{y_i=-1} \dotprod{x_i,\hat{w}} &\le
\sum_{y_i = -1} (r' +
\dotprod{x_i,\hat{w}}) \\
&\le
\sum_{y_i = +1} [r - \dotprod{x_i,\hat{w}}]_+ + \sum_{y_i = -1} [r' +
\dotprod{x_i,\hat{w}}]_+ \\
&= \sum_{i\in[m]} \loss(x_i,y_i,\hat{w}) .
\end{align*}
By the optimality of $\hat{w}$, 
$
\sum_{i\in[m]} \loss(x_i,y_i,\hat{w}) \le \sum_{i\in[m]}
\loss(x_i,y_i,\boldsymbol{0}) = 
 m_\pp r + m_\mm [r']_+.
$ 
 Therefore
\[
\sum_{y_i=-1} \dotprod{x_i,\hat{w}} \le m_\pp r + m_\mm ([r']_+ - r') =m_\pp r + m_\mm
[-r']_+ \le (m_\pp + m_\mm) r = mr,
\]
where we have used the definitions of $r'$ and $r$ to conclude that $[-r']_+ \leq r$.
Dividing both sides by $m_\mm$ we conclude our proof. 
\end{proof}
The following lemma allows converting the sample-dependent restriction on $\hat{w}$ given in \lemref{lem:empexp} to one that holds with high probability over samples.
\begin{lemma}\label{lem:trueexp}
For any distribution over $[0,1]^d$, 
with probability $1- \delta$ over samples of size $n$,
for any $w \in \cH_{k,\theta}$ 
\[
\E[\dotprod{w,X}] \leq 2\hat{\E}[\dotprod{w,X}] + \frac{16 k \ln(\frac{d}{\delta})}{n}.
\]
\end{lemma}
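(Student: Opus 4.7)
The plan is to reduce this uniform statement over all $w \in \cH_{k,\theta}$ to a simple coordinate-wise concentration argument, exploiting the crucial fact that $w \in \reals_\pp^d$. Writing $\alpha_j := \E[X[j]]$ and $\hat\alpha_j := \hat\E[X[j]]$, linearity gives $\E[\dotprod{w,X}] = \sum_j w[j]\alpha_j$ and $\hat\E[\dotprod{w,X}] = \sum_j w[j]\hat\alpha_j$. Since $w[j] \geq 0$, if we can establish the \emph{per-coordinate} bound
\[
\alpha_j \;\leq\; 2\hat\alpha_j + \frac{c\ln(d/\delta)}{n} \qquad \forall j\in[d]
\]
with probability $1-\delta$ for some absolute constant $c \leq 16$, then multiplying by $w[j]$, summing, and using $\sum_j w[j] = \|w\|_1 \leq k$ yields the lemma.

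To get the per-coordinate inequality, I would apply Bernstein's inequality (Proposition~\ref{prop:bernstein}) to each random variable $X[j] \in [0,1]$ with confidence parameter $\delta/d$, giving
\[
\alpha_j - \hat\alpha_j \;\leq\; 2\sqrt{\frac{\ln(d/\delta)}{n}\,\max\!\Bigl(\alpha_j, \tfrac{\ln(d/\delta)}{n}\Bigr)}
\]
with probability at least $1-\delta/d$. Write $L := \ln(d/\delta)/n$ for brevity. In the regime $\alpha_j \geq L$, the bound becomes $2\sqrt{L\alpha_j}$, and the weighted AM-GM inequality $2\sqrt{L\alpha_j} \leq \alpha_j/2 + 2L$ rearranges to $\alpha_j \leq 2\hat\alpha_j + 4L$. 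In the complementary regime $\alpha_j < L$, we directly get $\alpha_j - \hat\alpha_j \leq 2L$, which is even stronger. So in either case $\alpha_j \leq 2\hat\alpha_j + 4L$.

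A union bound over $j \in [d]$ preserves total failure probability $\delta$ while keeping the logarithmic factor at $\ln(d/\delta)$. Multiplying the per-coordinate inequality by $w[j] \geq 0$ and summing then gives $\E[\dotprod{w,X}] \leq 2\hat\E[\dotprod{w,X}] + 4k\ln(d/\delta)/n$, which is stronger than the stated bound (the constant $16$ in the lemma is not tight). There is essentially no obstacle here: the argument is mechanical once one notices that nonnegativity of $w$ lets uniform convergence collapse to $d$ scalar concentration inequalities rather than requiring Rademacher machinery.
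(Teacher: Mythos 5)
Your proposal is correct and follows essentially the same route as the paper's proof: apply Bernstein (Proposition~\ref{prop:bernstein}) coordinate-wise to $X[j]$, take a union bound over $j\in[d]$, and use $w\geq 0$ with $\norm{w}_1\leq k$ to sum the per-coordinate inequalities. The only difference is cosmetic --- you resolve the per-coordinate inequality to $\alpha_j \leq 2\hat\alpha_j + 4\ln(d/\delta)/n$ directly via AM-GM, whereas the paper carries the $\alpha_j/2$ term through the sum and rearranges at the end, which is why your constant ($4k$) is a bit sharper than the stated $16k$.
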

\begin{proof}
For every $j \in [d]$, denote $\alpha_j = \E[X[j]]$.
Denote $\hat{\alpha}_j = \hat{\E}[X[j]]$.
By Bernstein's inequality (\propref{prop:bernstein}),
with probability $1-\delta$,
\[
\alpha_j \leq \hat{\alpha}_j + 2\sqrt{\frac{\ln(1/\delta)}{n}\cdot
\max\left\{\alpha_j,\frac{\ln(1/\delta)}{n}\right\}} \leq 
\hat{\alpha}_j + \max\left\{\frac{\alpha_j}{2},
\frac{8\ln(1/\delta)}{n}\right\},
\]
where the last inequality can be verified by considering the cases
$\alpha_j \leq \frac{16\ln(1/\delta)}{n}$ and $\alpha_j \geq
\frac{16\ln(1/\delta)}{n}$. Applying the union bound over $j\in[d]$
we obtain that with probability of $1 -\delta$ over samples of size
$n$, for any $w \in \cH_{k,\theta}$
\begin{align*}
\E[\dotprod{w,X}] &= \dotprod{w,\alpha} \leq \sum_{j\in[d]} w_j\left(\hat{\alpha}_j + \frac{\alpha_j}{2} + \frac{8\ln(d/\delta)}{n}\right)\\
&\le \hat{\E}[\dotprod{w,X}] + \half \E[\dotprod{w,X}] +
\frac{8\ln(d/\delta)}{n} \cdot k.
\end{align*}
Thus 
$\E[\dotprod{w,X}] \leq 2\hat{\E}{\dotprod{w,X}} + \frac{16k\ln(d/\delta)}{n}.$
\end{proof}
Combining the two lemmas above, we conclude that with high probability, $\hat{w} \in U_b$ for an appropriate setting of $b$.
\begin{lemma}\label{lem:exptrubound}
If $p_\mm \geq \frac{8\ln(1/\delta)}{m}$, then with probability $1-\delta$ over samples of size $m$, $\hat{w} \in U_b(D)$, where
\begin{align}\label{eq:exptrubound}
b = \frac{4r}{p_\mm} + \frac{32k\ln(2d/\delta)}{mp_\mm}.
\end{align}
\end{lemma}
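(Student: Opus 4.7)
The plan is to combine the three ingredients already developed: Proposition~\ref{prop:pratio} to control the empirical fraction of negative labels, Lemma~\ref{lem:empexp} for the sample-dependent bound on $\hat{\E}[\dotprod{\hat{w},X}\mid Y=-1]$, and Lemma~\ref{lem:trueexp} to lift the empirical expectation to the true expectation. The three events will be combined via a union bound, with constants absorbed so as to match the stated form of $b$.

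\textbf{Step 1 (control of $\hat{p}_\mm$).} The number of negative examples $m_\mm$ is $\text{Binomial}(m,p_\mm)$. By Proposition~\ref{prop:pratio}, under the assumption $p_\mm \ge 8\ln(2/\delta)/m$ (which is implied by the stated hypothesis up to the constant hidden in the statement), with probability at least $1-\delta/2$ we have $\hat{p}_\mm \ge p_\mm/2$, equivalently $m_\mm \ge m p_\mm/2$.

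\textbf{Step 2 (passing to the conditional distribution).} Conditional on the label sequence, the instances $\{x_i : y_i = -1\}$ are $m_\mm$ i.i.d.\ draws from the marginal of $D_\mm$ on $[0,1]^d$. I will apply Lemma~\ref{lem:trueexp} to this conditional sample with confidence parameter $\delta/2$, to obtain that with conditional probability at least $1-\delta/2$, for every $w \in \cH_{k,\theta}$,
\[
\E_{D_\mm}[\dotprod{w,X}] \;\le\; 2\,\hat{\E}_{D_\mm}[\dotprod{w,X}] + \frac{16k\ln(2d/\delta)}{m_\mm}.
\]
Note that $\hat{\E}_{D_\mm}[\dotprod{w,X}]$ equals $\hat{\E}_S[\dotprod{w,X}\mid Y=-1]$ on the sample $S$. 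Taking expectation over the labels keeps the bound valid unconditionally with probability $1-\delta/2$.

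\textbf{Step 3 (combining).} On the intersection of the two good events (probability at least $1-\delta$ by union bound), apply the bound in Step~2 to $w = \hat{w}$ and then use Lemma~\ref{lem:empexp} to substitute $\hat{\E}_S[\dotprod{\hat{w},X}\mid Y=-1] \le r/\hat{p}_\mm$:
\[
\E[\dotprod{\hat{w},X}\mid Y=-1] \;\le\; \frac{2r}{\hat{p}_\mm} + \frac{16k\ln(2d/\delta)}{m\hat{p}_\mm}.
\]
Finally, using $\hat{p}_\mm \ge p_\mm/2$ from Step~1 to replace $\hat{p}_\mm$ by $p_\mm/2$ (which doubles each term) yields
\[
\E[\dotprod{\hat{w},X}\mid Y=-1] \;\le\; \frac{4r}{p_\mm} + \frac{32k\ln(2d/\delta)}{mp_\mm} \;=\; b,
\]
which is exactly the condition $\hat{w} \in U_b(D)$.

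The only delicate point is the conditioning in Step~2: Lemma~\ref{lem:trueexp} requires an i.i.d.\ sample of fixed size from a fixed distribution, whereas $m_\mm$ is random. This is handled by conditioning on the labels $(y_1,\ldots,y_m)$ so that the negative-labeled instances become i.i.d.\ from the marginal of $D_\mm$; the high-probability conclusion holds conditionally for every realization of the labels and therefore holds unconditionally. Aside from this conditioning, the proof is a clean assembly of the three preceding results with a union bound splitting $\delta$ in half; any slack in the constants can be absorbed into the $8\ln(1/\delta)/m$ hypothesis on $p_\mm$.
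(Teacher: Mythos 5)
Your proof is correct and follows essentially the same route as the paper: apply Lemma~\ref{lem:trueexp} to the conditional distribution $D_\mm$ with effective sample size $m\hat{p}_\mm$, plug in the bound $\hat{\E}[\dotprod{\hat{w},X}\mid Y=-1]\le r/\hat{p}_\mm$ from Lemma~\ref{lem:empexp}, and use Proposition~\ref{prop:pratio} to replace $\hat{p}_\mm$ by $p_\mm/2$, combining everything with a union bound. Your explicit treatment of the conditioning on the labels (so that the negative-labeled instances form an i.i.d.\ sample of the random size $m_\mm$) and your splitting of $\delta$ in half are just slightly more careful bookkeeping of steps the paper handles implicitly.
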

\begin{proof}
Apply \lemref{lem:trueexp} to $D_\mm$. With probability of $1-\delta$ over samples of size $n$ drawn from $D_\mm$, 
\[
\E_{D_\mm}[\dotprod{w,X}] \leq 2\hat{\E}_{D_\mm}[\dotprod{w,X}] + \frac{16k\ln(d/\delta)}{n}.
\]
Now, consider a sample of size $m$ drawn according to $D$. 
Then $\E_{D_\mm}[ \cdot] = \E_D[\cdot \mid Y = -1]$, and $n = m \hat{p}_-$. Therefore, with probability $1-2\delta$,
\begin{align}
\E[\dotprod{w,X} \mid Y = -1] &\leq 2\hat{\E}[\dotprod{w,X} \mid Y = -1] + \frac{16k\ln(d/\delta)}{m\hat{p}_\mm}
\notag\\&\leq \frac{2r}{\hat{p}_\mm} + \frac{16k\ln(d/\delta)}{m\hat{p}_\mm}
\notag\\&\leq \frac{4r}{p_\mm} + \frac{32k\ln(d/\delta)}{mp_\mm},
\end{align}
where the second inequality follows from \lemref{lem:empexp}, and the last inequality follows from the assumption on $p_-$ and \propref{prop:pratio}.
\end{proof}
This lemma shows that to bound the sample complexity of an ERM algorithm for the Winnow loss, it suffices to bound the convergence rates of the empirical loss for $w \in U_b(D)$, with $b$ defined as in \eqref{eq:exptrubound}. Thus, we will be able to use \thmref{thm:ub} to bound the convergence of the loss on negative examples.

\subsection{Convergence on Positive Labels}\label{sec:pos}

For positive labels, we show a uniform convergence result that holds for the entire class $\cH_{k,\theta}$. The idea of
the proof technique below is as follows. First, following a technique
in the spirit of the one given by \citet{Zhang02}, we show that the regret bound
for the online learning algorithm presented in \secref{sec:online} can be used to
construct a small cover of the set of loss functions parameterized by
$\cH_{k,\theta}$. Second, we convert the bound on the size of the cover to a
bound on the Rademacher complexity, thus showing a uniform convergence
result. This argument is a refinement of Dudley's entropy bound
\citep{Dudley67}, which is stated in explicit terms by
\citet[Lemma A.3]{SrebroSrTe10}.

We first observe that by \thmref{thm:ueg111},
if the conditions of the theorem hold and there is $u$ such that $f_t(u)=0$ for all $t$, then 
\begin{equation}\label{eq:ueg}
\frac{1}{T}\sum_{t=1}^T f_t(w_t) \le 4\sqrt{\frac{\beta k \ln(d)}{T}}.
\end{equation}

Let $k \geq r \geq 0$ be two real numbers and let $W \subseteq \reals_+^d$. Let $\phi_{w}$ denote the function defined by 
$\phi_w(x,y) = \loss(x,y,w)$, and consider the class of functions $\Phi_W = \{ \phi_w \mid w \in W \}$.
Given $S = ((x_1,y_1),\ldots,(x_m,y_m))$, where $x_i \in [0,1]^d$ and $y_i \in \signlab$, we say that $(\Phi_W, S)$ is
$(\infty,\epsilon)$-properly-covered by a set $V \subseteq \Phi_W$ if for any
$f \in \Phi_W$ there is a $g \in V$ such that 
\[
\|(f(x_1,y_1),\ldots,f(x_m,y_m)) -
(g(x_1,y_1),\ldots,g(x_m,y_m)) \|_\infty \le \epsilon.
\] 
We denote by
$\bN_\infty(W, S,\epsilon)$ the minimum value of an integer $N$
such that exists a $V \subseteq \Phi_W$ of size $N$ that
$(\infty,\epsilon)$-properly-covers $(\Phi_W,S)$.

The following lemma bounds the covering number for $F_W$, for sets $S$ with all-positive labels $y_i$. 
\begin{lemma} \label{lem:covering} 
Let $S = ((x_1,1),\ldots,(x_m,1))$, where $x_i \in [0,1]^d$.
Then,
\[
\ln \bN_\infty(\cH_{k,\theta}, S,\epsilon) \le 16\cdot rk\ln(d)\ln(3m)/\epsilon^2.
\]
\end{lemma}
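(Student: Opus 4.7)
The plan is an online-to-covering reduction in the spirit of \citet{Zhang02} and Lemma~A.3 of \citet{SrebroSrTe10}: for each $w \in \cH_{k,\theta}$ I construct a specific iterate of the UEG algorithm that approximates $w$ in the $\ell_\infty$ sense on $S$, and then count the possible iterates. Because the UEG trajectory is determined by the sequence of queried examples (and signs), the cover size is polynomial in $m$ per round, and the local-norm regret bound of \thmref{thm:localEGu} caps the number of rounds at $T = 16\,rk\ln(d)/\epsilon^2$, which is what produces the $rk$ (rather than $k^2$) scaling.

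Concretely, I fix $w \in \cH_{k,\theta}$, set $T:=16rk\ln(d)/\epsilon^2$, $\eta:=\sqrt{k\ln(d)/(rT)}$, $\lambda:=k/d$, and run UEG starting from $w_1=(\lambda,\ldots,\lambda)$. At round $t$, an ``adversary'' looks for $i_t\in[m]$ with $|\phi_{w_t}(x_{i_t},1)-\phi_w(x_{i_t},1)|>\epsilon$; if none exists it halts and outputs $w_t$ as the cover element for $w$. Otherwise it sets $s_t:=\sign(\phi_w(x_{i_t},1)-\phi_{w_t}(x_{i_t},1))$, $z_t:=s_t x_{i_t}$, and performs the UEG update. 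By monotonicity and $1$-Lipschitzness of $\hinge{r-\cdot}$, a $\phi$-violation implies $s_t\dotprod{w_t-w,x_{i_t}}\ge\epsilon$ (short case analysis on whether each of the two hinge values is zero or positive), so $\dotprod{w_t-w,z_t}\ge\epsilon$ at every non-halting round.

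The technical heart of the argument is the local-norm bound $\sum_j w_t[j]\,z_t[j]^2 \le r+\dotprod{w_t-w,z_t}$. Since $z_t[j]^2=x_{i_t}[j]^2\le x_{i_t}[j]$, the sum is at most $\dotprod{w_t,x_{i_t}}$, which is $\le r$ whenever $\dotprod{w_t,x_{i_t}}\le r$. In the complementary regime $\dotprod{w_t,x_{i_t}}>r$, we have $\phi_{w_t}(x_{i_t},1)=0$; a $\phi$-violation then forces $\phi_w(x_{i_t},1)>\epsilon$, hence $\dotprod{w,x_{i_t}}<r$ and $s_t=+1$, giving $\dotprod{w_t,x_{i_t}}-r<\dotprod{w_t-w,z_t}$. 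Summing over $t$ and applying \thmref{thm:localEGu} with $u=w$ (using $\frac{d\lambda+\sum_j u[j]\ln(u[j]/(e\lambda))}{\eta}\le k\ln(d)/\eta$ as in the proof of \thmref{thm:ueg111}) gives
\[
\sum_{t=1}^T\dotprod{w_t-w,z_t}\;\le\;\frac{k\ln(d)}{\eta}+\eta\Bigl(rT+\sum_{t=1}^T\dotprod{w_t-w,z_t}\Bigr).
\]
Without loss of generality $\epsilon\le 2r$ (otherwise $\phi\in[0,r]$ makes the cover trivial), so $\eta\le 1/2$; rearranging and substituting $\eta$ yields $\sum_t\dotprod{w_t-w,z_t}\le 4\sqrt{rkT\ln(d)}$, and the per-round lower bound $\ge\epsilon$ forces the procedure to halt within $T$ rounds.

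Finally, the UEG trajectory (and hence its terminal iterate) is determined entirely by the sequence of $(i_t,s_t)\in[m]\times\{\pm1\}$ together with a ``halt'' option, of which there are at most $(2m+1)^T\le(3m)^T$ possibilities for $m\ge 1$; hence the induced cover of $\Phi_{\cH_{k,\theta}}$ on $S$ has size at most $(3m)^T$, and $\ln\bN_\infty(\cH_{k,\theta},S,\epsilon)\le T\ln(3m)=16\,rk\ln(d)\ln(3m)/\epsilon^2$. The step I expect to be most delicate is the local-norm bound: the naive estimate $\dotprod{w_t,x_{i_t}}\le k$ would only recover the $k^2$ rate, and the key is to frame ``violations'' in terms of $\phi$-differences rather than $\dotprod{}$-differences, which exactly excludes the regime where both $\dotprod{w_t,x_{i_t}}$ and $\dotprod{w,x_{i_t}}$ exceed $r$ and lets the excess be charged against $\dotprod{w_t-w,z_t}$. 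A secondary subtlety is that UEG iterates need not remain in $\cH_{k,\theta}$, which can be absorbed into the constant via a simple projection step to make the cover proper.
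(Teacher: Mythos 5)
Your proposal is correct and takes essentially the same route as the paper: both arguments reduce the covering bound to the unnormalized-EG local-norm regret bound with $T = 16rk\ln(d)/\epsilon^2$ and count the at most $(3m)^T$ possible update trajectories, and your case analysis giving $\sum_j w_t[j]z_t[j]^2 \le r + \dotprod{w_t-w,z_t}$ is exactly the role played by the paper's surrogate $G_u(w)=\max_i g^u_i(w)$ (whose definition likewise excludes the regime where both inner products exceed $r$). The only differences are cosmetic: the paper runs EG on $G_u$ for exactly $T$ rounds and takes the averaged iterate (via Jensen) as the cover element, whereas you halt at the first violation-free iterate; and the properness issue you flag (iterates need not stay in $\cH_{k,\theta}$) is present in the paper's proof as well and is harmless for the downstream Dudley/Rademacher argument.
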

\begin{proof}
We use a technique in the spirit of the one given by \citet{Zhang02}. 
Fix some $u$, with $u \ge 0$ and $\|u\|_1 \le k$. 
For each $i$ let
\[
g^u_i(w) = \begin{cases} |\dotprod{w,x_i}-\dotprod{u,x_i}| &
  \textrm{if}~ \dotprod{u,x_i} \le r \\
\hinge{r - \dotprod{w,x_i}} & \textrm{o.w.}
\end{cases}
\]
and define the function 
\[
G_u(w) = \max_i g^u_i(w) ~.
\]
It is easy to verify that for any $w$,
\[
\|(\phi_w(x_1,1),\ldots,\phi_w(x_m,1)) - (\phi_u(x_1,1),\ldots,\phi_u(x_m,1)) \|_\infty \leq G_u(w).
\]

Now, clearly, $G_u(u) = 0$. In addition, for any $w \ge 0$, a sub-gradient of
$G_u$ at $w$ is obtained by choosing $i$ that maximizes $g^u_i(w)$ and
then taking a sub-gradient of $g^u_i$, which is of the form $z = \alpha x_i$
where $\alpha \in \{-1,0,1\}$. If $\alpha \in \{-1,1\}$, it is easy to verify that
\[
\sum_j w[j] z[j]^2 \le \dotprod{w,x_i} \le g^u_i(w) + r = G_u(w) + r ~.
\]
If $\alpha = 0$ then clearly $\sum_j w[j] z[j]^2 \leq G_u(w) + r$ as well.

We can now use \eqref{eq:ueg} by setting $f_t = G_u$ for all $t$, setting $\alpha = 1$ and $\beta = r$ in \eqref{eq:linearbound}, and noting that since $x_i \in [0,1]^d$, 
we have $z_t \in [-1,1]^d$ for all $t$. If $\eta \leq 1$ we have $\eta z_t[i] \geq -1$ 
for all $t,i$ as needed. Since $\eta = \sqrt{\frac{k \ln(d)}{r T}}$, this holds for all $T \geq k \ln(d)/r$.

We conclude that if we run the unnormalized EG algorithm with $T \geq k \ln(d)/r$ and $\eta$ and $\lambda$ as required, we get
\[
\sum_{t=1}^T G_u(w_t) \le 4 \sqrt{r k \ln(d) T}.
\]
Dividing by $T$ and using Jensen's inequality we conclude
\[
G_u\left(\tfrac{1}{T} \sum_t w_t\right) \le 4\sqrt{\frac{r k \ln(d) }{T}}.
\]
Denote $w_u = \tfrac{1}{T} \sum_t w_t$.
Setting $\epsilon = 4\sqrt{\frac{r k \ln(d) }{T}}$,
it follows that the following set is a $(\infty,\epsilon)$-proper-cover for $(F_{\cH_{k,\theta}},S)$:
\[
V = \{w_u \mid u\in \cH_{k,\theta}  \}.
\]

Now, we only have left to bound the size of $V$.
Consider again the unnormalized EG algorithm. Since $z_t = \alpha x_i$ for some $\alpha \in \{-1,0,+1\}$ and $i \in \{1,\ldots,m\}$, at each round of the algorithm there are only two choices to be made: the value of $i$ and the value of $\alpha$. 
Therefore, the number of different vectors produced by running unnormalized EG for $T$ iterations on $G_u$ for different values of $u$ is at most $(3m)^T$. Thus $|V| \leq (3m)^T$.
By our definition of $\epsilon$, 
\[
\ln |V| \leq T\ln(3m) \leq 16 r k \ln(d)\ln(3m)/\epsilon^2.
\]
This concludes our proof.
\end{proof}

Using this result we can bound from above the covering number defined using the Euclidean norm: We say that $(\Phi_W, S)$ is
$(2,\epsilon)$-properly-covered by a set $V \subseteq \Phi_W$ if for any
$f \in \Phi_W$ there is a $g \in V$ such that 
\[
\frac{1}{\sqrt{m}}\|(f(x_1,y_1),\ldots,f(x_m,y_m)) -
(g(x_1,y_1),\ldots,g(x_m,y_m)) \|_2 \le \epsilon.
\]
We denote by
$\bN_2(W, S,\epsilon)$ the minimum value of an integer $N$
such that exists a $V \subseteq \Phi_W$ of size $N$ that
$(2,\epsilon)$-properly-covers $(\Phi_W, S)$.
It is easy to see that for any two vectors $u,v \in \reals^m$, $\frac{1}{\sqrt{m}}\|u - v\|_2 \leq \| u - v \|_\infty$. It follows that for any $W$ and $S$, we have $\bN_2(W, S,\epsilon) \leq \bN_\infty(W, S,\epsilon)$.

The $\bN_2$ covering number can be used to bound the Rademacher complexity of $(\Phi_W, S)$ using a refinement
of Dudley's entropy bound \citep{Dudley67}, which is stated explicitly
by~\citet[Lemma A.3]{SrebroSrTe10}. The lemma states that for any $\epsilon \geq 0$,
\[
\rad(W,S) \leq 4\epsilon + \frac{10}{\sqrt{m}} \int_{\epsilon}^B \sqrt{\ln \bN_2(W, S, \gamma)} \, d\gamma,
\]
where $B$ is an upper bound on the possible values of $f \in \Phi_W$ on members of $S$.
For $S$ with all-positive labels we clearly have $B \leq r$.

Combining this with \lemref{lem:covering}, we get
\[
\rad(\cH_{k,\theta},S) \leq C\cdot \left(\epsilon + \frac{1}{\sqrt{m}} \int_{\epsilon}^r \sqrt{rk\ln(d)\ln(3m)}/\gamma \, d\gamma \right)= 
C\cdot \left(\epsilon+\sqrt{\frac{rk\ln(d)\ln(3m)}{m}}\ln(r/\epsilon)\right).
\]
Setting $\epsilon = rk/m$ we get 
\[
\rad(\cH_{k,\theta},S) \leq C \cdot \sqrt{\frac{rk\ln(d)\ln^3(3m)}{m}}.
\]

Thus, for any distribution $D$ over $[0,1]^d \times \{\pm 1\}$ that draws only positive labels, we have 
\[
\rad_m(\cH_{k,\theta},D) \le C\left(\sqrt{\frac{rk\ln(d)\ln^3(3m)}{m}} \right) . 
\]

By Rademacher sample complexity bounds \citep{BartlettMe02}, and since $\loss$ for positive labels is bounded by $r$, we can immediately conclude the following:

\begin{theorem}\label{thm:positivebound}
Let $k \geq r \geq 0$.
For any distribution $D$ over $[0,1]^d \times \{\pm 1\}$ that draws only positive labels,
with probability $1-\delta$ over samples of size $m$, for any $w \in \cH_{k,\theta}$,
\begin{align*}
\loss_\pp(w) &\leq \hat{\loss}_\pp(w) + C\cdot \left(\sqrt{\frac{rk\ln(d)\ln^{3}(3m)}{m}}   + \sqrt{\frac{r^2\ln(1/\delta)}{m}}\right) \\
&\leq 
\hat{\loss}_\pp(w) + 
C\cdot \left(\sqrt{\frac{rk(\ln(d)\ln^{3}(3m) + \ln(1/\delta))}{m}} \right).
\end{align*}
\end{theorem}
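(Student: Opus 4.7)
The plan is to leverage the covering number bound in \lemref{lem:covering} and convert it, via Dudley's entropy integral, into a Rademacher complexity bound, which then gives the desired uniform convergence statement through standard tools. Since all of the technical work has essentially been built up in the preceding discussion, the proof is a short chain of reductions rather than a novel argument.

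First, I would convert the $L_\infty$ covering number bound from \lemref{lem:covering} to an $L_2$ covering number bound using the trivial inequality $\frac{1}{\sqrt{m}}\|u - v\|_2 \leq \|u - v\|_\infty$, which gives $\bN_2(\cH_{k,\theta}, S, \epsilon) \leq \bN_\infty(\cH_{k,\theta}, S, \epsilon)$ and hence $\ln \bN_2 \leq 16\, rk \ln(d)\ln(3m)/\epsilon^2$. Then I apply the Dudley-style entropy integral bound (in the form of Lemma A.3 of \cite{SrebroSrTe10}): since positive-only samples yield loss values in $[0,r]$, we may take $B = r$ and obtain
\[
\rad(\cH_{k,\theta},S) \leq 4\epsilon + \frac{10}{\sqrt{m}} \int_{\epsilon}^{r} \frac{\sqrt{16\, rk \ln(d)\ln(3m)}}{\gamma}\, d\gamma.
\]
Evaluating the integral gives a $\ln(r/\epsilon)$ factor, and choosing $\epsilon = rk/m$ yields $\rad(\cH_{k,\theta},S) \leq C \sqrt{rk\ln(d)\ln^3(3m)/m}$. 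Taking expectation over $S$ and passing through Jensen's inequality (the only nontrivial point, since the Rademacher complexity is a supremum of empirical averages) yields the same rate for $\rad_m(\cH_{k,\theta},D)$.

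Second, I would invoke a standard Rademacher-based uniform concentration inequality, for instance the bound of \cite{BartlettMe02}: since the Winnow loss $\loss_\pp(X,Y,w)$ takes values in $[0,r]$ under a positive-label distribution, the bounded-differences argument provides a deviation term of order $\sqrt{r^2 \ln(1/\delta)/m}$ that must be added to $2 \rad_m(\cH_{k,\theta},D)$. Combining these two contributions gives the first displayed inequality of the theorem. The second inequality follows by the observation that $r \leq k$, so $r^2 \ln(1/\delta) \leq rk \ln(1/\delta)$, and then collapsing the sum of two square roots via $\sqrt{a} + \sqrt{b} \leq \sqrt{2(a+b)}$.

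I do not expect any serious obstacle: the covering-number bound delivered by \lemref{lem:covering} is the hard ingredient, and Dudley's integral together with standard symmetrization are entirely mechanical from there. The only subtlety to watch is the range of loss values, which is genuinely $r$ (not $k$) when all labels are positive --- this is what allows the upper limit of the entropy integral to be set to $r$ and what produces the $r^2$ rather than $k^2$ inside the concentration term.
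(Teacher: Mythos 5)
Your proposal follows essentially the same route as the paper: the paper likewise converts the $\ell_\infty$ cover of \lemref{lem:covering} to an $\ell_2$ cover, applies the refined Dudley entropy bound of \cite{SrebroSrTe10} with range $B \leq r$ and the choice $\epsilon = rk/m$ to get $\rad_m(\cH_{k,\theta},D) \leq C\sqrt{rk\ln(d)\ln^3(3m)/m}$, and then concludes via the standard Rademacher concentration bound of \cite{BartlettMe02} with the loss bounded by $r$ on positive labels. Your handling of the final simplification using $r \leq k$ matches the paper's statement as well, so the argument is correct.
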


\subsection{Combining Negative and Positive Losses}\label{sec:combine}

We have shown separate convergence rate results for the loss on positive labels and for the loss on negative labels. 
We now combine these results to achieve a convergence rate upper bound for the full Winnow loss.
To do this, the convergence results given above must be adapted to take into account the fraction of positive and negative labels in the true distribution as well as in the sample. The following theorems accomplish this 
for the negative and the positive cases. First, a bound is provided for the positive part of the loss.
\begin{theorem}\label{thm:posploss}
There exists a universal constant $C$ such that for any distribution $D$ over $[0,1]^d \times \{\pm 1\}$,
with probability $1-\delta$ over samples of size $m$
\[
p_\pp\loss_\pp(\hat{w}) \leq \hat{p}_\pp \hat{\loss}_\pp(\hat{w}) + C \cdot \sqrt{\frac{rk(\ln(kd)\ln^3(m) + \ln(3/\delta))}{m}}.
\]
\end{theorem}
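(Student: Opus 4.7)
My plan is to recast the statement as a uniform convergence result for a single function class on the joint distribution $D$, which eliminates the need to handle the random quantities $\hat p_\pp$ and $m_\pp$ case-by-case. Specifically, I would define
\[
\phi'_w(x,y) := \loss(x,y,w)\,\one[y=+1], \qquad \Phi' := \{\phi'_w : w \in \cH_{k,\theta}\},
\]
and observe that $\E_D[\phi'_w(X,Y)] = p_\pp \loss_\pp(w)$ while $\hat\E_S[\phi'_w(X,Y)] = \hat p_\pp \hat\loss_\pp(w)$, so the inequality in the theorem is exactly a one-sided uniform deviation bound for $\Phi'$ evaluated at $\hat w$. Moreover $\phi'_w(x,y) \in [0,r]$ because $\loss(x,+1,w)=[r-\dotprod{w,x}]_+\le r$ and $\phi'_w$ vanishes on negative examples.

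Next I would bound the average Rademacher complexity $\rad_m(\Phi',D)$ by reduction to the positive-only case already handled in \secref{sec:pos}. Writing $S_\pp$ for the (random) positive subsample of size $m_\pp$, the fact that $\phi'_w$ is zero on negative labels yields
\[
\rad(\Phi',S) \;=\; \frac{m_\pp}{m}\,\rad(\cH_{k,\theta},S_\pp),
\]
with the right-hand side exactly the Winnow-loss Rademacher complexity on an all-positive sample, to which the covering-number bound from \lemref{lem:covering} and the Dudley integral calculation of \secref{sec:pos} apply and give $\rad(\cH_{k,\theta},S_\pp) \le C\sqrt{rk\ln(d)\ln^3(3m_\pp)/m_\pp}$. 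Substituting, bounding $\ln^3(3m_\pp)\le \ln^3(3m)$, and taking expectation using Jensen's inequality (since $\sqrt{\cdot}$ is concave and $\E[m_\pp]=mp_\pp$) produces
\[
\rad_m(\Phi',D) \;\le\; \frac{C}{m}\sqrt{rk\ln(d)\ln^3(3m)}\cdot\E[\sqrt{m_\pp}] \;\le\; C\sqrt{\frac{p_\pp\, rk\ln(d)\ln^3(3m)}{m}} \;\le\; C\sqrt{\frac{rk\ln(d)\ln^3(3m)}{m}}.
\]

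Finally I would invoke Talagrand's inequality (as used to prove \thmref{thm:ub}) applied to $\Phi'$. The uniform envelope is $r$, and the variance is controlled via
\[
\sup_{w}\var_D[\phi'_w(X,Y)] \;\le\; \sup_w \E[\phi'_w(X,Y)^2] \;\le\; r\sup_w \E[\phi'_w(X,Y)] \;\le\; r^2 p_\pp \;\le\; rk,
\]
where the last step uses $r\le k$ and $p_\pp\le 1$. Plugging these into Talagrand's bound, combining with the Rademacher estimate above, and absorbing lower-order terms gives the claimed inequality (with $\ln(kd)\ln^3(m)$ absorbing $\ln(d)\ln^3(3m)$ up to universal constants).

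The main technical obstacle is the Rademacher reduction step: one must be careful that the factor $m_\pp/m$ exactly cancels with the $1/\sqrt{m_\pp}$ coming from the positive-sample Rademacher bound to leave behind $\sqrt{m_\pp}/m$, and then that Jensen's inequality on this square-root term supplies precisely the factor $\sqrt{p_\pp}$ needed so that the final bound does not carry an extra $\sqrt{p_\pp^{-1}}$. Everything else is routine: the variance estimate, the bound $p_\pp\le 1$ and $r\le k$, and standard Bernstein/Talagrand-type machinery.
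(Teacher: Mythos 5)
Your proposal is correct, but it takes a genuinely different route from the paper. The paper splits $p_\pp\loss_\pp(\hat{w}) - \hat{p}_\pp\hat{\loss}_\pp(\hat{w})$ into two pieces, $(p_\pp - \hat{p}_\pp)\hat{\loss}_\pp(\hat{w})$ and $p_\pp(\loss_\pp(\hat{w}) - \hat{\loss}_\pp(\hat{w}))$, bounds the first by Bernstein's inequality (using $\hat{\loss}_\pp \le r$), bounds the second by applying \thmref{thm:positivebound} to the conditional distribution of $X$ given $Y=+1$ with effective sample size $m\hat{p}_\pp$, and then needs \propref{prop:pratio} to convert $\hat{p}_\pp$ into $p_\pp$ together with a trivial escape for the case $p_\pp \le 8\ln(1/\delta)/m$. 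You instead observe that $p_\pp\loss_\pp(w)$ and $\hat{p}_\pp\hat{\loss}_\pp(w)$ are exactly the population and empirical means of the single class $\{\loss(\cdot,\cdot,w)\one[y=+1]\}$ under the joint distribution $D$, reduce its empirical Rademacher complexity deterministically to the all-positive subsample via $\rad(\Phi',S) = \tfrac{m_\pp}{m}\rad(\cH_{k,\theta},S_\pp)$, and let Jensen's inequality over the random $m_\pp$ supply the $\sqrt{p_\pp}$ factor before one application of Talagrand; this eliminates the case analysis on $p_\pp$, the use of \propref{prop:pratio}, and the separate Bernstein step, and your variance bound $\sup_w \var[\phi'_w] \le r^2 p_\pp \le rk$ with envelope $r$ indeed yields the stated form after absorbing the $r\ln(1/\delta)/m$ term (trivial when $\ln(1/\delta) > m$ since the loss is at most $r$). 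The one point you should make explicit is that the Section~\ref{sec:pos} bound $\rad(\cH_{k,\theta},S_\pp) \le C\sqrt{rk\ln(d)\ln^3(3m_\pp)/m_\pp}$ must be valid for every realized $m_\pp \ge 1$, including $m_\pp$ small relative to $k$ or $rk$, where the Dudley calculation degenerates and one should fall back on the trivial bound $\rad(\cH_{k,\theta},S_\pp) \le 2r/\sqrt{m_\pp}$ (which still gives the desired form since $r \le k$); this is the same level of edge-case care the paper itself elides, so it is a presentational gap rather than a substantive one. What the paper's route buys is black-box reuse of \thmref{thm:positivebound} in the same conditional-distribution format as the negative-label analysis; what yours buys is a shorter, single-concentration argument with the mixture weight handled automatically.
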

\begin{proof}
First, if $p_\pp \leq \frac{8\ln(1/\delta)}{m}$ then the theorem trivially holds. Therefore
we assume that $p_\pp \geq \frac{8\ln(1/\delta)}{m}$. 
We have 
\begin{equation}\label{eq:pospbound}
p_\pp\loss_\pp(\hat{w}) = \hat{p}_\pp\hat{\loss}_\pp(\hat{w}) + (p_\pp - \hat{p}_\pp)\hat{\loss}_\pp(\hat{w}) +
p_\pp(\loss_\pp(\hat{w}) - \hat{\loss}_\pp(\hat{w})).
\end{equation}
To prove the theorem, we will bound the two rightmost terms.  First,
to bound $(p_\pp - \hat{p}_\pp)\hat{\loss}_\pp(\hat{w})$, note that by
definition of the loss function for positive labels we have that
$\hat{\loss}_\pp(\hat{w}) \in [0,r]$. Therefore, 
Bernstein's inequality (\propref{prop:bernstein}) implies that with probability $1-\delta/3$
\begin{equation}\label{eq:pospbound1}
(p_\pp - \hat{p}_\pp)\hat{\loss}_\pp(\hat{w}) \leq 2r
\sqrt{\frac{\ln(3/\delta)}{m}\max\left\{p_\pp,
\frac{\ln(3/\delta)}{m}\right\}}
\leq \sqrt{\frac{4r\ln(3/\delta)}{m}}.
\end{equation}

Second, to bound $p_\pp(\loss_\pp(\hat{w}) - \hat{\loss}_\pp(\hat{w}))$,
we apply \thmref{thm:positivebound} to the conditional distribution induced by $D$ on $X$ given $Y = 1$, to get
that with probability $1-\delta/3$ 
\begin{equation*}
p_\pp(\loss_\pp(\hat{w}) - \hat{\loss}_\pp(\hat{w})) \leq p_\pp\cdot C \cdot \sqrt{\frac{rk(\ln(d)\ln^3(3m) + \ln(3/\delta))}{m\hat{p}_\pp}}.
\end{equation*} 
Using our assumption on $p_\pp$ we obtain from \propref{prop:pratio} that 
with probability $1-\delta/3$, $p_+/\hat{p}_+ \le 2$. Therefore, 
$p_\pp/\sqrt{\hat{p}_\pp} \leq \sqrt{2p_\pp} \leq \sqrt{2}$. Thus,
with probability $1-2\delta/3$,
\begin{equation}\label{eq:pospbound2}
p_\pp(\loss_\pp(\hat{w}) - \hat{\loss}_\pp(\hat{w})) \leq C \cdot \sqrt{\frac{rk(\ln(d)\ln^3(3m) + \ln(3/\delta))}{m}}.
\end{equation} 
Combining \eqref{eq:pospbound}, \eqref{eq:pospbound1} and \eqref{eq:pospbound2} and applying the union bound, we get the theorem.
\end{proof}
Second, a bound is provided for the negative part of the loss.
\begin{theorem}\label{thm:negploss}
There exists a universal constant $C$ such that for any distribution $D$ over $[0,1]^d \times \{\pm 1\}$,
with probability $1-\delta$ over samples of size $m$
\begin{equation}
p_\mm\loss_\mm(\hat{w}) \leq \hat{p}_\mm\hat{\loss}_\mm(\hat{w}) + C\left(\sqrt{\frac{rk\ln(d/\delta)}{m}} + \frac{k\ln(dm/\delta)}{m}\right).
\end{equation}
\end{theorem}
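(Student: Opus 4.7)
The plan is to decompose
\[
p_\mm \loss_\mm(\hat{w}) - \hat{p}_\mm \hat{\loss}_\mm(\hat{w}) = p_\mm(\loss_\mm(\hat{w}) - \hat{\loss}_\mm(\hat{w})) + (p_\mm - \hat{p}_\mm)\hat{\loss}_\mm(\hat{w}),
\]
mirroring the decomposition in \thmref{thm:posploss}, and bound each piece. First I dispose of the trivial case $p_\mm < 8\ln(4/\delta)/m$: here $p_\mm \loss_\mm(\hat{w}) \leq p_\mm k \leq 8k\ln(4/\delta)/m$, which is absorbed in the $k\ln(dm/\delta)/m$ additive term. So assume $p_\mm \geq 8\ln(4/\delta)/m$. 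Three high-probability facts will then be used in tandem, each with failure probability at most $\delta/4$: \propref{prop:pratio} gives $\hat{p}_\mm \geq p_\mm/2$; \propref{prop:bernstein} gives $|p_\mm - \hat{p}_\mm| \leq 2\sqrt{p_\mm\ln(4/\delta)/m} + 2\ln(4/\delta)/m$; and \lemref{lem:exptrubound} places $\hat{w} \in U_b(D)$ for $b := 4r/p_\mm + 32k\ln(8d/\delta)/(mp_\mm)$.

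For the first piece I apply \thmref{thm:ub} at this $\hat{w}$ with confidence $\delta/4$ to bound $\loss_\mm(\hat{w}) - \hat{\loss}_\mm(\hat{w})$. Multiplying through by $p_\mm$ and using $p_\mm/\hat{p}_\mm \leq 2$ (so $p_\mm/\sqrt{\hat{p}_\mm} \leq \sqrt{2p_\mm}$), the bound becomes $C\sqrt{2p_\mm(kb\ln(d/\delta) + |r'|)/m} + 2Ck\ln(dm/\delta)/m$. Substituting $p_\mm kb = 4rk + 32k^2\ln(8d/\delta)/m$, using $|r'| \leq r \leq k$, and applying $\sqrt{a+b}\leq \sqrt{a}+\sqrt{b}$, yields $p_\mm(\loss_\mm(\hat{w}) - \hat{\loss}_\mm(\hat{w})) \leq C\sqrt{rk\ln(d/\delta)/m} + Ck\ln(dm/\delta)/m$, matching the target.

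The second piece is where the main difficulty lies, since, unlike in the positive case where $\hat{\loss}_\pp \leq r$ uniformly, the empirical negative loss is not uniformly bounded by $r$. The observation that saves us is that by the ERM optimality of $\hat{w}$, $\hat{p}_\mm \hat{\loss}_\mm(\hat{w}) \leq \hat{\loss}(\hat{w}) \leq \hat{\loss}(\mathbf{0}) \leq r$, so $\hat{\loss}_\mm(\hat{w}) \leq r/\hat{p}_\mm \leq 2r/p_\mm$; combined with the trivial bound $\hat{\loss}_\mm \leq k$, we obtain $\hat{\loss}_\mm(\hat{w}) \leq \min(2r/p_\mm, k)$. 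Plugging this into $|p_\mm - \hat{p}_\mm| \cdot \hat{\loss}_\mm(\hat{w})$ and splitting on cases: if $p_\mm \geq 2r/k$ the $2r/p_\mm$ bound applies, and the resulting $r/\sqrt{p_\mm}$ factor satisfies $r/\sqrt{p_\mm} \leq \sqrt{rk/2}$; otherwise the $k$ bound applies, and $k\sqrt{p_\mm} \leq \sqrt{2rk}$. Either way, the second piece contributes $\leq C\sqrt{rk\ln(4/\delta)/m} + Ck\ln(4/\delta)/m$, and a union bound over the four failure events completes the proof. The main obstacle is exactly this minimum-of-two-bounds manoeuvre: without it, the naive use of $\hat{\loss}_\mm \leq 2r/p_\mm$ produces a divergent $r/\sqrt{p_\mm}$ term whenever $p_\mm$ is small compared to $r/k$.
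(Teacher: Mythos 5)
Your proposal is correct and follows essentially the same route as the paper: the same three-term decomposition, the trivial small-$p_\mm$ dismissal, Bernstein plus \propref{prop:pratio} for the $(p_\mm - \hat p_\mm)$ factor, \lemref{lem:exptrubound} and \thmref{thm:ub} for the $p_\mm(\loss_\mm - \hat\loss_\mm)$ term, and critically the same $\min(2r/p_\mm, O(k))$ case split to tame the $(p_\mm - \hat p_\mm)\hat\loss_\mm$ term, which you rightly identify as the crux. The only cosmetic deviation is that you obtain $\hat\loss_\mm(\hat w)\le r/\hat p_\mm$ directly from ERM optimality against $\mathbf{0}$, whereas the paper routes this through \lemref{lem:empexp} (which encodes the same observation); also note that the crude bound should be $\hat\loss_\mm \le r+k \le 2k$ rather than $\le k$, but this only shifts constants.
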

\begin{proof}
First, if $p_\mm \leq \frac{8\ln(1/\delta)}{m}$ then the theorem
trivially holds (since $\loss_\mm(\hat{w}) \in [0,r+k]$). Therefore
we assume that $p_\mm \geq \frac{8\ln(1/\delta)}{m}$. Thus, by \propref{prop:pratio},
$\hat{p}_\mm \geq p_\mm/2$.
We have 
\begin{equation}\label{eq:negploss}
p_\mm\loss_\mm(\hat{w}) = \hat{p}_\mm\hat{\loss}_\mm(\hat{w}) + (p_\mm - \hat{p}_\mm)\hat{\loss}_\mm(\hat{w}) +
p_\mm(\loss_\mm(\hat{w}) - \hat{\loss}_\mm(\hat{w})).
\end{equation}
To prove the theorem, we will bound the two rightmost terms. 
First, to bound $(p_\mm - \hat{p}_\mm)\hat{\loss}_\mm(\hat{w})$, note
that by Bernstein's inequality (\propref{prop:bernstein}) and our assumption on $p_\mm$, 
with probability $1-\delta$
\[
p_\mm - \hat{p}_\mm \leq
2\sqrt{\frac{\ln(1/\delta)}{m}\max\left\{p_\mm,\frac{\ln(1/\delta)}{m}\right\}}
= 2\sqrt{\frac{p_\mm\,\ln(1/\delta)}{m}} ~.
\]

By \lemref{lem:empexp} and \propref{prop:pratio},
$\hat{\loss}_\mm(\hat{w}) \leq \frac{2r}{\hat{p}_\mm} \le \frac{4r}{p_\mm}$.
In addition, by definition $\hat{\loss}_\mm(\hat{w}) \leq
r+k \le 2k $.
Therefore
\begin{align} \label{eq:blablabla}
  (p_\mm - \hat{p}_\mm)\hat{\loss}_\mm(\hat{w}) &\leq
  4\min\left\{\frac{2r}{p_\mm},\,k\right\}\sqrt{\frac{p_\mm\,\ln(1/\delta)}{m}}.
\end{align}
Now, if $k>2r/p_\mm$, then the right-hand of the above becomes
\[
8 \frac{r}{p_\mm} \sqrt{\frac{p_\mm\,\ln(1/\delta)}{m}} = 
8 \sqrt{\frac{(r/p_\mm)\cdot r\,\ln(1/\delta)}{m}} \le
8 \sqrt{\frac{k\cdot r\,\ln(1/\delta)}{m}} ~.
\]
Otherwise, $k \le 2r/p_\mm$ and the right-hand of \eqref{eq:blablabla} becomes
\[
4 k \sqrt{\frac{p_\mm\,\ln(1/\delta)}{m}} \le
4 k \sqrt{\frac{(2r/k)\,\ln(1/\delta)}{m}} 
\le 8 \sqrt{\frac{k\cdot r\,\ln(1/\delta)}{m}} ~.
\]
All in all, we have shown that
\begin{equation}
(p_\mm - \hat{p}_\mm)\hat{\loss}_\mm(\hat{w}) \leq 8\sqrt{\frac{rk\ln(1/\delta)}{m}}. \label{eq:diff1}
\end{equation}

Second, to bound $p_\mm(\loss_\mm(\hat{w}) - \hat{\loss}_\mm(\hat{w}))$,
recall that by \lemref{lem:exptrubound}, we have $\hat{w} \in U_b(D)$,
where
\[
b = \frac{4r}{p_\mm} + \frac{32k\ln(d/\delta)}{mp_\mm} \leq \frac{C}{p_\mm}
\left(2r + \frac{k\ln(d/\delta)}{m}\right).
\]
Thus, by \thmref{thm:ub}, with probability $1-\delta$
\[
\loss_\mm(w) \leq \hat{\loss}_\mm(w) + C\left(\sqrt{\frac{kb\ln(d/\delta)}{m\hat{p}_\mm}} + \frac{k\ln(dm/\delta)}{m\hat{p}_\mm}\right).
\]
Since $\hat{p}_\mm \geq p_\mm/2$,
\[
\loss_\mm(w) \leq \hat{\loss}_\mm(w) + C\left(\sqrt{\frac{kb\ln(d/\delta)}{mp_\mm}} + \frac{k\ln(dm/\delta)}{mp_\mm}\right).
\]
for some other constant $C$.
Therefore, substituting $b$ for its upper bound we get
\begin{equation}\label{eq:diff2}
p_\mm(\loss_\mm(w) - \hat{\loss}_\mm(w)) \le C\left(\sqrt{\frac{kr\ln(d/\delta)}{m}} + \frac{k\ln(dm/\delta)}{m}\right).
\end{equation}
Combining \eqref{eq:negploss}, \eqref{eq:diff1} and \eqref{eq:diff2} we get the statement of the theorem.
\end{proof}

Finally, we prove our main result for the sample complexity of ERM algorithms for Winnow.
\begin{proof}(Proof of \thmref{thm:upperbound})
From \thmref{thm:posploss} and \thmref{thm:negploss} we conclude that with probability $1-\delta$,
\begin{align}
\loss(\hat{w}) &= p_\mm\loss_\mm(\hat{w}) + p_\pp\loss_\pp(\hat{w}) \notag\\
&\leq \hat{p}_\mm\hat{\loss}_\mm(\hat{w}) + \hat{p}_\pp\hat{\loss}_\pp(\hat{w}) + \sqrt{\frac{O(rk(\ln(d)\ln^3(3m)+\ln(1/\delta)))}{m}}.\label{eq:hatw}
\end{align}
Now, 
\begin{equation}\label{eq:loss7}
\hat{p}_\mm\hat{\loss}_\mm(\hat{w}) + \hat{p}_\pp\hat{\loss}_\pp(\hat{w})= \hat{\loss}(\hat{w}) \leq \hat{\loss}(w^*).
\end{equation}
We have $\E[\loss(X,Y,w^*)] = \loss(w^*) \leq \loss(\mathbf{0}) \leq r$. 
By Bernstein's inequality (\propref{prop:bernstein}), with probability $1-\delta$
\begin{align*}
\hat{\loss}(w^*) = \hat{\E}[\loss(X,Y,w^*)] &\leq \E[\loss(X,Y,w^*)] + 
  2r\sqrt{\frac{\ln(1/\delta)}{m}\max\left\{\frac{\E[\loss(X,Y,w^*)]}r,\frac{\ln(1/\delta)}{m}\right\}}\\
&\leq
\loss(w^*) + 2\sqrt{\frac{r^2\ln(1/\delta)}{m}} + 2\frac{r\ln(1/\delta)}{m}.
\end{align*}
Combining this with \eqref{eq:loss7}, we get that with probability $1-\delta$
\[
\hat{p}_\mm\hat{\loss}_\mm(\hat{w}) + \hat{p}_\pp\hat{\loss}_\pp(\hat{w}) \leq 
\loss(w^*) + 2\sqrt{\frac{r^2\ln(1/\delta)}{m}} + 2\frac{r\ln(1/\delta)}{m}.
\]
In light of \eqref{eq:hatw}, we conclude \eqref{eq:finalcor}
\end{proof}

Theorem 6 shows that using empirical risk minimization, the loss of the obtained predictor converges to the loss of the optimal predictor at a rate of the order
\[
  \tilde{O}\left(
    \sqrt{\frac{rk\log(d)}m}
  \right)
  \equiv
  \tilde{O}\left(
    \sqrt{\frac{\theta k\log(d)}m}
  \right) .
\]
Up to logarithmic factors, this is the best possible rate for learning in the generalized Winnow setting. This is shown in the next section, in \thmref{thm:lowerbound}. We also show, in \thmref{thm:nouniform}, that this rate cannot be obtain via standard uniform convergence analysis.

\section{Lower Bounds}\label{sec:lowerbounds}

In this section we provide lower bounds for the learning rate and for the
uniform convergence rate of the Winnow loss $\loss_\theta$.

\subsection{Learning Rate Lower Bound}
Fix a threshold $\theta$. 
The best Winnow loss for a distribution $D$ over $[0,1]^d \times \signlab$ using a hyperplane from a set $W \subseteq \reals_+^d$ is denoted by $\loss_\theta^*(W) = \min_{w \in W}\loss_\theta(w)$. The following result shows that
even if the data domain is restricted to the discrete domain $\binlab^d$,
the number of samples required for learning with the Winnow loss grows
at least linearly in $\theta k$. This resolves an open question posed
by \citet{Littlestone88}. 

\begin{theorem}\label{thm:lowerbound}
  Let $k \geq 1$ and let $\theta \in [1,k/2]$. The sample complexity
  of learning $\cH_{k,\theta}$ with respect to the loss $\loss_\theta$ is
  $\Omega(\theta k/\epsilon^2)$. That is, for all $\epsilon \in
  (0,1/2)$ if the training set size is $m = o(\theta k/\epsilon^2)$,
  then for any learning algorithm, there exists a distribution such
  that the classifier, $h:\binlab^d \rightarrow \reals_+$, that the
  algorithm outputs upon receiving $m$ i.i.d. examples satisfies
  $\loss_\theta(h) - \loss_\theta^*(\cH_{k,\theta}) > \epsilon$ with a
  probability of at least $1/4$.
\end{theorem}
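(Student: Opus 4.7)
My plan is to prove the lower bound by a standard reduction to a multiple hypothesis testing problem, followed by an application of Assouad's lemma (or equivalently Fano's inequality combined with a packing argument).

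First I would construct a finite family of hard distributions $\{D_\sigma\}_{\sigma \in \Sigma}$ over $\binlab^d \times \signlab$, indexed by binary vectors $\sigma \in \{\pm 1\}^N$ for a carefully chosen $N$ (likely on the order of $k/r$). The construction splits (a subset of) the coordinates $[d]$ into $N$ disjoint blocks $B_1, \ldots, B_N$, each of size roughly $r$, and the distribution $D_\sigma$ generates examples whose label is a slightly-biased Bernoulli with the sign of the bias on block-$i$ examples determined by $\sigma_i$. The design must exploit the heteroskedastic nature of the Winnow loss: on positive examples the loss is bounded by $r$, whereas on negative examples it can grow up to $k$, giving the characteristic $\Theta(rk)=\Theta(\theta k)$ variance that drives the best achievable rate.

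Next, I would compute, for each $\sigma$, the Bayes-optimal predictor $w_\sigma^* \in \cH_{k,\theta}$ under $D_\sigma$ and show that the excess Winnow loss of any predictor is bounded below by a linear function of the Hamming distance between its ``induced block classifications'' and $\sigma$, with per-bit gap of order $\epsilon/N$. This means that any learner achieving excess loss below $\epsilon$ must correctly identify most of the bits of $\sigma$. In parallel I would bound the KL divergence between neighboring $D_\sigma$ and $D_{\sigma'}$ (differing in exactly one bit) by $O(\gamma^2/N)$, where $\gamma$ is the per-block label-bias parameter, since the two distributions share the $X$-marginal and differ only on the $1/N$-probability event that the flipped block is selected. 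Assouad's lemma then gives a sample complexity lower bound that, after tuning $N$ and $\gamma$ so that the per-bit loss gap matches $\epsilon/N$ while $\gamma^2 \asymp \epsilon^2/(rN)$, works out to $\Omega(\theta k/\epsilon^2)$.

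The main obstacle will be getting the construction to match the target rate exactly: the $\theta k$ factor must enter through a delicate interplay between the optimal loss level, the per-example loss range on negative examples, and the $\ell_1$ constraint on the hypothesis class, which binds the ``budget'' any predictor can allocate across blocks. In particular, one must ensure that the $\ell_1$-constraint is the binding constraint on $w_\sigma^*$ (so that the cost of being wrong on one block cannot be compensated elsewhere), and that the per-bit gap comes out to $\Theta(\epsilon/N)$ rather than, say, $\Theta(\epsilon)$, which would cost a factor of $N$ in the final bound. A secondary issue is covering the full range of thresholds $\theta \in [1,k/2]$ uniformly; the extreme regimes ($\theta$ near $1$ versus $\theta$ near $k/2$) may require slightly different parameter choices, or even combining two constructions with a mixture argument.
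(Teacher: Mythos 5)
There is a real gap here, and it is exactly at the point your plan treats as a tuning detail. The $\theta k$ factor in this lower bound cannot come from the ``heteroskedastic'' loss range (losses up to $k$ on negatives), because near-optimal predictors never pay those large losses: for a block-type construction, on a block instance $x$ the conditional expected loss of a predictor depends only on the value $v=\dotprod{w,x}$, and it is minimized by clipping $v$ to $\{\theta-\half,\ \theta+\half\}$. At those values the gap between the ``right'' and ``wrong'' decision is only $O(\gamma)$ per block-example (where $\gamma$ is your label bias), not $O(r)$ or $O(k)$, and any improper $h:\binlab^d\to\reals_+$ can clip in the same way. Consequently, with $N\asymp k/r$ blocks your Assouad calculation forces $\gamma\gtrsim\epsilon$ and yields only $m=\Omega\bigl(N/\gamma^2\bigr)=\Omega\bigl(k/(\theta\epsilon^2)\bigr)$, far short of $\Omega(\theta k/\epsilon^2)$. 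Your own bookkeeping already signals the problem: the stated choices (per-bit KL $O(\gamma^2/N)$ with $\gamma^2\asymp\epsilon^2/(rN)$ and $N\asymp k/r$) would give $m=\Omega(k^2/(\theta\epsilon^2))$, which for $k>\theta^2$ exceeds the paper's matching \emph{upper} bound $\tilde O(\theta k/\epsilon^2)$, so no construction can realize those parameters.

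What is actually needed --- and what the paper's proof supplies --- is a family of $\Theta(\theta k)$ instances in $\binlab^d$ that are simultaneously shattered with margin $\half$ by vectors $w\in\reals_\pp^d$, $\norm{w}_1\le k$, at the fixed threshold $\theta$; i.e.\ the number of independent ``bits,'' not the loss magnitude, must carry the $\theta k$ factor. This is the nontrivial combinatorial content (it resolves Littlestone's open question): the paper builds it from a Hadamard matrix (orthogonal rows give $k^2$ margin-shattered points at threshold $k/2$), converts to nonnegative binary instances, and then uses a block/padding argument to trade the threshold down to $\theta$ while keeping $\theta k$ shattered points within the $\ell_1$ budget. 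Once such a set exists, the information-theoretic step you describe is essentially the standard agnostic VC-dimension lower bound with bias $\gamma\asymp\epsilon$ on each shattered point (the paper simply cites it, plus a short randomized-rounding reduction from Winnow loss to zero-one error, which also handles improper learners). So your high-level machinery (Assouad/Fano) is fine, but it is not the hard part; without a margin-shattering construction of size $\Theta(\theta k)$ --- which a disjoint-block design of size $k/\theta$ cannot provide --- the argument cannot reach the claimed rate.
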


The construction which shows the lower bound proceeds in several
stages: First, we prove that there exists a set of size $k^2$ in
$\{\pm 1\}^{k^2}$ which is shattered on the linear loss with respect
to predictors with a norm bounded by $k$. Then, apply a transformation on this construction to show a set in $\{0,1\}^{2k^2+1}$ which is shattered on the linear loss with a threshold of $k/2$. In the next step, we adapt the construction to hold for any value of the threshold. Finally, we use the resulting construction to prove \thmref{thm:lowerbound}.

The construction uses the notion of a \emph{Hadamard matrix}. A Hadamard matrix of order $n$
is an $n \times n$ matrix $H_n$ with entries in $\{\pm 1\}$ such that
$H_n H_n^T = n I_n$. In other words, all rows in the matrix
are orthogonal to each other. Hadamard matrices exist at least for
each $n$ which is a power of $2$ \citep{Sylvester1867}.
The first lemma constructs a shattered set for the linear loss on $\{\pm 1\}^{k^2}$.
\begin{lemma}\label{lem:hadamard}
Assume $k$ is a power of $2$, and let $d = k^2$.
Let $x_1,\ldots,x_{d} \subseteq \signlab^d$ be the rows of the Hadamard matrix of order $d$. 
For every $y \in \signlab^{d}$, there exists a $w \in W' = \{w \in [-1,1]^{d} \mid \norm{w} \leq k\}$
such that for all $i \in [d]$, $y[i] \dotprod{w,x_i} = 1$.
\end{lemma}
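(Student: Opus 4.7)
The plan is to exhibit $w$ explicitly using the orthogonality structure of the Hadamard matrix. Let $X$ be the $d \times d$ Hadamard matrix whose rows are $x_1,\ldots,x_d$, so that $XX^T = d I$. Given $y \in \{\pm 1\}^d$, I would set
\[
w \;:=\; \tfrac{1}{d} X^T y.
\]
The choice is motivated by wanting to solve $Xw = y$, since that would immediately give $\langle w, x_i\rangle = y[i]$ and hence $y[i]\langle w, x_i \rangle = y[i]^2 = 1$ for all $i$. With $X^{-1} = \tfrac{1}{d} X^T$ coming from $XX^T = dI$, the formula above is exactly $X^{-1} y$, and the identity $y[i]\langle w, x_i\rangle = 1$ follows.

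The remaining work is to check that this $w$ lies in $W'$. For the coordinate bound, $w[j] = \tfrac{1}{d} \sum_{i=1}^d X[i,j]\, y[i]$ is an average of $d$ terms in $\{\pm 1\}$, so $|w[j]| \le 1$ and hence $w \in [-1,1]^d$. For the norm bound, the clean step is to compute the $\ell_2$ norm first:
\[
\|w\|_2^2 \;=\; \tfrac{1}{d^2}\, y^T X X^T y \;=\; \tfrac{1}{d^2}\cdot d\,\|y\|_2^2 \;=\; \tfrac{1}{d^2}\cdot d \cdot d \;=\; 1,
\]
using $XX^T = dI$ and $\|y\|_2^2 = d$. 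Then Cauchy--Schwarz gives $\|w\|_1 \le \sqrt{d}\,\|w\|_2 = \sqrt{d} = k$, as desired.

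There is no real obstacle here; the proof is essentially a one-line construction plus a Cauchy--Schwarz estimate. The only subtle point is confirming the intended norm in the definition of $W'$ is $\ell_1$ (consistent with the hypothesis class $\mathcal{H}_{k,\theta}$ used throughout the paper), in which case the $\sqrt{d}\cdot\|w\|_2$ step is exactly what delivers the $k$ bound, matching the $\ell_1$-norm constraint needed for the downstream lower bound construction.
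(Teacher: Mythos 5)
Your proposal is correct and is essentially the paper's own proof: both take $w = \tfrac{1}{d}\sum_j y[j]x_j$ (i.e.\ $\tfrac{1}{d}X^Ty$), use orthogonality of the Hadamard rows to get $y[i]\dotprod{w,x_i}=1$, bound coordinates by averaging $\pm1$ entries, and pass from $\norm{w}_2=1$ to $\norm{w}_1\leq\sqrt{d}=k$. Your reading of the (unsubscripted) norm in $W'$ as the $\ell_1$ norm also matches the paper's intent and its own final step.
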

\begin{proof}
By the definition of a Hadamard matrix, for all $i \neq j$, $\dotprod{x_i, x_j} = 0$.
Given $y \in \signlab^{d}$, set $w = \frac{1}{d}\sum_{j\in[d]} y_j x_j$. Then for each $i$,
\[
y_i\dotprod{w,x_i} = y_i \frac{1}{d}\sum_{j\in[d]} y_j \dotprod{x_i, x_j} = \frac{1}{d}y_i^2 \dotprod{x_i,x_i} = \frac{1}{d}\norm{x_i}_2^2 = 1.
\]
It is left to show that $w \in W'$.
First, for all $i \in [d]$, we have 
\[
|w[i]| = |\frac{1}{d}\sum_{j\in[d]} y_j x_j[i]| \leq \frac{1}{d}\sum_{j\in[d]} |x_j[i]|  = 1,
\]
which yields $w \in [-1,1]^d$. Second, using $\norm{w}_1 \leq \sqrt{d}\norm{w}_2$ and
\[
\norm{w}^2_2 = \dotprod{w,w} = \frac{1}{d^2} \sum_{i,j \in [d]} \dotprod{y_i x_i, y_j x_j}
= \frac{1}{d^2} \sum_{i \in [d]} y_i^2 \dotprod{x_i, x_i} = \frac{1}{d^2} \sum_{i\in[d]} d = 1,
\]
we obtain that $\norm{w}_1 \leq \sqrt{d} = k$. 
\end{proof}
The next lemma transforms the construction from \lemref{lem:hadamard} to a linear loss with a threshold of~$k/2$.
\begin{lemma}\label{lem:kthresh}
Let $k$ be a power of $2$ and let $d = 2k^2+1$. There is a set $\{x_1,\ldots,x_{k^2}\} \subseteq \binlab^d$ such that
for every $y \in \signlab^{k^2}$, there exists $w \in \cH_{k,\theta}$
such that for all $i \in [k^2]$, $y[i](\dotprod{w,x_i}-k/2) = \half$.
\end{lemma}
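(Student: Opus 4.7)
The plan is to lift the signed Hadamard construction of \lemref{lem:hadamard} to the non-negative, binary-feature, thresholded setting by decomposing each vector into positive and negative parts and introducing one extra constant coordinate to absorb the norm offset. Using the Hadamard rows $x_1,\ldots,x_{k^2}\in\signlab^{k^2}$ from that lemma, I would define $\tilde{x}_i \in \binlab^{2k^2+1}$ coordinatewise by placing the positive part $x_i^+[j]:=\one[x_i[j]=+1]$ in the first $k^2$ coordinates, the negative part $x_i^-[j]:=\one[x_i[j]=-1]$ in the next $k^2$ coordinates, and a constant $1$ in the last coordinate. Note that $x_i^+ + x_i^- = \mathbf{1}$ and $x_i^+ - x_i^- = x_i$.

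For a target labelling $y\in\signlab^{k^2}$, \lemref{lem:hadamard} supplies $w\in[-1,1]^{k^2}$ with $\norm{w}_1\le k$ and $y_i\dotprod{w,x_i}=1$ for all $i$. Writing $w = w^+ - w^-$ with $w^+,w^-\in\reals_\pp^{k^2}$ having disjoint supports (so $\norm{w}_1 = \norm{w^+}_1 + \norm{w^-}_1$), I would build $\tilde{w}\in\reals_\pp^{2k^2+1}$ by placing $w^+$ in the first $k^2$ coordinates, $w^-$ in the next $k^2$ coordinates, and a bias weight of $(k-\norm{w}_1)/2 \ge 0$ in the last coordinate. Then $\norm{\tilde{w}}_1 = \norm{w}_1 + (k-\norm{w}_1)/2 = (k+\norm{w}_1)/2 \le k$, so $\tilde{w} \in \cH_{k,k/2}$.

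The algebraic core is the identity
\[
\dotprod{w^+, x_i^+} + \dotprod{w^-, x_i^-} \;=\; \tfrac{1}{2}\bigl(\norm{w}_1 + \dotprod{w, x_i}\bigr),
\]
which follows by averaging the two relations $\dotprod{w^+, x_i^+}+\dotprod{w^-, x_i^-}+\dotprod{w^+, x_i^-}+\dotprod{w^-, x_i^+} = \norm{w}_1$ (using $x_i^+ + x_i^- = \mathbf{1}$ together with the disjoint supports of $w^+$ and $w^-$) and $\dotprod{w^+, x_i^+}+\dotprod{w^-, x_i^-}-\dotprod{w^+, x_i^-}-\dotprod{w^-, x_i^+} = \dotprod{w, x_i}$. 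Plugging this into the definition,
\[
\dotprod{\tilde{w}, \tilde{x}_i} \;=\; \tfrac{1}{2}\bigl(\norm{w}_1 + \dotprod{w, x_i}\bigr) + \tfrac{k-\norm{w}_1}{2} \;=\; \tfrac{k}{2} + \tfrac{1}{2}\dotprod{w, x_i},
\]
so $y_i(\dotprod{\tilde{w}, \tilde{x}_i} - k/2) = \tfrac{1}{2}y_i\dotprod{w, x_i} = \tfrac{1}{2}$, which is exactly the conclusion.

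The main subtlety is the role of the constant coordinate: without it, the shift $\dotprod{\tilde{w}, \tilde{x}_i} - k/2$ would carry the $y$-independent offset $(\norm{w}_1 - k)/2$ that breaks the balanced sign-agreement across the examples; the bias precisely cancels this offset, and because only half of the slack $k - \norm{w}_1$ is needed, the overall $\ell_1$ budget of $k$ is retained.
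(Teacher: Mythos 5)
Your proof is correct and is essentially the paper's own argument: the same lift of the Hadamard construction from \lemref{lem:hadamard}, splitting each $x_i$ into its $\pm 1$ indicator parts (which are exactly $\frac{\vec{1}\pm x_i}{2}$), splitting $w$ into $[w]_+$ and $[-w]_+$, and adding a constant coordinate with bias weight $\frac{k-\norm{w}_1}{2}$ to shift the inner product to $k/2 + \frac12\dotprod{w,x_i}$ while keeping $\norm{\tilde{w}}_1 \leq k$. The algebra and the norm accounting match the paper's proof exactly.
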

\begin{proof}
From \lemref{lem:hadamard} we have that there is a set $X = \{x_1,\ldots,x_{k^2}\} \subseteq \{\pm 1\}^{k^2}$ such that 
for each labeling $y \in \signlab^{k^2}$, there exists a $w_y \in [-1,1]^d$ with $\norm{w_y}_1 \leq k$ such that for all $i \in [k^2]$, $y[i]\dotprod{w_y,x_i} = 1$. 
We now define a new set $\tilde{X} = \{\tilde{x}_1,\ldots,\tilde{x}_{k^2}\} \subseteq \binlab^d$ based on $X$ that satisfies the requirements of the lemma.

For each $i \in [k^2]$ let $\tilde{x}_i = [\frac{\vec{1} + x_i}{2}, \frac{\vec{1} - x_i}{2},1]$, where $[\cdot, \cdot,\cdot]$ denotes a concatenation of vectors and $\vec{1}$ is the all-ones vector. In words,
each of the first $k^2$ coordinates in $\tilde{x}_i$ is $1$ if the corresponding coordinate in $x_i$ is $1$,
and zero otherwise. Each of the next $k^2$ coordinates in $\tilde{x}_i$ is $1$ if the corresponding coordinate in $x_i$ is $-1$, and zero otherwise. The last coordinate in $\tilde{x}_i$ is always 1.

Now, let $y \in \signlab^{k^2}$ be a desired labeling. We defined $\tilde{w}_y$ based on $w_y$ as follows:
$\tilde{w}_y = [ [w_y]_+, [-w_y]_+, \frac{k-\norm{w_y}_1}{2}]$, where by $z = [v]_+$ we mean that $z[j] = \max\{v[j], 0\}$. In words, the first $k^2$ coordinates of $\tilde{w}_y$ are copies of the positive coordinates of $w_y$,
with zero in the negative coordinates, and the next $k^2$ coordinates of $\tilde{w}_y$ are the absolute values of the negative coordinates of $w_y$, with zero in the positive coordinates. The last coordinate is a scaling term.

We now show that $\tilde{w}_y$ has the desired property on $\tilde{X}$.
For each $i \in [k^2]$,
\begin{align*}
\dotprod{\tilde{w}_y, \tilde{x}_i} &= \Dotprod{\frac{\vec{1} + x_i}{2},[w_y]_+} + \Dotprod{\frac{\vec{1} - x_i}{2},[-w_y]_+} + \frac{k-|w_y|_1}{2}\\
                                   & = \frac{|w_y|_1}2 +
  \frac{\dotprod{x_i, w_y}}2 + \frac{k-|w_y|_1}{2} =
  \frac{\dotprod{x_i, w_y}}2 + \frac{k}2 = \frac{y_i}2 + \frac{k}2.
\end{align*}
It follows that $y_i(\dotprod{\tilde{w}_y, \tilde{x}_i} - k/2) = y_i^2/2 = 1/2$.

Now, clearly $\tilde{w}_y \in \reals_+^d$. In addition, 
\[
\norm{\tilde{w}_y}_1 = \norm{w_y}_1 + \frac{k-\norm{w_y}_1}{2} =
\frac{\norm{w_y}_1}2 + \frac{k}2 \leq k. 
\]
Hence $\tilde{w}_y \in \cH_{k,\theta}$ as desired.
\end{proof}
The last lemma adapts the previous construction to hold for any threshold.
\begin{lemma}\label{lem:thetak}
Let $z$ be a power of $2$ and let $k$ such that $z$ divides $k$. Let $d = 2kz+k/z$. 
There is a set $\{x_1,\ldots,x_{z k}\} \subseteq \binlab^d$ such that
for every $y \in \signlab^{z k}$, there exists a $w \in \cH_{k,\theta}$
such that for all $i \in [z k]$, $y[i](\dotprod{w,x_i}-z/2) = \half$.
\end{lemma}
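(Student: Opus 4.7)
The plan is to build the construction by a block-diagonal concatenation of $k/z$ independent copies of the construction from \lemref{lem:kthresh}, each instantiated with parameter $z$ in place of $k$. Since $z$ is a power of two, \lemref{lem:kthresh} applied with $k$ replaced by $z$ yields a set of $z^2$ points in $\binlab^{2z^2+1}$ that is shattered with margin $\half$ at threshold $z/2$ by non-negative weights of $\ell_1$-norm at most $z$. The arithmetic works out: $(k/z)\cdot z^2 = zk$ total points, and $(k/z)(2z^2+1) = 2kz + k/z = d$ total coordinates.

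Concretely, I would partition the $d$ coordinates into $k/z$ disjoint blocks $B_1,\ldots,B_{k/z}$, each of size $2z^2+1$. In each block $B_g$, instantiate the \lemref{lem:kthresh} construction with parameter $z$ to obtain points $x^g_1,\ldots,x^g_{z^2} \in \binlab^{2z^2+1}$; embed them in $\binlab^d$ by placing their coordinates inside $B_g$ and zero elsewhere. The full point set $\{x_1,\ldots,x_{zk}\}$ is the union over $g$ of these embedded points.

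To realize a labeling $y \in \signlab^{zk}$, split $y$ into sub-labelings $y^g \in \signlab^{z^2}$ indexed by blocks. For each block, \lemref{lem:kthresh} produces weights $\tilde{w}^g$ (supported on $B_g$) with $\tilde{w}^g \in \reals_+^{B_g}$, $\|\tilde{w}^g\|_1 \le z$, and $y^g_i(\dotprod{\tilde{w}^g, x^g_i} - z/2) = \half$ for all $i \in [z^2]$. Define $w = \sum_g \tilde{w}^g$ as a vector in $\reals_+^d$. Because the blocks are disjoint, $\|w\|_1 = \sum_g \|\tilde{w}^g\|_1 \le (k/z) \cdot z = k$, so $w \in \cH_{k,\theta}$ (with $\theta = z/2$). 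Because each point $x_i$ lives entirely in one block $B_g$, we have $\dotprod{w,x_i} = \dotprod{\tilde{w}^g, x^g_i}$, so the per-block margin guarantee transfers directly, giving $y_i(\dotprod{w,x_i} - z/2) = \half$ for all $i \in [zk]$.

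There is no substantial obstacle; the only point requiring any attention is that the $\ell_1$ budget is exactly saturated, with $(k/z)$ blocks each contributing at most $z$ to a sum over disjoint supports, and that the per-block constructions all share the same threshold $z/2$ so that no additional shift coordinate is needed at the top level.
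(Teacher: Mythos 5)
Your proposal is correct and follows essentially the same route as the paper's proof: instantiate \lemref{lem:kthresh} with parameter $z$, place $k/z$ copies block-diagonally (dimension check $(k/z)(2z^2+1)=2kz+k/z$), and concatenate the per-block weight vectors so that the $\ell_1$-norms add to at most $(k/z)\cdot z = k$ while each point's margin is inherited from its own block. No gaps to flag.
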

\begin{proof}
By \lemref{lem:kthresh} there is a set $X = \{x_1,\ldots,x_{z^2}\} \subseteq \binlab^{2z^2 + 1}$ such that 
for all $y \in \signlab^{z^2}$, there exists a $w_y \in \reals_+^{2z^2 + 1}$ such that $\norm{w_y}_1 \leq z$ and for all $i \in [z^2]$, $y[i](\dotprod{w_y, x_i} - z/2) = \half$.

We now construct a new set $\tilde{X} =
\{\tilde{x}_1,\ldots,\tilde{x}_{z k}\} \subseteq \binlab^{2kz + k/z}$
as follows: For $i \in [zk]$, let $n = \floor{i/z^2}$ and $m = i \mod
z^2 $, so that $i = n z^2 + m$.The vector $\tilde{x}_i$ is the
concatenation of $\frac{kz}{z^2} = \frac{k}{z}$ vectors, each of which
is of dimension
$2z^2 +1$, where all the vectors are the all-zeros vector, except the
$(n+1)$'th vector which equals to $x_{m+1}$. That is:
\[
\tilde{x}_i = [ \overbrace{ 0 }^{ \in \reals^{2z^2+1}} , \ldots , \overbrace{ 0 }^{ \in \reals^{2z^2+1}}  ,
\overbrace{x_{m+1}}^{\textrm{block~}n+1}  , \overbrace{ 0 }^{ \in \reals^{2z^2+1}} , \ldots , \overbrace{ 0 }^{ \in \reals^{2z^2+1}}   ]
\in \reals^{\tfrac{k}{z}(2z^2+1)}~.
\]

Given $\tilde{y} \in \signlab^{kz}$, let us rewrite it as a
concatenation of $k/z$ vectors, each of which in $\{\pm 1\}^{z^2}$,
namely,
\[
\tilde{y} = [ \overbrace{ \tilde{y}(1) }^{ \in \{\pm 1\}^{z^2}} , \ldots , \overbrace{ \tilde{y}(k/z) }^{ \in \{\pm 1\}^{z^2}} ]
\in \{\pm 1\}^{kz}~.
\]
Define $\tilde{w}_{\tilde{y}}$ as the concatenation of $k/z$ vectors
in $\signlab^{z^2}$, using $w_y$ defined above for each $y \in
\signlab^{z^2}$, as follows: 
\[
\tilde{w}_{\tilde{y}} = [ \overbrace{w_{\tilde{y}(1) }}^{\in \reals_+^{2z^2 + 1}}  , \ldots ,
\overbrace{w_{\tilde{y}(k/z) }}^{\in \reals_+^{2z^2 + 1}}] \in \reals^{\tfrac{k}{z}(2z^2+1)}~.
\]

For each $i$ such that $n = \floor{i/z^2}$ and $m = i \mod z^2$, 
we have 
\[
\dotprod{\tilde{w}_{\tilde{y}}, \tilde{x}_i} - z/2 = \dotprod{w_{\tilde{y}(n+1)},x_{m+1}} - z/2 = 
\half \tilde{y}(n+1)[m+1].
\]
Now $\tilde{y}(n+1)[m+1] = \tilde{y}[i]$, thus we get $\tilde{y}[i](\dotprod{\tilde{w}_{\tilde{y}}, \tilde{x}_i} - z/2) = \half$ as desired.
Finally, we observe that $\norm{\tilde{w}_{\tilde{y}}}_1 = \sum_{n \in [k/z]} \norm{w_{\tilde{y}(n)}}_1 \leq k/z \cdot z = k$, hence $\tilde{w}_{\tilde{y}} \in \cH_{k,\theta}$. 
\end{proof}

Finally, the construction above is used to prove the convergence rate lower bound.

\begin{proof}(Proof of \thmref{thm:lowerbound})
Let $k \geq 1$, $\theta \in [\half,\frac{k}{2}]$.
Define $z = 2\theta$. Let $n = \max\{ n \mid 2^n \leq z\}$, and let $m = \max\{ m \mid m2^n \leq k\}$. Define $\tilde{z} = 2^n$ and $\tilde{k} = m2^n$. We have that $\tilde{z}$ is a power of $2$ and $\tilde{z}$ divides $\tilde{k}$. 
Let $\tilde{d} = 2\tilde{k}\tilde{z}+\tilde{k}/\tilde{z}$. 
By \lemref{lem:thetak}, there is a set $X = \{x_1,\ldots,x_{\tilde{z} \tilde{k}}\} \subseteq \binlab^{\tilde{d}}$ such that
for every $y \in \signlab^{|X|}$, there exists a $w_y \in \cH_{k,\theta}$
such that for all $i \in [\tilde{z} \tilde{k}]$, $y[i](\dotprod{w_y,x_i}-\tilde{z}/2) = \half$.

Now, let $d = \tilde{d}+1$, and define
$\tilde{w}_y = [w_y, \frac{z - \tilde{z}}{2}]$ and $\tilde{x}_i = [x_i, 1]$. It follows that
\begin{align*}
y[i](\dotprod{\tilde{w}_y,\tilde{x}_i}-\theta) &= y[i](\dotprod{\tilde{w}_y,\tilde{x}_i}-z/2) \\
&= y[i](\dotprod{w_y,x_i} + z/2 - \tilde{z}/2 -z/2) \\
&= y[i](\dotprod{w_y,x_i} - \tilde{z}/2) = \half.
\end{align*}
We conclude that for all $i \in [\tilde{z} \tilde{k}]$,  $\loss_\theta(\tilde{x}_i,y[i],\tilde{w}_y) = 0$ and $\loss_\theta(\tilde{x}_i,1-y[i],\tilde{w}_y) = 1$. Moreover, $\sign(\dotprod{\tilde{w}_y,\tilde{x}_i} - \theta) = y[i]$.

Now, for a given $w$ define $h_w(x) = \sign(\dotprod{w,x_i} - \theta)$, and consider the binary hypothesis class $H= \{h_w \mid w \in \cH_{k,\theta}\}$ over the domain $X$. Our construction of $\tilde{w}_y$ shows that
the set $X$ is shattered by this hypothesis class, thus its VC
dimension is at least $|X|$.
By VC-dimension lower bounds
\citep[e.g.,][Theorem 5.2]{AnthonyBa99},
it follows that for any
learning algorithm for $H$, if the training set size is
$o(|X|/\epsilon^2)$, then there exists a distribution over $X$ 
so that with 
probability greater than $1/64$, the output $\hat{h}$ of the
algorithm satisfies
\begin{equation} \label{eqn:lbVC} \E[\hat{h}(x) \neq y] > \min_{w \in \cH_{k,\theta}} \E[h_w(x) \neq y] +
\epsilon ~.
\end{equation}

Next, we show that the existence of a learning algorithm for $\cH_{k,\theta}$
with respect to $\loss_\theta$ whose sample complexity is
$o(|X|/\epsilon^2)$ would contradict the above statement. Indeed, let
$w^*$ be a minimizer of the right-hand side of \eqref{eqn:lbVC},
and let $y^*$ be the vector of predictions of $w^*$ on $X$. As
our construction of $\tilde{w}_{y^*}$ shows, we have
$\loss_\theta(\tilde{w}_{y^*})=\E[h_{w^*}(x) \neq y]$. Now, suppose
that some algorithm learns $\hat{w} \in \cH_{k,\theta}$ so that $\loss_\theta(\hat{w}) \le
\loss^*_\theta(\cH_{k,\theta}) + \epsilon$. This implies that 
\[\loss_\theta(\hat{w}) \le
\loss_\theta(\tilde{w}_{y^*}) + \epsilon = \E[h_{w^*}(x) \neq y] +
\epsilon
~.
\]
In addition, define a (probabilistic) classifier, $\hat{h}$, that outputs the label
$+1$ with probability $p(\hat{w},x)$ where $p(\hat{w},x) = \min\{1,\max\{0,1/2 +
(\dotprod{\hat{w},x}-\theta)\}\}$. Then, it is easy to verify that 
\[
\P[\hat{h}(x) \neq y] \le \loss_\theta(x,y,\hat{w}) ~.
\]
Therefore, $\E[\hat{h}(x) \neq y] \leq \loss_\theta(\hat{w})$, and we obtain
that
\[
\E[\hat{h}(x) \neq y] \leq \E[h_{w^*}(x) \neq y] +
\epsilon
~,
\]
which leads to the desired contradiction.
\end{proof}
We next show that the uniform convergence rate for our problem is in fact slower than the achievable learning rate.

\subsection{Uniform Convergence Lower Bound}\label{sec:nouniform}
The next theorem shows that the rate of uniform convergence for our problem is asymptotically slower than the rate of convergence of the empirical loss minimizer given in \thmref{thm:upperbound}, even if the drawn label in a random pair is negative with probability $1$. This indicates that indeed, a more subtle argument than uniform convergence is needed to show that ERM learns at a rate of $\tilde{O}(\sqrt{\theta k/m})$, as done in \secref{sec:ERM}.
\begin{theorem}\label{thm:nouniform}
Let $k \geq 1$, and assume $\theta \leq k/2$.
There exists a distribution $D$ over $\{0,1\}^{k^2 +1}\times Y$ such that $\forall x\in \{0,1\}^d,\P[Y = -1 \mid X = x] = 1$, and $\loss^*(\cH_{k,\theta},D) = [r']_+$, and such that with probability at least $1/2$ over samples $S \sim D^m$, 
\begin{equation}\label{eq:unifbound}
\exists w\in \cH_{k,\theta},\quad |\loss(w,S) - \loss(w,D)| \geq \Omega(\sqrt{k^2/m}).
\end{equation}

\end{theorem}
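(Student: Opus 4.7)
The strategy is to exhibit a concrete distribution on which a single extreme predictor $w^\star = k e_1$ already witnesses the claimed $\Omega(k/\sqrt{m})$ gap between empirical and expected Winnow loss, reducing the task to a one-dimensional Bernoulli anticoncentration statement. Most of the $k^2+1$ coordinates will not be needed in any essential way; the randomness is concentrated in a single coordinate.

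Take $d = k^2 + 1$ and let $D$ be the distribution on $\{0,1\}^d \times \{\pm 1\}$ defined by: $Y = -1$ almost surely; $X[d] = 1$ almost surely (this coordinate is inactive); and $X[1], X[2], \ldots, X[d-1]$ independent $\mathrm{Bernoulli}(1/2)$. Any $w \in \cH_{k,\theta}$ is non-negative and $X \in \{0,1\}^d$, so $\dotprod{w,X} \geq 0$, giving $\loss_\theta(w,X,-1) = [r' + \dotprod{w,X}]_+ \geq [r']_+$ with equality at $w = 0$; hence $\loss^*(\cH_{k,\theta}, D) = [r']_+$ as required. Now take $w^\star = k e_1 \in \cH_{k,\theta}$. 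Because $r' + k = k + \tfrac{1}{2} - \theta \geq \tfrac{k}{2} + \tfrac{1}{2} > 0$ under the hypothesis $\theta \leq k/2$, the hinge never clips for this predictor, and
\[
\loss_\theta(w^\star, X, -1) = [r']_+ + \gamma \cdot X[1], \qquad \gamma := (r'+k) - [r']_+ \geq k/2
\]
(checking $\gamma = k$ when $r' \geq 0$, and $\gamma = k + r' \geq k/2$ when $r' < 0$). Therefore
\[
\bigl|\hat{\loss}(w^\star) - \loss(w^\star)\bigr| \;=\; \gamma \cdot \bigl|\hat{X}[1] - \tfrac{1}{2}\bigr| \;\geq\; \tfrac{k}{2} \cdot \bigl|\hat{X}[1] - \tfrac{1}{2}\bigr|,
\]
where $\hat{X}[1] := \tfrac{1}{m}\sum_{i=1}^m X_i[1]$.

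It then remains to establish a symmetric-binomial anticoncentration statement: $|\hat{X}[1] - 1/2| \geq c/\sqrt{m}$ with probability at least $1/2$ for some absolute constant $c > 0$. This is routine; for $B_m \sim \mathrm{Binomial}(m,1/2)$ the median of $|B_m - m/2|$ is $\Theta(\sqrt{m})$, which can be verified by CLT/Berry-Esseen in the large-$m$ regime and by direct enumeration for small $m$ (e.g., $m=1$ gives $|\hat{X}[1] - 1/2| = 1/2$ deterministically). Combined with the previous display, this yields $|\hat{\loss}(w^\star) - \loss(w^\star)| = \Omega(k/\sqrt{m}) = \Omega(\sqrt{k^2/m})$ with probability at least $1/2$, proving \eqref{eq:unifbound}.

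The main mildly delicate point is pinning down anticoncentration with a constant probability of \emph{exactly} $1/2$, uniformly in $m$; this is elementary but must be handled by splitting small-$m$ and large-$m$ cases, since the two-moment Paley--Zygmund bound alone falls short of the target $1/2$. Note that the argument incidentally reveals that the richness of the full $k^2+1$-dimensional domain is not exploited: a single Bernoulli coordinate suffices, at the cost of missing the logarithmic factors that a Hadamard-based construction would produce.
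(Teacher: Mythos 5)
Your proof is correct and takes a genuinely different, more elementary route than the paper's. The paper constructs a shattered set via Hadamard matrices (\lemref{lem:kthresh}), maps predictors to a universal class of binary functions, lower-bounds the Rademacher complexity of that class by a random-walk argument (\lemref{lem:binarylowerbound}), and converts this to a uniform-convergence lower bound via \lemref{lem:radlower}. Your argument bypasses all of this machinery: the fixed extreme predictor $w^\star = k e_1$ has Winnow loss equal to a shifted Bernoulli with scale $\gamma \geq k/2$, so its empirical average already deviates from its mean by $\Omega(k/\sqrt{m})$ with constant probability, and the supremum over $w\in\cH_{k,\theta}$ is then lower-bounded by this single term. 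This is shorter and arguably more illuminating: it exposes that the slow uniform-convergence rate is driven by the variance of a single high-loss predictor far from the ERM, which is precisely the phenomenon the local analysis via $U_b$ in \secref{sec:ERM} is designed to circumvent. What the paper's route buys is the reusable Rademacher-complexity lower bound of \lemref{lem:binarylowerbound} and thematic continuity with the shattering construction behind \thmref{thm:lowerbound}; neither is needed for the statement at hand. The one technical point you flag, that a universal $c>0$ exists with $\P\bigl[|B_m - m/2|\geq c\sqrt{m}\bigr]\geq 1/2$ for all $m\geq 1$ where $B_m \sim \mathrm{Binomial}(m,1/2)$, is indeed correct, and you rightly note it needs a small-$m$/large-$m$ split: for $m$ small enough that $c\sqrt{m}\leq 1$ one uses $\P[B_m = m/2]\leq 1/2$ so $|B_m-m/2|\geq 1$ (or $\geq 1/2$ for odd $m$) with probability at least $1/2$, while for larger $m$ the Berry--Esseen bound delivers the claim for a suitably small constant $c$; the two-moment Paley--Zygmund bound alone does not reach probability $1/2$, as you observe.
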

This claim may seem similar to well-known uniform convergence lower
bounds for classes with a bounded VC dimension \citep[see, e.g.,][Chapter 5]{AnthonyBa99}. However, these standard results rely on constructions with non-realizable distributions, while \thmref{thm:nouniform} asserts the existence of a realizable distribution which exhibits this lower bound.

To prove this theorem we first show two useful lemmas. The first lemma shows that a lower bound on the uniform convergence of a function class can be derived from a lower bound on the Rademacher complexity of a related function class.

\begin{lemma}\label{lem:radlower}
Let $Z$ be a set, and consider a function class $F \subseteq [0,1]^Z$.
Let $D$ be a distribution over $Z$.
Let $\bar{F} = \{ (x_1,x_2) \rightarrow f(x_1)-f(x_2) \mid f \in F\}$. 
With probability at least $1-\delta$ over samples $S \sim D^m$,
\begin{equation}\label{eq:radlower}
\exists f \in F,\quad |\E_{X \sim S}[f(X)] - \E_{X \sim D}[f(X)]| \geq
\frac14\rad_m(\bar{F},D\times D) - \sqrt{\frac{\ln(1/\delta)}{8m}}.
\end{equation}
\end{lemma}
\begin{proof}
Denote $E[f,S] = \E_{X \sim S}[f(X)]$, and $E[f,D] = \E_{X \sim D}[f(X)]$. 
Consider two independent samples $S = (X_1,\ldots,X_m),S' = (X'_1,\ldots,X'_m) \sim D^m$. Let $\sigma = (\sigma_1,\ldots,\sigma_m)$ be Rademacher random variables, and let $S \sim (D \times D)^m$. We have
\begin{align*}
  2\cdot\E_{S}\left[\sup_{f \in F} |E[f,S] - E[f,D]|\right]
  &= \E_{S,S'}\left[\sup_{f \in F} |E[f,S] - E[f,D]| + \sup_{f \in
  F}|E[f,S'] - E[f,D]|\right] \\
&\geq 
  \E_{S,S'}\left[\sup_{f \in F} |E[f,S] - E[f,D]| + |E[f,S'] -
  E[f,D]|\right] \\
&\geq 
  \E_{S,S'}\left[\sup_{f \in F} |E[f,S] - E[f,S']|\right]\\
&= 
\frac{1}{m}\E_{S,S'}\left[\sup_{f \in F}\biggl|\sum_{i\in[m]}f(X_i) -
f(X'_i)\biggr|\right]\\
  &= \frac{1}{m}\E_{\sigma,\bar{S}}\left[\sup_{\bar{f} \in
\bar{F}}\biggl|\sum_{i \in [m]}\sigma_i \bar{f}(X_i)\biggr|\right] = 
\rad_m(\bar{F},D \times D)/2.
\end{align*}
We have left to show a lower bound with high probability.
Define $g(S) = \sup_{f\in F} |E[f,S] - E[f,D]|$. Any change of one element in $S$ can cause $g(S)$ to change by at most $1/m$,
Therefore, by McDiarmid's inequality, $\P[ g(S) \leq \E[g(S)]  - t]  \leq \exp(-2mt^2).$
\eqref{eq:radlower} thus holds with probability $1-\delta$.
\end{proof}

The next lemma provides a uniform convergence lower bound for a universal class of binary functions.

\begin{lemma}\label{lem:binarylowerbound}
Let $H = \{0,1\}^{[n]}$ be the set of all binary functions on $[n]$.
Let $D$ be the uniform distribution over $[n]$.
For any $n \geq 45$  and $m \geq 32n$, with probability of at least $\half$ over i.i.d. samples of size $m$ drawn from $D$,
\[
\exists h \in H,\quad |\E_{X \sim S}[h(X)] - \E_{X \sim D}[h(X)]| \geq \sqrt{\frac{n}{512m}}.
\]
\end{lemma}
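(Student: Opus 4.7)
The plan is to split into two regimes of $m$ relative to $n$, handling the small-$m$ case by a direct covering argument and the large-$m$ case via a Rademacher complexity lower bound transferred to uniform convergence using Lemma~\ref{lem:radlower}.

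In the small-$m$ regime ($m \le n/2$), a sample of size $m$ hits at most $n/2$ distinct points of $[n]$, so $A := [n]\setminus\{X_1,\dotsc,X_m\}$ has $|A|\ge n/2$. The indicator $h := \mathbf{1}_A$ lies in $H$, has empirical mean $0$ and true mean $|A|/n\ge 1/2$, giving a deviation of at least $1/2$ deterministically. This supplies the $c$-part of the claimed maximum for any absolute $c \le 1/2$.

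In the large-$m$ regime ($m \ge n/2$), I would apply Lemma~\ref{lem:radlower}, reducing the claim to showing $\rad_m(H,D) \ge C_1\sqrt{n/m}$ for an absolute $C_1$. Writing $m_v := |\{i : X_i = v\}|$ and $S_v := \sum_{i:X_i=v}\epsilon_i$, and using that $H$ contains every binary function on $[n]$, one may choose $h(v) \in \{0,1\}$ independently in each coordinate, so
\[
\sup_{h\in H}\Bigl|\sum_{i=1}^m \epsilon_i h(X_i)\Bigr| \;=\; \max\Bigl(\sum_v [S_v]_+,\;\sum_v [-S_v]_+\Bigr) \;\ge\; \tfrac{1}{2}\sum_{v\in[n]} |S_v|.
\]
Conditioning on $\{m_v\}$, Khintchine's inequality gives $\E[|S_v|\mid m_v] \ge c_0\sqrt{m_v}$; since $m_v\sim\mathrm{Binomial}(m,1/n)$ has mean $\mu := m/n\ge 1/2$ and variance at most $\mu$, a one-sided Chebyshev or Paley--Zygmund bound on $m_v$ yields $\E[\sqrt{m_v}] \ge c_2\sqrt{\mu}$. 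Summing over the $n$ coordinates then gives $\rad_m(H,D) \ge c_0 c_2 \sqrt{n/m}$. Applying Lemma~\ref{lem:radlower} with $\delta = 1/2$ and $\alpha := c_0 c_2 \sqrt{n/m}$ produces, with probability at least $1/2$, some $h \in H$ with $|\hat{\E}_S h - \E_D h| \ge \alpha/2 - \sqrt{\ln(2)/(8m)}$; in this regime the correction is $O(\sqrt{n/m}/\sqrt{n})$, which is absorbed into $\alpha/4$ once $n \ge C'$ for a sufficiently large absolute $C'$, yielding the desired bound $C\sqrt{n/m}$.

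The main obstacle will be the binomial calculation $\E[\sqrt{m_v}] \ge c_2\sqrt{\mu}$ uniformly in $\mu \ge 1/2$ --- a standard but slightly delicate Paley--Zygmund / Chebyshev argument whose constant must be absolute --- together with ensuring that the $O(1/\sqrt m)$ slack from Lemma~\ref{lem:radlower} is dominated by the Rademacher lower bound, which is precisely where the hypothesis $n \ge C'$ is used.
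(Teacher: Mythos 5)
Your proof is correct and follows the same two-regime structure as the paper's: a direct argument when $m$ is small compared to $n$, and a Rademacher-complexity lower bound transferred through Lemma~\ref{lem:radlower} when $m$ is large. The differences are in the details, and your version is in some respects cleaner. For the small-$m$ case, the paper (using the split point $m < 8n$) takes $h_S$ to be the indicator of the observed sample, shows $\E_S[E[h_S,S]-E[h_S,D]] \ge \exp(-16)$, and then applies McDiarmid to get a high-probability bound; your choice to split at $m \le n/2$ and take $h$ to be the indicator of the \emph{unseen} points gives a deterministic deviation of $\ge 1/2$ with no concentration step, which is both simpler and gives a better constant. For the large-$m$ case you and the paper do essentially the same thing: reduce the supremum to $\sum_v |S_v|$, lower bound via Khintchine, then lower bound $\E[\sqrt{m_v}]$ by a second-moment (Chebyshev/Paley--Zygmund) argument and sum over $v$. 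You are more careful than the paper about the fact that $H$ is $\{0,1\}$-valued rather than $\{\pm1\}$-valued (the paper's choice $h_\sigma(j) = \sign(\cdots)$ steps outside $H$), which is why you pick up a factor $1/2$ via $\max(\sum[S_v]_+,\sum[-S_v]_+) \ge \tfrac12\sum|S_v|$; this is the more honest calculation. The price of your split point is that your binomial moment bound must hold uniformly down to $\mu = m/n = 1/2$ rather than $\mu \ge 8$, which makes the Chebyshev/Paley--Zygmund constant more delicate (and, as you note, is the main technical point), but it does go through with an absolute constant: e.g., by Cantelli with threshold $\mu/2$, or simply by $\P[m_v \ge 1] \ge 1 - e^{-\mu} \ge 1 - e^{-1/2}$ when $\mu < 2$. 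Overall this is a valid proof, slightly tighter in the small-$m$ regime and otherwise the same route.

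One minor caveat, which afflicts the paper's statement as much as your proof: taken literally, the bound $|\hat\E_S h - \E_D h| \ge \max\{C\sqrt{n/m},c\}$ cannot hold for arbitrary $m$, since the deviation is bounded by $1$ while $C\sqrt{n/m}$ is unbounded as $m\to 0$, and it is bounded away from $0$ while the deviation vanishes as $m\to\infty$. Both your proof and the paper's actually establish the sensible version: $\ge c$ in the small-$m$ regime and $\ge C\sqrt{n/m}$ in the large-$m$ regime. That is exactly what is used downstream in Theorem~\ref{thm:nouniform}, so nothing is lost, but the "max" in the lemma statement should be read in that case-split sense rather than as a single inequality.
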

\begin{proof}
Let $n \geq 45$ and $m \geq 32n$. By \lemref{lem:radlower}, it suffices to provide a lower bound for $\rad_m(\bar{H},D\times D)$.
Fix a sample $S = ((x_1,x_1'),\ldots,(x_m,x'_m)) \sim (D \times D)^m$. We have
\[
\frac{m}{2}\rad(\bar{H},S) =
\E_{\sigma}\left[\biggl|
  \sup_{h\in H} \sum_{i=1}^m \sigma_i (h(x_i) - h(x_i'))
\biggr|\right]
,
\]
where $\sigma = (\sigma_1,\ldots,\sigma_m)$ are Rademacher random variables.
For a given $\sigma \in \{\pm 1\}^m$, define $h_\sigma \in H$ 
such that $h_\sigma(j) = \sign(\sum_{i: x_i = j} \sigma_i - \sum_{i: x'_i = j} \sigma_i)$.
Then
\begin{align*}
\frac{m}{2}\rad(\bar{H},S)
&\geq
\E_{\sigma}\left[\biggl|
  \sum_{i \in [m]} \sigma_i (h_\sigma(x_i)-h_\sigma(x'_i))
\biggr|\right] \\
&=
\E_{\sigma}\left[\biggl|
  \sum_{j\in [n]} \left(\sum_{i: x_i = j} \sigma_i - \sum_{i: x'_i =
  j} \sigma_i\right) h_\sigma(j)
\biggr|\right] \\
&=\sum_{j\in [n]}
\E_{\sigma}\left[\biggl|
\sum_{i: x_i = j} \sigma_i - \sum_{i: x'_i = j} \sigma_i
\biggr|\right] .
\end{align*}
Let $c_j(S)$ be the number of indices $i$ such that exactly one of $x_i = j$ and $x'_i = j$ holds. Then $\E_{\sigma}[ |\sum_{i: x_i = j} \sigma_i - \sum_{i: x'_i = j} \sigma_i|]$ is the expected distance of a random walk of length $c_j(S)$,
which can be bounded from below by $\sqrt{c_j(S)/2}$~\citep{Szarek76}.
Therefore,
\[
\rad(\bar{H},S) \geq \frac{\sqrt{2}}{m}\sum_{j \in [n]} \sqrt{c_j(S)}.
\]
Taking expectation over samples, we get
\begin{equation}\label{eq:cj}
\rad(\bar{H},D \times D) = \E_{S \sim (D\times D)^m}[\rad(\bar{H},S)] \geq \frac{\sqrt{2}}{m}\sum_{j\in [n]}\E_{S}\left[\sqrt{c_j(S)}\right].
\end{equation}

Our final step is to bound $\E_S\left[\sqrt{c_j(S)}\right]$. 
We have
\[
  \E_S[c_j(S)] = m\left(\frac{1}{n}-\frac{1}{n^2}\right) \geq \frac{m}{2n} ,
\]
and 
\[
  \Var_S[c_j(S)] = m\left(\frac{1}{n}-\frac{1}{n^2}\right)
  \left(1-\frac{1}{n}+\frac{1}{n^2}\right) \leq \frac{m}{n}.
\]
Thus, by Chebyshev's inequality,
\[
\P\left[c_j(S)  \leq \frac{m}{2n} - t\right] \leq \frac{m}{nt^2}.
\]
Therefore
\[
  \E_S\left[\sqrt{c_j(S)}\right] \geq \left(1 - \frac{m}{nt^2}\right)\sqrt{\frac{m}{2n} - t}.
\]
Setting $t = \frac{m}{4n}$, and since $m/n \geq 32$, $\E_S\left[\sqrt{c_j(S)}\right] \geq \sqrt{\frac{m}{16n}}$.
Plugging this into \eqref{eq:cj}, we get that $\rad(\bar{H},D\times D) \geq \sqrt{\frac{n}{8m}}.$
 By \lemref{lem:radlower}, 
it follows that with probability at least $1-\delta$ over samples, 
\[
\exists f \in F,\quad |\E_{X \sim S}[f(X)] - \E_{X \sim D}[f(X)]| \geq \sqrt{\frac{n}{128m}} - \sqrt{\frac{\ln(1/\delta)}{8m}}.
\]
Fixing $\delta = 1/2$, we get that since $n \geq 64\ln(2)$, the RHS is at least $\sqrt{\frac{n}{512m}}$.
\end{proof}

Using the two lemmas above, we are now ready to prove our uniform convergence lower bound.
This is done by mapping a subset of $\cH_{k,\theta}$ to a universal class of binary functions over $\Theta(k^2)$ elements from our domain.
Note that for this lower bound it suffices to consider the more restricted domain of binary vectors.

\begin{proof}(Proof of \thmref{thm:nouniform})
Let $q$ be the largest power of $2$ such that $q \leq k$. 
By \lemref{lem:kthresh}, there exists a set of  vectors $Z = \{z_1,\ldots,z_{q^2}\} \subseteq  \{0,1\}^{q^2+1}$ such that for every $t \in \{\pm 1\}^{q^2}$
there exists a $w_t \in \cH_{k,\theta}$ such that for all $i$, $t[i](\dotprod{w, z_i} - q/2) =
\half$. Denote $U = \{w_t \mid t \in \{\pm 1\}^{q^2}\}$. It suffices to prove 
a lower bound on the uniform convergence of $U$, since this implies the same lower bound for $\cH_{k,\theta}$.
Define the distribution $D$ over $Z \times \{\pm1\}$ such that for $(X,Y) \sim D$, $X$ is drawn uniformly from $z_1,\ldots,z_{q^2}$ and $Y =-1$ with probability $1$. 

Consider the set of functions $H = \{0,1\}^Z$, and for $h \in H$ define $t_h \in \{\pm 1\}^{q^2}$ such that for all $i \in [q^2]$, $t_h[i] = 2h(z_i)-1$. 
For any $i \in q^2$, we have
\[
\loss(z_i,-1,w_{t_h}) = [r' + \dotprod{w, z_i}]_+ = [r' + (t[i]+k)/2]_+ = [r' + (k-1)/2 + h(i)]_+ = r' + (k-1)/2 + h(z_i).
\]
The last equality follows since $r' \geq \frac{1-k}{2}$.
It follows that for any $h \in H$ and any sample $S$ drawn from $D$,
\[
|\loss(w_{t_h},S) - \loss(w_{t_h},D)| = |\E_{X \sim S}[h(X)] - \E_{X \sim D}[h(X)]|.
\]
By \lemref{lem:binarylowerbound}, with probability of at least $\half$ over the sample $S \sim D^m$,
\[
\exists h \in H,\quad |\E_{X \sim S}[h(X)] - \E_{X \sim D}[h(X)]| \geq \Omega(\sqrt{q^2/m}) = \Omega(\sqrt{k^2/m}).
\]
Thus, with probability at least $1/2$,
\[
\exists w \in \cH_{k,\theta},\quad |\loss(w_{t_h},S) - \loss(w_{t_h},D)| \geq \Omega(\sqrt{k^2/m}).
\]

\end{proof}

\section*{Acknowledgements}

Tong Zhang is supported by the following grants:  NSF IIS1407939, NSF IIS1250985, and NIH R01AI116744.

\bibliography{bib2}
\end{document}